\newcommand{\scriptM}{\ensuremath{\mathcal{M}}}
\newcommand{\opnorm}[1]{\left|\!\left|\!\left|{#1}\right|\!\right|\!\right|}
\newcommand{\condind}{\ensuremath{\perp\!\!\!\perp}}
\newcommand{\Pa}{\ensuremath{\operatorname{Pa}}}
\newcommand{\E}{\ensuremath{\operatorname{E}}}
\newcommand{\tr}{\ensuremath{\operatorname{tr}}}
\newcommand{\score}{\ensuremath{\operatorname{score}}}
\newcommand{\Sigmahat}{\ensuremath{\widehat{\Sigma}}}
\newcommand{\Thetahat}{\ensuremath{\widehat{\Theta}}}
\newcommand{\Gamhat}{\ensuremath{\widehat{\Gamma}}}
\newcommand{\bhat}{\ensuremath{\widehat{b}}}
\newcommand{\scorehat}{\ensuremath{\widehat{\score}}}
\newcommand{\scriptU}{\ensuremath{\mathcal{U}}}
\newcommand{\Btil}{\ensuremath{\widetilde{B}}}
\newcommand{\scriptD}{\ensuremath{\mathcal{D}}}
\newcommand{\fhat}{\ensuremath{\widehat{f}}}
\newcommand{\mprob}{\ensuremath{\mathbb{P}}}
\newcommand{\ftil}{\ensuremath{\widetilde{f}}}
\newcommand{\scriptR}{\ensuremath{\mathcal{R}}}
\newcommand{\scriptC}{\ensuremath{\mathcal{C}}}
\newcommand{\Ghat}{\ensuremath{\widehat{G}}}
\newcommand{\atil}{\ensuremath{\widetilde{a}}}
\newcommand{\btil}{\ensuremath{\widetilde{b}}}
\newcommand{\Thetamin}{\ensuremath{\Theta_0^{\min}}}
\newtheorem{assumption}{Assumption}
\newlength{\widebarargwidth}
\newlength{\widebarargheight}
\newlength{\widebarargdepth}
\newenvironment{carlist}
 {\begin{list}{$\bullet$}
 {\setlength{\topsep}{0in} \setlength{\partopsep}{0in}
  \setlength{\parsep}{0in} \setlength{\itemsep}{\parskip}
  \setlength{\leftmargin}{0.07in} \setlength{\rightmargin}{0.08in}
  \setlength{\listparindent}{0in} \setlength{\labelwidth}{0.08in}
  \setlength{\labelsep}{0.1in} \setlength{\itemindent}{0in}}}
 {\end{list}}
\newcommand{\bcar}{\begin{carlist}}
\newcommand{\ecar}{\end{carlist}}
\long\def\@makecaption#1#2{
        \vskip 0.8ex
        \setbox\@tempboxa\hbox{\small {\bf #1:} #2}
        \parindent 1.5em  
        \dimen0=\hsize
        \advance\dimen0 by -3em
        \ifdim \wd\@tempboxa >\dimen0
                \hbox to \hsize{
                        \parindent 0em
                        \hfil 
                        \parbox{\dimen0}{\def\baselinestretch{0.96}\small
                                {\bf #1.} #2
                                } 
                        \hfil}
        \else \hbox to \hsize{\hfil \box\@tempboxa \hfil}
        \fi
        }
\long\def\comment#1{}
\def\@cite#1#2{[\if@tempswa #2 \fi #1]}
\long\def\barenote#1{
    \insert\footins{\footnotesize
    \interlinepenalty\interfootnotelinepenalty 
    \splittopskip\footnotesep
    \splitmaxdepth \dp\strutbox \floatingpenalty \@MM
    \hsize\columnwidth \@parboxrestore
    {\rule{\z@}{\footnotesep}\ignorespaces
      #1\strut}}}
\newcommand{\bit}{\begin{itemize}}
\newcommand{\eit}{\end{itemize}}
\newcommand{\ben}{\begin{enumerate}}
\newcommand{\een}{\end{enumerate}}
\newcommand{\bear}{\begin{eqnarray}}
\newcommand{\eear}{\end{eqnarray}}
\newcommand{\supp}{\ensuremath{\operatorname{supp}}}
\newcommand{\order}{{\mathcal{O}}}
\newcommand{\beq}{\begin{quotation}}
\newcommand{\enq}{\end{quotation}}
\newcommand{\estart}{\begin{equation}}
\newcommand{\eend}{\end{equation}}
\newcommand{\defn}{\ensuremath{:  =}}
\newcommand{\bec}{\begin{center}}
\newcommand{\enc}{\end{center}}
\newcommand{\beit}{\begin{itemize}}
\newcommand{\enit}{\end{itemize}}
\newcommand{\been}{\begin{enumerate}}
\newcommand{\enen}{\end{enumerate}}
\newcommand{\comsl}{\begin{slide}}
\newcommand{\comspor}{\begin{slide*}}
\newcommand{\comsld}[2]{\begin{slide}[#1,#2]}
\newcommand{\comspord}[2]{\begin{slide*}[#1,#2]}
\newcommand{\mendsl}{\end{slide}}
\newcommand{\mendspo}{\end{slide*}}
\newcommand{\real}{\ensuremath{{\mathbb{R}}}}
\DeclareMathOperator{\diag}{diag}
\DeclareMathOperator{\var}{var}
\DeclareMathOperator{\cov}{cov}
\begin{document}

\title{High-dimensional learning of linear causal networks via inverse covariance estimation}

\author{%
\name Po-Ling Loh \email ploh@berkeley.edu \\
\addr Department of Statistics\\
University of California\\
Berkeley, CA 94720, USA
\AND
\name Peter B\"{u}hlmann \email buhlmann@stat.math.ethz.ch\\
\addr Seminar f\"{u}r Statistik\\
ETH Z\"{u}rich\\
Switzerland
}

\editor{}

\maketitle

\begin{abstract}
	
We establish a new framework for statistical estimation of directed acyclic graphs (DAGs) when data are generated from a linear, possibly non-Gaussian structural equation model. Our framework consists of two parts: (1) inferring the moralized graph from the support of the inverse covariance matrix; and (2) selecting the best-scoring graph amongst DAGs that are consistent with the moralized graph. We show that when the error variances are known or estimated to close enough precision, the true DAG is the unique minimizer of the score computed using the reweighted squared $\ell_2$-loss. Our population-level results have implications for the identifiability of linear SEMs when the error covariances are specified up to a constant multiple. On the statistical side, we establish rigorous conditions for high-dimensional consistency of our two-part algorithm, defined in terms of a ``gap'' between the true DAG and the next best candidate. Finally, we demonstrate that dynamic programming may be used to select the optimal DAG in linear time when the treewidth of the moralized graph is bounded. 

\end{abstract}

\begin{keywords}
Causal inference, dynamic programming, identifiability, inverse covariance matrix estimation, linear structural equation models
\end{keywords}

\section{Introduction}

Causal networks arise naturally in a wide variety of application domains, including genetics, epidemiology, and time series analysis~\citep{HugEtal00, SteEtal12, AaEtal12}. The task of inferring the graph structure of a causal network from joint observations is a relevant but challenging problem. Whereas undirected graphical structures may be estimated via pairwise conditional independence testing, with worst-case time scaling as the square of the number of nodes, estimation methods for directed acyclic graphs (DAGs) first require learning an appropriate permutation order of the vertices, leading to computational complexity that scales exponentially in the graph size. Greedy algorithms present an attractive computationally efficient alternative, but such methods are not generally guaranteed to produce the correct graph~\citep{Chi02}. In contrast, exact methods for causal inference that search exhaustively over the entire DAG space may only be tractable for relatively small graphs~\citep{SilMyl06}.

\subsection{Restricted search space}

In practice, knowing prior information about the structure of the underlying DAG may lead to vast computational savings. For example, if a natural ordering of the nodes is known, inference may be performed by regressing each node upon its predecessors and selecting the best functional fit for each node. This yields an algorithm with runtime linear in the number of nodes and overall quadratic complexity. In the linear high-dimensional Gaussian setting, one could apply a version of the graphical Lasso, where the feasible set is restricted to matrices that are upper-triangular with respect to the known ordering~\citep{ShoMic10}. However, knowing the node order seems quite unrealistic in practice, except for certain special cases. If instead a conditional independence graph or superset of the skeleton is specified a priori, the number of required conditional independence tests may also be reduced dramatically. This appears to be a more reasonable assumption, and various authors have devised algorithms to compute the optimal DAG efficiently in settings where the input graph has bounded degree and/or bounded treewidth~\citep{PerEtal08, OrdSze12, KorPar13}.

Unfortunately, appropriate tools for inferring such superstructures are rather limited, and the usual method of using the graphical Lasso to estimate a conditional independence graph is rigorously justified only in the linear Gaussian setting~\citep{YuaLin07}. Recent results have established that a version of the graphical Lasso may also be used to learn a conditional independence graph for variables taking values in a discrete alphabet when the graph has bounded treewidth~\citep{LohWai12}, but results for more general distributions are absent from the literature. \cite{BuhEtal13} isolate sufficient conditions under which Lasso-based linear regression could be used to recover a conditional independence graph for general distributions, and use it as a prescreening step for nonparametric causal inference in additive noise models; however, it is unclear which non-Gaussian distributions satisfy such conditions.

\subsection{Our contributions}

We propose a new algorithmic strategy for inferring the DAG structure of a linear, potentially non-Gaussian structural equation model (SEM). Deviating slightly from the literature, we use the term \emph{non-Gaussian} to refer to the fact that the variables are not jointly Gaussian; however, we do \emph{not} require non-Gaussianity of all exogenous noise variables, as assumed by~\cite{ShiEtal11}. We proceed in two steps, where each step is of independent interest: First, we infer the moralized graph by estimating the inverse covariance of the joint distribution. The novelty is that we justify this approach for non-Gaussian linear SEMs. Second, we find the optimal causal network structure by searching over the space of DAGs that are consistent with the moralized graph and selecting the DAG that minimizes an appropriate score function. When the score function is decomposable and the moralized graph has bounded treewidth, the second step may be performed via dynamic programming in time linear in the number of nodes~\citep{OrdSze12}. Our algorithm is also applicable in a high-dimensional setting when the moralized graph is sparse, where we estimate the support of the inverse covariance matrix using a method such as the graphical Lasso~\citep{RavEtal11}. Our algorithmic framework is summarized in Algorithm~\ref{AlgDAG}:

\begin{algorithm}
	\caption{Framework for DAG estimation}
	\label{AlgDAG}
	\begin{algorithmic}[1]
		\State \textbf{Input:} Data samples $\{x_i\}_{i=1}^n$ from a linear SEM
		\vspace{.2cm}
		\State Obtain estimate $\Thetahat$ of inverse covariance matrix (e.g., using graphical Lasso)
		\State \hspace{\algorithmicindent} Construct moralized graph $\widehat{\scriptM}$ with edge set defined by $\supp(\Thetahat)$
		\State Compute scores for DAGs that are consistent with $\widehat{\scriptM}$ (e.g., using squared $\ell_2$-error)
		\State \hspace{\algorithmicindent} Find minimal-scoring $\Ghat$ (using dynamic programming when score is decomposable and $\widehat{\scriptM}$ has bounded treewidth)
		\vspace{.2cm}
		\State \textbf{Output:} Estimated DAG $\Ghat$
	\end{algorithmic}
\end{algorithm}

We prove the correctness of our graph estimation algorithm by deriving new results about the theory of linear SEMs. We present a novel result showing that for almost every choice of linear coefficients, the support of the inverse covariance matrix of the joint distribution is identical to the edge structure of the moralized graph. Although a similar relationship between the support of the inverse covariance matrix and the edge structure of an undirected conditional independence graph has long been established for multivariate Gaussian models~\citep{Lau96}, our result does not involve any assumptions of Gaussianity, and the proof technique is completely new.

Since we do not impose constraints on the error distribution of our SEM, standard parametric maximum likelihood methods are \emph{not} applicable to score and compare candidate DAGs. Consequently, we use the squared $\ell_2$-error to score DAGs, and prove that in the case of homoscedastic errors, the true DAG uniquely minimizes this score function. As a side corollary, we establish that the DAG structure of a linear SEM is identifiable whenever the additive errors are homoscedastic, which generalizes a recent result derived only for Gaussian variables~\citep{PetBuh13}. In addition, our result covers cases with Gaussian and non-Gaussian errors, whereas~\cite{ShiEtal11} require all errors to be non-Gaussian (see Section~\ref{SecIdentify}). A similar result is implicitly contained under some assumptions in~\cite{vanBuh13}, but we provide a more general statement and additionally quantify a regime where the errors may exhibit a certain degree of heteroscedasticity. Thus, when errors are not too heteroscedastic, the much more complicated ICA algorithm~\citep{ShiEtal06, ShiEtal11} may be replaced by a simple scoring method using squared $\ell_2$-loss.

On the statistical side, we show that our method produces consistent estimates of the true DAG by invoking results from high-dimensional statistics. We note that our theoretical results only require a condition on the gap between squared $\ell_2$-scores for various DAGs in the restricted search space and eigenvalue conditions on the true covariance matrix, which is much weaker than the restrictive beta-min condition from previous work~\citep{vanBuh13}. Furthermore, the size of the gap is \emph{not} required to scale linearly with the number of nodes in the graph, unlike similar conditions in~\cite{vanBuh13} and~\cite{PetBuh13}, leading to genuinely high-dimensional results. Although the precise size of the gap relies heavily on the structure of the true DAG, we include several examples giving intuition for when our condition could be expected to hold (see Sections~\ref{SecSmallDAGs} and~\ref{SecGap} below). Finally, since inverse covariance matrix estimation and computing scores based on linear regression are both easily modified to deal with systematically corrupted data~\citep{LohWai11a}, we show that our methods are also applicable for learning the DAG structure of a linear SEM when data are observed subject to corruptions such as missing data and additive noise.

The remainder of the paper is organized as follows: In Section~\ref{SecBackground}, we review the general theory of probabilistic graphical models and linear SEMs. Section~\ref{SecCIG} describes our results on the relationship between the inverse covariance matrix and conditional independence graph of a linear SEM. In Section~\ref{SecFit}, we discuss the use of the squared $\ell_2$-loss for scoring candidate DAGs. Section~\ref{SecStat} establishes results for the statistical consistency of our proposed inference algorithms and explores the gap condition for various graphs. Finally, Section~\ref{SecComputation} describes how dynamic programming may be used to identify the optimal DAG in linear time, when the moralized graph has bounded treewidth. Proofs of supporting results are contained in the Appendix.


\section{Background}
\label{SecBackground}

We begin by reviewing some basic background material and introducing notation for the graph estimation problems studied in this paper.

\subsection{Graphical models}
\label{SecGM}

In this section, we briefly review the theory of directed and undirected graphical models, also known as conditional independence graphs (CIGs). For a more in-depth exposition, see~\cite{Lau96} or~\cite{KolFri09} and the references cited therein.

\subsubsection{Undirected graphs}
\label{SecUGM}

Consider a probability distribution $q(x_1, \dots, x_p)$ and an undirected graph $G = (V,E)$, where $V = \{1, \dots, p\}$ and $E \subseteq V \times V$. We say that $G$ is a \emph{conditional independence graph} (CIG) for $q$ if the following \emph{Markov condition} holds: For all disjoint triples $(A, B, S) \subseteq V$ such that $S$ separates $A$ from $B$ in $G$, we have $X_A \condind X_B \mid X_S$. Here, $X_C \defn \{X_j: j \in C\}$ for any subset $C \subseteq V$. We also say that $G$ \emph{represents} the distribution $q$.

By the well-known Hammersley-Clifford theorem, if $q$ is a strictly positive distribution (i.e., $q(x_1, \dots, x_p) > 0$ for all $(x_1, \dots, x_p)$), then $G$ is a CIG for $q$ if and only if we may write
\begin{equation}
	\label{EqnUndFact}
	q(x_1, \dots, x_p) = \prod_{C \in \scriptC} \psi_C(x_C),
\end{equation}
for some potential functions $\{\psi_C: C \in \scriptC\}$ defined over the set of cliques $\scriptC$ of $G$. In particular, note that the complete graph on $p$ nodes is always a CIG for $q$, but CIGs with fewer edges often exist.

\subsubsection{Directed acyclic graphs (DAGs)}
\label{SecDAG}

Changing notation slightly, consider a \emph{directed} graph $G = (V,E)$, where we now distinguish between edges $(j,k)$ and $(k,j)$. We say that $G$ is a \emph{directed acyclic graph} (DAG) if there are no directed paths starting and ending at the same node. For each node $j \in V$, let $\Pa(j) \defn \{k \in V: (k,j) \in E\}$ denote the \emph{parent set} of $j$, where we sometimes write $\Pa_G(j)$ to emphasize the dependence on $G$. A DAG $G$ \emph{represents} a distribution $q(x_1, \dots, x_p)$ if $q$ factorizes as
\begin{equation}
	\label{EqnDAGFact}
	q(x_1, \dots, x_p) \; \propto \; \prod_{j=1}^p q(x_j \mid q_{\Pa(j)}).
\end{equation}
Finally, a permutation $\pi$ of the vertex set $V = \{1, \dots, p\}$ is a \emph{topological order} for $G$ if $\pi(j) < \pi(k)$ whenever $(j,k) \in E$. Such a topological order exists for any DAG, but it may not be unique. The factorization~\eqref{EqnDAGFact} implies that $X_j \condind X_{\nu(j)} \mid X_{\Pa(j)}$ for all $j$, where $\nu(j) \defn \{k \in V: (j,k) \not\in E \text{ and } k \not\in \Pa(j)\}$ is the set of all nondescendants of $j$ excluding its parents.

Given a DAG $G$, we may form the \emph{moralized graph} $\scriptM(G)$ by fully connecting all nodes within each parent set $\Pa(j)$ and dropping the orientations of directed edges. Note that moralization is a purely graph-theoretic operation that transforms a directed graph into an undirected graph. However, if the DAG $G$ represents a distribution $q$, then $\scriptM(G)$ is also a CIG for $q$. This is because each set $\{j\} \cup \Pa(j)$ forms a clique $C_j$ in $\scriptM(G)$, and we may define the potential functions $\psi_{C_j}(x_{C_j}) \defn q(x_j \mid q_{\Pa(j)})$ to obtain the factorization~\eqref{EqnUndFact} from the factorization~\eqref{EqnDAGFact}.

Finally, we define the \emph{skeleton} of a DAG $G$ to be the undirected graph formed by dropping orientations of edges in $G$. Note that the edge set of the skeleton is a subset of the edge set of the moralized graph, but the latter set is generally much larger. The skeleton is not in general a CIG.

\subsection{Linear structural equation models}
\label{SecSEM}

We now specialize to the framework of linear structural equation models.

We say that a random vector $X = (X_1, \dots, X_p) \in \real^p$ follows a \emph{linear structural equation model} (SEM) if
\begin{equation}
\label{EqnLinStruct}
	X = B^T X + \epsilon,
\end{equation}
where $B$ is a strictly upper triangular matrix known as the \emph{autoregression matrix}. We assume $\E[X] = \E[\epsilon] = 0$ and $\epsilon_j \condind (X_1, \dots, X_{j-1})$ for all $j$.

In particular, observe that the DAG $G$ with vertex set $V = \{1, \dots, p\}$ and edge set $E = \{(j,k): B_{jk} \neq 0\}$ represents the joint distribution $q$ on $X$. Indeed, equation~\eqref{EqnLinStruct} implies that
\begin{equation*}
	q(X_j \mid X_1, \dots, X_{j-1}) = q(X_j \mid X_{\Pa_G(j)}),
\end{equation*}
so we may factorize
\begin{equation*}
	q(X_1, \dots, X_p) = \prod_{j=1}^p q(X_j \mid X_1, \dots, X_{j-1}) = \prod_{j=1}^p q(X_j \mid X_{\Pa_G(j)}).
\end{equation*}
Given samples $\{X^i\}_{i=1}^n$, our goal is to infer the unknown matrix $B$, from which we may recover $G$ (or vice versa).


\section{Moralized graphs and inverse covariance matrices}
\label{SecCIG}

In this section, we describe our main result concerning inverse covariance matrices of linear SEMs. It generalizes a result for multivariate Gaussians, and states that the inverse covariance matrix of the joint distribution of a linear SEM reflects the structure of a conditional independence graph.

We begin by noting that
\begin{equation*}
	\E[X_j \mid X_1, \dots, X_{j-1}] = b_j^T X,
\end{equation*}
where $b_j$ is the $j^\text{th}$ column of $B$, and
\begin{equation*}
	b_j = \left(\Sigma_{j, 1:(j-1)} \left(\Sigma_{1:(j-1), 1:(j-1)}\right)^{-1}, 0, \dots, 0\right)^T.
\end{equation*}
Here, $\Sigma \defn \cov[X]$. We call $b_j^T X$ the \emph{best linear predictor} for $X_j$ amongst linear combinations of $\{X_1, \dots, X_{j-1}\}$. Defining $\Omega \defn \cov[\epsilon]$ and $\Theta \defn \Sigma^{-1}$, we then have the following lemma, proved in Appendix~\ref{AppCovs}:
\begin{lemma}
	\label{LemCovs}
	The matrix of error covariances is diagonal: $\Omega = \diag(\sigma_1^2, \dots, \sigma_p^2)$ for some $\sigma_i > 0$. The entries of $\Theta$ are given by
	\begin{align}
		\label{EqnInvElts}
		\Theta_{jk} & = -\sigma_k^{-2} B_{jk} + \sum_{\ell > k} \sigma_\ell^{-2} B_{j \ell} B_{k \ell}, \qquad \forall j < k, \\
		\label{EqnInvDiag}
		\Theta_{jj} & = \sigma_j^{-2} + \sum_{\ell > j} \sigma_\ell^{-2} B_{j\ell}^2, \qquad \qquad \qquad \forall j.
	\end{align}
\end{lemma}

In particular, equation~\eqref{EqnInvElts} has an important implication for causal inference, which we state in the following theorem. Recalling the notation of Section~\ref{SecDAG}, the graph $\scriptM(G)$ denotes the moralized DAG.
\begin{theorem}
	\label{ThmInvDAG}
	Suppose $X$ is generated from the linear structural equation model~\eqref{EqnLinStruct}. Then $\Theta$ reflects the graph structure of the moralized DAG; i.e., for $j \neq k$, we have $\Theta_{jk} = 0$ if $(j,k)$ is not an edge in $\scriptM(G)$.
\end{theorem}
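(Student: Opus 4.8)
The plan is to read the conclusion directly off the explicit formula for $\Theta_{jk}$ supplied by Lemma~\ref{LemCovs}, after translating the graph-theoretic hypothesis ``$(j,k)$ is not an edge of $\scriptM(G)$'' into algebraic conditions on the entries of $B$. Since $\Theta$ is symmetric, it suffices to treat the case $j < k$, so I would begin by fixing such a pair.

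First I would unpack the definition of the moralized graph. Recall that $\scriptM(G)$ is obtained from $G$ by (i) retaining the skeleton, i.e.\ joining $j$ and $k$ whenever $(j,k)$ or $(k,j)$ is a directed edge of $G$, and (ii) marrying co-parents, i.e.\ joining $j$ and $k$ whenever they share a common child $\ell \in V$. Because $B$ is strictly upper triangular, for $j < k$ we automatically have $B_{kj} = 0$, so the only way $(j,k)$ can be a skeleton edge is $B_{jk} \neq 0$. Likewise, any common child $\ell$ of $j$ and $k$ must satisfy $B_{j\ell} \neq 0$ and $B_{k\ell} \neq 0$, and the latter forces $\ell > k$. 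Hence the hypothesis ``$(j,k)$ is not an edge of $\scriptM(G)$'' is equivalent to the two conditions $B_{jk} = 0$ and $B_{j\ell}B_{k\ell} = 0$ for every $\ell > k$.

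Next I would substitute these two conditions into equation~\eqref{EqnInvElts}. The first condition annihilates the leading term $-\sigma_k^{-2} B_{jk}$, and the second annihilates every summand $\sigma_\ell^{-2} B_{j\ell}B_{k\ell}$ in $\sum_{\ell > k}$, so that $\Theta_{jk} = 0$ as claimed. The conceptual crux is the observation that the two terms in the formula for $\Theta_{jk}$ correspond precisely to the two mechanisms that create an edge in the moralized graph: the $B_{jk}$ term encodes a direct parent--child edge, while the sum over $\ell > k$ encodes shared children, i.e.\ co-parents that get married.

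I do not expect a serious obstacle, since all the analytic work has been front-loaded into Lemma~\ref{LemCovs}. The only point demanding care is the bookkeeping in the second step---in particular, verifying that the summation range $\ell > k$ in~\eqref{EqnInvElts} coincides \emph{exactly} with the set of potential common children once strict upper-triangularity of $B$ is taken into account, so that no co-parent relationship is overlooked and no spurious term survives. Once this correspondence is pinned down, the conclusion is immediate.
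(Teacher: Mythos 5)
Your proposal is correct and follows essentially the same route as the paper's proof: translate the absence of an edge in $\scriptM(G)$ into the conditions $B_{jk}=0$ and $B_{j\ell}B_{k\ell}=0$ for all $\ell > k$, then read $\Theta_{jk}=0$ off equation~\eqref{EqnInvElts}. Your extra care in checking that any common child $\ell$ must satisfy $\ell > k$ (via strict upper-triangularity of $B$) is a point the paper treats implicitly, but the argument is the same.
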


\begin{proof}
Suppose $j \neq k$ and $(j,k)$ is not an edge in $\scriptM(G)$, and assume without loss of generality that $j < k$. Certainly, $(j,k) \not\in E$, implying that $B_{jk} = 0$. Furthermore, $j$ and $k$ cannot share a common child, or else $(j,k)$ would be an edge in $\scriptM(G)$. This implies that either $B_{j\ell} = 0$ or $B_{k\ell} = 0$ for each $\ell > k$. The desired result then follows from equation~\eqref{EqnInvElts}.
\end{proof}

In the results to follow, we will assume that the converse of Theorem~\ref{ThmInvDAG} holds, as well. This is stated in the following Assumption:
\begin{assumption}
	\label{AsInvDAG}
	Let $(B, \Omega)$ be the matrices of the underlying linear SEM. For every $j < k$, we have
	\begin{equation*}
	-\sigma_k^{-2} B_{jk} + \sum_{\ell > k} \sigma_\ell^{-2} B_{j\ell} B_{k\ell} = 0
	\end{equation*}
	only if $B_{jk} = 0$ and $B_{j\ell} B_{k\ell} = 0$ for all $\ell > k$.
\end{assumption}	
Combined with Theorem~\ref{ThmInvDAG}, Assumption~\ref{AsInvDAG} implies that
$\Theta_{jk} = 0$ if and only if $(j,k)$ is not an edge in $\scriptM(G)$. (Since $\Theta \succ 0$, the diagonal entries of $\Theta$ are always strictly positive.) Note that when the nonzero entries of $B$ are independently sampled continuous random variables, Assumption~\ref{AsInvDAG} holds for all choices of $B$ except on a set of Lebesgue measure zero.

\begin{remark}
Note that Theorem~\ref{ThmInvDAG} may be viewed as an extension of the canonical result for Gaussian graphical models. Indeed, a multivariate Gaussian distribution may be written as a linear SEM with respect to any permutation order $\pi$ of the variables, giving rise to a DAG $G^\pi$. Theorem~\ref{ThmInvDAG} states that $\supp(\Theta)$ is always a subset of the edge set of $\scriptM(G^\pi)$. Assumption~\ref{AsInvDAG} is a type of \emph{faithfulness} assumption~\citep{KolFri09, SpiEtal00}. By selecting different topological orders $\pi$, one may then derive the usual result that $X_j \condind X_k \mid X_{\backslash \{j,k\}}$ if and only if $\Theta_{jk} = 0$, for the Gaussian setting. Note that this conditional independence assertion may not always hold for linear SEMs, however, since non-Gaussian distributions are not necessarily expressible as a linear SEM with respect to an arbitrary permutation order. Indeed, we only require Assumption~\ref{AsInvDAG} to hold with respect to a single (fixed) order.
\end{remark}


\section{Score functions for DAGs}
\label{SecFit}

Having established a method for reducing the search space of DAGs based on estimating the moralized graph, we now move to the more general problem of scoring candidate DAGs. As before, we assume the setting of a linear SEM.

Parametric maximum likelihood is often used as a score function for statistical estimation of DAG structure, since it enjoys the nice property that the population-level version is maximized only under a correct parametrization of the model class. This follows from the relationship between maximum likelihood and KL divergence:
\begin{equation*}
	\arg\max_\theta \E_{\theta_0}[\log p_\theta(X)] = \arg\min_\theta \E_{\theta_0} \left[\log\left(\frac{p_{\theta_0}(X)}{p_\theta(X)}\right)\right] = \arg\min_\theta D_{KL}(p_{\theta_0} \| p_\theta),
\end{equation*}
and the latter quantity is minimized exactly when $p_{\theta_0} \equiv p_\theta$, almost everywhere. If the model is identifiable, this happens only if $\theta = \theta_0$.

However, such maximum likelihood methods presuppose a fixed parametrization for the model class. In the case of linear SEMs, this translates into an appropriate parametrization of the error vector $\epsilon$. For comparison, note that minimizing the squared $\ell_2$-error for ordinary linear regression may be viewed as a maximum likelihood approach when errors are Gaussian, but the $\ell_2$-minimizer is still statistically consistent for estimation of the regression vector when errors are \emph{not} Gaussian. When our goal is recovery of the autoregression matrix $B$ of the DAG, it is therefore natural to ask whether squared $\ell_2$-error could be used in place of maximum likelihood as an appropriate metric for evaluating DAGs.

We will show that in settings when the noise variances $\{\sigma_j\}_{j=1}^p$ are specified up to a constant (e.g., homoscedastic error), the answer is affirmative. In such cases, the true DAG uniquely minimizes the $\ell_2$-loss. As a side result, we also show that the true linear SEM is identifiable.

\begin{remark}
\cite{NowBuh13} study the use of nonparametric maximum likelihood methods for scoring candidate DAGs. We remark that such methods could also be combined with the framework of Sections~\ref{SecCIG} and~\ref{SecComputation} to select the optimal DAG for linear SEMs with nonparametric error distributions: First, estimate the moralized graph via the inverse covariance matrix, and then find the DAG with minimal score using a method such as dynamic programming. Similar statistical guarantees would hold in that case, with parametric rates replaced by nonparametric rates. However, our results in this section imply that in settings where the error variances are known or may be estimated accurately, the much simpler method of squared $\ell_2$-loss may be used in place of a more complicated nonparametric approach.
\end{remark}

\subsection{Squared $\ell_2$-loss}

Suppose $X$ is drawn from a linear SEM~\eqref{EqnLinStruct}, where we now use $B_0$ to denote the true autoregression matrix and $\Omega_0$ to denote the true error covariance matrix. For a fixed diagonal matrix $\Omega = \diag(\sigma_1^2, \dots, \sigma_p^2)$ and a candidate matrix $B$ with columns $\{b_j\}_{j=1}^p$, define the \emph{score of} $B$ \emph{with respect to} $\Omega$ according to
\begin{equation}
	\label{EqnScore}
	\score_\Omega(B) = \E\Big[\|\Omega^{-1/2}(I-B)^T X\|_2^2\Big] = \sum_{j=1}^p \frac{1}{\sigma_j^2} \cdot \E[(X_j - b_j^T X)^2].
\end{equation}
This is a weighted squared $\ell_2$-loss, where the prediction error for the $j^\text{th}$ coordinate is weighted by the diagonal entry $\sigma_j^2$ coming from $\Omega$, and expectations are taken with respect to the true distribution on $X$.

It is instructive to compare the score function~\eqref{EqnScore} to the usual parametric maximum likelihood when $X \sim N(0, \Sigma)$. For a distribution parametrized by the pair $(B, \Omega)$, the inverse covariance matrix is $\Theta = (I-B) \Omega^{-1} (I-B)^T$, so the expected log likelihood is
\begin{align*}
	\E_{X \sim N(0, \Sigma)}[\log p_{B, \Omega}(X)] & = -\tr[(I-B) \Omega^{-1} (I-B)^T \Sigma] + \log \det[(I-B) \Omega^{-1} (I-B)^T] \\
	& = -\tr[(I-B) \Omega^{-1} (I-B)^T \Sigma] + \log \det(\Omega^{-1}) \\
	& = - \score_\Omega(B) + \log \det(\Omega^{-1}).
\end{align*}
Hence, minimizing the score over $B$ for a fixed $\Omega$ is identical to maximizing the likelihood. For non-Gaussians, however, the convenient relationship between minimum score and maximum likelihood no longer holds.

Now let $\scriptD$ denote the class of DAGs. For $G \in \scriptD$, define the score of $G$ to be
\begin{equation}
	\label{EqnScoreDAG}
	\score_\Omega(G) \defn \min_{B \in \scriptU_G} \left\{\score_\Omega(B)\right\},
\end{equation}
where
\begin{equation*}
	\scriptU_G \defn \{B \in \real^{p \times p}: B_{jk} = 0 \text{ when } (j,k) \not\in E(G)\}
\end{equation*}
is the set of matrices that are consistent with the structure of $G$.

\begin{remark}
	\label{RemScore}
Examining the form of the score function~\eqref{EqnScore}, we see that if $\{\Pa_G(j)\}_{j=1}^p$ denotes the parent sets of nodes in $G$, then the matrix
\begin{equation*}
	B_G \defn \arg\min_{B \in \scriptU_G} \{\score_\Omega(B)\}
\end{equation*}
is unique, and the columns of $B_G$ are equal to the coefficients of the best linear predictor of $X_j$ regressed upon $X_{\Pa_G(j)}$. Furthermore, the value of $B_G$ does not depend on $\Omega$.
\end{remark}

The following lemma relates the score of the underlying DAG $G_0$ to the score of the true autoregression matrix $B_0$. In fact, the score of any DAG containing $G_0$ has the same score. The proof is contained in Appendix~\ref{AppScoreDAG}.
\begin{lemma}
	\label{LemScoreDAG}
	Suppose $X$ follows a linear SEM with autoregression matrix $B_0$, and let $G_0$ denote the underlying DAG. Consider any $G \in \scriptD$ such that $G_0 \subseteq G$. Then for any diagonal weight matrix $\Omega$, we have
\begin{equation*}
	\score_\Omega(G) = \score_\Omega(B_0),
\end{equation*}
and $B_0$ is the unique minimizer of $\score_\Omega(B)$ over $\scriptU_G$.
\end{lemma}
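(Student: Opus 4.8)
The plan is to exploit the fact that the score function~\eqref{EqnScore} decomposes as a sum over coordinates, $\score_\Omega(B) = \sum_{k=1}^p \sigma_k^{-2}\,\E[(X_k - b_k^T X)^2]$, and that the constraint $B \in \scriptU_G$ acts on each column $b_k$ separately, restricting its support to $\Pa_G(k)$. Minimizing over $\scriptU_G$ therefore splits into $p$ independent constrained least-squares problems $\min \E[(X_k - b_k^T X)^2]$ over vectors $b_k$ supported on $\Pa_G(k)$, and since the weights $\sigma_k^{-2} > 0$ are strictly positive they do not affect which column is optimal. Writing $(b_0)_k$ for the $k$th column of $B_0$, I would show that the optimal column is exactly $(b_0)_k$ for every $k$; re-weighting and summing the resulting per-coordinate minima then yields both $\score_\Omega(G) = \score_\Omega(B_0)$ and uniqueness of the minimizer.

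The crux is an orthogonality statement. Setting $\epsilon_k = X_k - (b_0)_k^T X$ for the true residual, I claim $\E[\epsilon_k X_m] = 0$ for every node $m$ that is a nondescendant of $k$ in the true DAG $G_0$. When $m < k$ this is immediate from $\epsilon_k \condind (X_1,\dots,X_{k-1})$ together with $\E[\epsilon_k]=0$; the subtle case is a parent $m$ of $k$ in the larger graph $G$ having $m > k$. To handle it, I would solve the SEM as $X = (I-B_0)^{-T}\epsilon$ and note that $X_m$ is then a linear combination of $\epsilon_m$ and the errors indexed by the ancestors of $m$ in $G_0$. Because $m$ is a nondescendant of $k$, the index $k$ appears among neither $m$ nor its ancestors, and since $\Omega_0 = \cov[\epsilon]$ is diagonal by Lemma~\ref{LemCovs}, every cross-term $\E[\epsilon_k \epsilon_i]$ with $i \neq k$ vanishes, giving $\E[\epsilon_k X_m] = 0$. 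The bridge to $G$ is a purely graph-theoretic fact verified via acyclicity: if $m \in \Pa_G(k)$, then $G$ contains no directed path from $k$ to $m$, hence neither does $G_0 \subseteq G$, so $m$ is indeed a nondescendant of $k$ in $G_0$.

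With orthogonality in hand, each per-coordinate problem is closed by a Pythagorean argument. For any feasible $b_k$ supported on $\Pa_G(k)$, the difference $(b_0)_k - b_k$ is also supported on $\Pa_G(k)$ (using $\Pa_{G_0}(k) \subseteq \Pa_G(k)$), so $((b_0)_k - b_k)^T X$ is a linear combination of $\{X_m : m \in \Pa_G(k)\}$ and is therefore orthogonal to $\epsilon_k$. Expanding $X_k - b_k^T X = ((b_0)_k - b_k)^T X + \epsilon_k$ then gives $\E[(X_k - b_k^T X)^2] = \E[(((b_0)_k - b_k)^T X)^2] + \E[\epsilon_k^2] \geq \E[\epsilon_k^2]$, with equality precisely when $((b_0)_k - b_k)^T X = 0$ almost surely. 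Since $\Sigma = \cov[X] \succ 0$, the coordinates of $X$ are linearly independent in $L^2$, forcing $b_k = (b_0)_k$ and yielding uniqueness. Summing against the weights gives $\score_\Omega(B) \geq \score_\Omega(B_0)$ for all $B \in \scriptU_G$, with equality only at $B = B_0$, which is the assertion.

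I expect the main obstacle to be the second step. The naive instinct is to assume $\Pa_G(k) \subseteq \{1,\dots,k-1\}$, i.e., that $G$ inherits the topological order $1,\dots,p$ of $G_0$; but an arbitrary DAG $G \supseteq G_0$ need not, and may attach to $k$ a parent $m > k$. Establishing the orthogonality $\E[\epsilon_k X_m] = 0$ for such $m$ — through the nondescendant characterization and the diagonality of $\Omega_0$ — is exactly what makes the argument valid for every $G \supseteq G_0$ rather than only for order-preserving supergraphs, and it is where the acyclicity of $G$ must be invoked.
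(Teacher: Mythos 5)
Your proof is correct, but it takes a genuinely different route from the paper's. Both arguments hinge on the same graph-theoretic observation — acyclicity of $G$ forces every element of $\Pa_G(k)$ to be a nondescendant of $k$ in $G_0$ — but after that point the mechanisms diverge. The paper invokes the Markov property of $G_0$ to get the conditional independence $X_k \condind X_{\Pa_G(k) \backslash \Pa_{G_0}(k)} \mid X_{\Pa_{G_0}(k)}$, concludes that the best linear predictor of $X_k$ on the enlarged parent set coincides with the one on $\Pa_{G_0}(k)$ (i.e., the $k$th column of $B_0$ padded with zeros), and then cites Remark~\ref{RemScore} for uniqueness. You instead prove a second-moment orthogonality statement, $\E[\epsilon_k X_m] = 0$ for all nondescendants $m$ of $k$ in $G_0$, by solving the SEM as $X = (I-B_0)^{-T}\epsilon$ and using diagonality of $\cov[\epsilon]$ from Lemma~\ref{LemCovs}, and then close each coordinate-wise problem with a Pythagorean decomposition, deriving uniqueness directly from $\Sigma \succ 0$. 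Your route is more elementary and self-contained: it uses only uncorrelatedness of the errors (diagonal $\Omega_0$) rather than the full conditional-independence machinery, and it makes explicit the least-squares geometry that the paper compresses into the phrase ``the linear regression coefficients are simply\dots'' together with Remark~\ref{RemScore}. What the paper's argument buys in exchange is brevity and a cleaner conceptual placement within graphical-model theory, since the Markov property handles the identification of the regression coefficients in one stroke. You are also right about where the subtlety lies: a supergraph $G \supseteq G_0$ need not respect the topological order $1, \dots, p$, so parents $m > k$ must be handled, and both proofs dispose of them by the same acyclicity argument.
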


We now turn to the main theorem of this section, in which we consider the problem of minimizing $\score_\Omega(B)$ with respect to all matrices $B$ that are permutation similar to upper triangular matrices. Such a result is needed to validate our choice of score function, since when the DAG structure is not known a priori, the space of possible autoregression matrices must include all $\scriptU \defn \bigcup_{G \in \scriptD} \scriptU_G$. Note that $\scriptU$ may be equivalently defined as the set of all matrices that are permutation similar to upper triangular matrices. We have the following vital result:
\begin{theorem}
	\label{ThmScore}
	Given a linear SEM~\eqref{EqnLinStruct} with error covariance matrix $\alpha\Omega_0$ and autoregression matrix $B_0$, where $\alpha > 0$, we have
	\begin{equation}
		\label{EqnScoreUB}
		\score_{\alpha \Omega_0}(B) \ge \score_{\alpha \Omega_0}(B_0) = p, \qquad \forall B \in \scriptU,
	\end{equation}
with equality if and only if $B = B_0$.
\end{theorem}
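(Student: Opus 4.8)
The plan is to recast the score in trace form, reduce the inequality to a single determinant identity combined with the arithmetic--geometric mean inequality, and then dispose of the equality case by proving uniqueness of the underlying matrix factorization. Write $\Omega \defn \alpha\Omega_0$ for the true error covariance and $\Theta \defn \Sigma^{-1}$. First I would evaluate the score at $B_0$: since the SEM gives $(I-B_0)^T X = \epsilon$ with $\cov(\epsilon) = \Omega$, we get $\score_\Omega(B_0) = \E\|\Omega^{-1/2}\epsilon\|_2^2 = \tr(\Omega^{-1}\cov(\epsilon)) = \tr(I_p) = p$, which yields the stated value. For general $B$ I would rewrite $\score_\Omega(B) = \tr[(I-B)\Omega^{-1}(I-B)^T\Sigma] = \tr(\Theta_B\Sigma)$, where $\Theta_B \defn (I-B)\Omega^{-1}(I-B)^T$; note $\Theta_{B_0} = \Theta$ by the parametrization identity recorded in Section~\ref{SecFit}. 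The theorem thus becomes the claim that $\tr(\Theta_B\Sigma) \ge p$ for all $B \in \scriptU$, with equality only at $B_0$.

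The structural fact I would exploit is that $\det(I-B) = 1$ for every $B \in \scriptU$, because each such $I-B$ is permutation similar to a unit upper triangular matrix. Hence $\det\Theta_B = \det(\Omega^{-1})$ for all $B \in \scriptU$, and in particular $\det\Theta_B = \det\Theta$; this is exactly the step that uses the hypothesis that the weight matrix is proportional to the true error covariance. Now $\Theta_B\Sigma = \Theta_B\Theta^{-1}$ is similar to the symmetric positive definite matrix $\Theta^{-1/2}\Theta_B\Theta^{-1/2}$, so its eigenvalues $\lambda_1,\dots,\lambda_p$ are positive and satisfy $\prod_i\lambda_i = \det(\Theta_B\Sigma) = \det\Theta_B/\det\Theta = 1$. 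The arithmetic--geometric mean inequality then gives $\score_\Omega(B) = \sum_i\lambda_i \ge p\,(\prod_i\lambda_i)^{1/p} = p$, with equality if and only if all $\lambda_i = 1$; since $\Theta^{-1/2}\Theta_B\Theta^{-1/2}$ is symmetric with every eigenvalue equal to $1$, equality forces $\Theta^{-1/2}\Theta_B\Theta^{-1/2} = I_p$, i.e. $\Theta_B = \Theta$.

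It remains to show that $\Theta_B = \Theta$ together with $B \in \scriptU$ forces $B = B_0$, and this uniqueness of the factorization $\Theta = (I-B)\Omega^{-1}(I-B)^T$ for a \emph{fixed} diagonal $\Omega$ is the main obstacle, since $B$ and $B_0$ may a priori be triangular with respect to different orderings (an ordinary LDL$^T$ decomposition is unique only once the order, and hence the diagonal, is fixed). I would argue by induction on $p$ via sink removal. Let $z$ be a sink of the true DAG $G_0$ (a last vertex in a topological order), so that equation~\eqref{EqnInvDiag} applied to the true representation gives $\Theta_{zz} = \Omega_{zz}^{-1}$ with no extra terms. The analogue of~\eqref{EqnInvDiag} for the $B$-representation, whose error variances form the \emph{same} diagonal $\Omega$, reads $\Theta_{zz} = \Omega_{zz}^{-1} + \sum_{\ell}\Omega_{\ell\ell}^{-1}B_{z\ell}^2$, a sum of nonnegative terms; matching the two expressions forces $B_{z\ell}=0$ for all $\ell$, so $z$ is a sink of $G$ as well. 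Being a sink in both DAGs, its incoming edges are determined by the unique regression of $X_z$ on $X_{\setminus\{z\}}$ (unique because $\Sigma \succ 0$), so the $z$-columns of $B$ and $B_0$ coincide. Marginalizing out the sink $z$ leaves a linear SEM on the remaining $p-1$ variables with covariance $\Sigma_{\setminus z,\setminus z}$, error covariance $\Omega_{\setminus z,\setminus z}$, and autoregression matrices $B_{\setminus z,\setminus z},(B_0)_{\setminus z,\setminus z}$ in the smaller copy of $\scriptU$; the inductive hypothesis gives equality there, which with the matching $z$-columns yields $B=B_0$. The delicate points I would watch are verifying $\det(I-B)=1$ over the full non-triangular class $\scriptU$ and justifying the sink-marginalization cleanly; once uniqueness is in hand, the equality analysis is complete.
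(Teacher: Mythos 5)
Your proof is correct, and while the inequality half mirrors the paper, your equality analysis takes a genuinely different route. For the inequality, both arguments are the same determinant-plus-AM-GM device: the paper conjugates by $\Omega_0^{1/2}$ to reduce to an identity-weighted trace problem and invokes Lemma~\ref{LemMinTrace}, whose proof (via Lemma~\ref{LemHorJoh}) combines AM-GM with Hadamard's inequality, whereas you apply AM-GM directly to the eigenvalues of $\Theta_B\Sigma$, which are positive and have product $\det\Theta_B/\det\Theta = 1$ because $\det(I-B) = 1$ on all of $\scriptU$; your packaging is slightly cleaner, since equality collapses immediately to $\Theta_B = \Theta$ without analyzing the equality case of Hadamard's inequality. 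The real divergence is the uniqueness step: showing that $\Theta_B = \Theta$ and $B \in \scriptU$ force $B = B_0$, which is where the difficulty lives because $B$ and $B_0$ may be triangular with respect to different orderings. The paper proves this as a self-contained matrix fact (Lemma~\ref{LemLDL}): two permutation unit lower triangular matrices $A, B$ with $AA^T = BB^T$ must coincide, established by a careful double induction on entries that tracks the permutation explicitly. You instead induct on $p$ by sink removal: matching $\Theta_{zz}$ computed from \eqref{EqnInvDiag} for the true representation against the diagonal expansion for the $B$-representation forces any sink $z$ of $G_0$ to be a sink of $G$; the $z$-column of each matrix is then pinned down as the unique solution of the normal equations for regressing $X_z$ on $X_{\setminus\{z\}}$ (unique since $\Sigma_{\setminus z,\setminus z} \succ 0$); and deleting $z$ preserves the factorization identity, so the inductive hypothesis applies. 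These steps all check out: writing the factorization in block form with $z$ ordered last yields simultaneously $\Sigma_{\setminus z,\setminus z}^{-1} = (I - B_{\setminus z,\setminus z})\,\Omega_{\setminus z,\setminus z}^{-1}(I - B_{\setminus z,\setminus z})^T$ and $B_{\setminus z, z} = \Sigma_{\setminus z,\setminus z}^{-1}\Sigma_{\setminus z, z}$, which is exactly the clean justification of the marginalization and column-matching steps you flagged as delicate. As for what each approach buys: the paper's Lemma~\ref{LemLDL} is reusable pure linear algebra (a uniqueness statement for LDL-type factorizations with fixed diagonal), while your argument is more transparent, every step having a regression interpretation, and it makes visible why the result holds — shared error variances force shared sinks, and sinks recursively determine the graph.
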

The proof of Theorem~\ref{ThmScore}, which is based on matrix algebra, is contained in Section~\ref{SecThmScore}. In particular, Theorem~\ref{ThmScore} implies that the squared $\ell_2$-loss function~\eqref{EqnScore} is indeed an appropriate measure of model fit when the components are correctly weighted by the diagonal entries of $\Omega_0$.

Note, however, that Theorem~\ref{ThmScore} requires the score to be taken with respect to (a multiple of) the true error covariance matrix $\Omega_0$. The following example gives a cautionary message that if the weights $\Omega$ are chosen incorrectly, minimizing $\score_\Omega(B)$ may produce a structure that is \emph{inconsistent} with the true model.

\begin{example}
	\label{ExaWrongOmega}
	Suppose $(X_1, X_2)$ is distributed according to the following linear SEM:
	\begin{equation*}
		X_1 = \epsilon_1, \quad \text{and} \quad X_2 = -\frac{X_1}{2} + \epsilon_2,
	\end{equation*}
so the autoregression matrix is given by $B_0 = \left(\begin{matrix} 0 & -\frac{1}{2} \\ 0 & 0 \end{matrix}\right)$. Let $\Omega_0 = \left(\begin{matrix} 1 & 0 \\ 0 & \frac{1}{4} \end{matrix}\right)$. Consider $B_1 = \left(\begin{matrix} 0 & 0 \\ -1 & 0 \end{matrix}\right)$. Then
\begin{equation}
	\label{EqnLessScore}
		\score_I(B_1) < \score_I(B_0),
\end{equation}
so using squared $\ell_2$-loss weighted by the identity will select an inappropriate model.
\end{example}

\begin{proof}
To verify equation~\eqref{EqnLessScore}, we first compute
\begin{equation*}
	\Sigma = (I-B_0)^{-T} \Omega_0 (I-B_0) = \left(\begin{matrix} 1 & -\frac{1}{2} \\ -\frac{1}{2} & \frac{1}{2}\end{matrix}\right).
\end{equation*}
Then
\begin{align*}
	\E[\|X - B_1^T X\|_2^2] & = \tr\left[(I-B_1)^T \Sigma (I-B_1)\right] = \tr\left[\left(\begin{matrix} 1/2 & 0 \\ 0 & 1/2\end{matrix}\right)\right] = 1, \\
	\E[\|X - B_0^T X\|_2^2] & = \tr\left[(I-B_0)^T \Sigma (I-B_0)\right] = \tr\left[\left(\begin{matrix} 1 & 0 \\ 0 & 1/4\end{matrix}\right)\right] = \frac{5}{4},
\end{align*}
implying inequality~\eqref{EqnLessScore}.
\end{proof}

\subsection{Identifiability of linear SEMs}
\label{SecIdentify}

Theorem~\ref{ThmScore} also has a useful consequence in terms of identifiability of a linear SEM, which we state in the following corollary:
\begin{corollary}
	\label{CorIdentify}
	Consider a fixed diagonal covariance matrix $\Omega_0$, and consider the class of linear SEMs parametrized by the pair $(B, \alpha \Omega_0)$, where $B \in \scriptU$ and $\alpha > 0$ is a scale factor. Then the true model $(B_0, \alpha_0 \Omega_0)$ is identifiable. In particular, the class of homoscedastic linear SEMs is identifiable.
\end{corollary}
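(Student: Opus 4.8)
The plan is to derive identifiability directly from Theorem~\ref{ThmScore}, which already does almost all the work. The key observation is that identifiability of a parametrized model class means: if two parameter pairs $(B_0, \alpha_0 \Omega_0)$ and $(B_1, \alpha_1 \Omega_0)$ induce the \emph{same} joint distribution on $X$, then they must coincide, i.e., $B_1 = B_0$ and $\alpha_1 = \alpha_0$. So I would begin by supposing that $(B_1, \alpha_1 \Omega_0)$ is any pair in the class that generates the same distribution (equivalently, the same covariance matrix $\Sigma$) as the true pair $(B_0, \alpha_0 \Omega_0)$, and then show these two descriptions must be equal.

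First I would fix the scale factor. Since the score function of Theorem~\ref{ThmScore} is taken with respect to a multiple of the true error covariance, and two models inducing the same distribution share the same $\Sigma$, I would apply Theorem~\ref{ThmScore} with the weighting matrix $\alpha_0 \Omega_0$: it asserts that $\score_{\alpha_0 \Omega_0}(B) \ge \score_{\alpha_0 \Omega_0}(B_0) = p$ for all $B \in \scriptU$, with equality only at $B = B_0$. The idea is that the alternative description $(B_1, \alpha_1 \Omega_0)$ must also yield $\score_{\alpha_0 \Omega_0}(B_1) = p$, because $B_1 \in \scriptU$ describes the same distribution and the per-coordinate residual variances of the best linear predictors are fixed by $\Sigma$ alone (see Remark~\ref{RemScore}, which notes $B_G$ does not depend on $\Omega$). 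Attaining the value $p$ at $B_1$ forces, by the uniqueness clause of Theorem~\ref{ThmScore}, that $B_1 = B_0$.

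Once $B_1 = B_0$, the error covariance is pinned down as well: from the SEM relation $\Sigma = (I - B_0)^{-T} (\alpha \Omega_0) (I - B_0)^{-1}$, the two models give $(I-B_0)^{-T}(\alpha_1 \Omega_0)(I-B_0)^{-1} = (I-B_0)^{-T}(\alpha_0 \Omega_0)(I-B_0)^{-1}$, and multiplying out by the invertible factor $(I - B_0)$ on both sides gives $\alpha_1 \Omega_0 = \alpha_0 \Omega_0$, hence $\alpha_1 = \alpha_0$ since $\Omega_0$ is a fixed nonzero diagonal matrix. For the final sentence of the corollary, the homoscedastic case is the special instance $\Omega_0 = I$, so identifiability of that subclass is immediate.

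The main obstacle, and the step I would write most carefully, is justifying that the competing model $(B_1, \alpha_1 \Omega_0)$ actually achieves $\score_{\alpha_0 \Omega_0}(B_1) = p$ rather than merely $\ge p$. The subtlety is that the score weighting in Theorem~\ref{ThmScore} is locked to the \emph{true} $\alpha_0 \Omega_0$, whereas $B_1$ naturally pairs with weight $\alpha_1 \Omega_0$; I must argue that the value $p$ is attained regardless of the mismatch, using that $B_1$ realizes $X = B_1^T X + \epsilon'$ with $\cov(\epsilon') = \alpha_1 \Omega_0$ \emph{diagonal}, so its weighted residuals under its own weights sum to $p$, and then relate this back to the fixed-weight score. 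Care is needed here because Example~\ref{ExaWrongOmega} shows that scoring under the wrong $\Omega$ can be genuinely misleading, so the argument must exploit that $B_1$ is a legitimate SEM representation whose error covariance is a multiple of $\Omega_0$ — precisely the structural constraint defining the parametrized class — and not an arbitrary matrix in $\scriptU$.
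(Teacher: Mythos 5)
Your overall strategy---reducing identifiability to the uniqueness clause of Theorem~\ref{ThmScore} and recovering the scale factor afterwards---is the same as the paper's, but the step you yourself flag as the crux does not go through as you sketch it, and as structured the argument is circular. If $(B_1,\alpha_1\Omega_0)$ represents the same distribution, then $X = B_1^T X + \epsilon'$ with $\cov[\epsilon'] = \alpha_1\Omega_0$, so the $j$th residual variance is $\E[(X_j - (b_1)_j^T X)^2] = \alpha_1(\Omega_0)_{jj}$, and hence
\begin{equation*}
	\score_{\alpha_0\Omega_0}(B_1) \;=\; \sum_{j=1}^p \frac{\alpha_1 (\Omega_0)_{jj}}{\alpha_0 (\Omega_0)_{jj}} \;=\; \frac{\alpha_1}{\alpha_0}\, p.
\end{equation*}
Thus ``relating the own-weight score back to the fixed weight $\alpha_0\Omega_0$'' yields $(\alpha_1/\alpha_0)p$, not $p$: the value $p$ is attained only if $\alpha_1 = \alpha_0$, which in your write-up is deduced only \emph{after}, and from, the conclusion $B_1 = B_0$. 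So the claim $\score_{\alpha_0\Omega_0}(B_1) = p$ is unsupported at exactly the point where you need it.

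The gap is small and can be closed in two ways. (i) Observe that $\score_{\alpha\Omega_0} = \frac{1}{\alpha}\score_{\Omega_0}$ for every $\alpha > 0$, so all these score functions have the \emph{same} unique minimizer over $\scriptU$. Applying Theorem~\ref{ThmScore} to the representation $(B_0,\alpha_0\Omega_0)$ shows this common minimizer is $B_0$; applying it to the equally valid representation $(B_1,\alpha_1\Omega_0)$ of the same distribution shows it is $B_1$. Hence $B_1 = B_0$, and then $\alpha_1\Omega_0 = (I-B_0)^T\Sigma(I-B_0) = \alpha_0\Omega_0$ as in your final step. This is in essence the paper's one-line proof, which works with $\score_{\Omega_0}(\cdot)$ directly and notes that its unique minimizer is a functional of $\Sigma$ alone. (ii) Alternatively, keep your computation but use it symmetrically: Theorem~\ref{ThmScore} with weights $\alpha_0\Omega_0$ gives $(\alpha_1/\alpha_0)p = \score_{\alpha_0\Omega_0}(B_1) \ge p$, so $\alpha_1 \ge \alpha_0$; exchanging the roles of the two representations gives $\alpha_0 \ge \alpha_1$. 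Hence $\alpha_1 = \alpha_0$, your displayed score equals $p$, and the uniqueness clause then forces $B_1 = B_0$. (A third route pins the scale first, with no reference to $B$: $\det(I-B) = 1$ for every $B \in \scriptU$, so $\det\Sigma = \alpha^p \det\Omega_0$ determines $\alpha$.)
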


\begin{proof}
	By Theorem~\ref{ThmScore}, the matrix $B_0$ is the unique minimizer of $\score_{\Omega_0}(B)$. Since $\alpha_0 \cdot (\Omega_0)_{11} = \var[X_1]$, the scale factor $\alpha_0$ is also uniquely identifiable. The statement about homoscedasticity follows by taking $\Omega_0 = I$.
\end{proof}
	
Corollary~\ref{CorIdentify} should be viewed in comparison to previous results in the literature regarding identifiability of linear SEMs. Theorem 1 of~\cite{PetBuh13} states that when $X$ is Gaussian and $\epsilon$ is an i.i.d.\ Gaussian vector with $\cov[\epsilon] = \alpha \Omega_0$, the model is identifiable. Indeed, our Corollary~\ref{CorIdentify} implies that result as a special case, but it does not impose any additional conditions concerning Gaussianity. \cite{ShiEtal06} establish identifiability of a linear SEM when $\epsilon$ is a vector of independent, non-Gaussian errors, by reducing to ICA, but our result does not require errors to be non-Gaussian.

The significance of Corollary~\ref{CorIdentify} is that it supplies an elegant proof showing that the model is still identifiable even in the presence of both Gaussian and non-Gaussian components, provided the error variances are specified up to a scalar multiple. Since any multivariate Gaussian distribution may be written as a linear SEM with respect to an arbitrary ordering, some constraint such as variance scaling or non-Gaussianity is necessary in order to guarantee identifiability.

\subsection{Misspecification of variances}
\label{SecMisspec}

Theorem~\ref{ThmScore} implies that when the diagonal variances of $\Omega_0$ are known up to a scalar factor, the weighted $\ell_2$-loss~\eqref{EqnScore} may be used as a score function for linear SEMs. Example~\ref{ExaWrongOmega} shows that when $\Omega$ is misspecified, we may have $B_0 \not\in \arg\min_{B \in \scriptU} \left\{\score_\Omega(B)\right\}$. In this section, we further study the effect when $\Omega$ is misspecified. Intuitively, provided $\Omega$ is close enough to $\Omega_0$ (or a multiple thereof), minimizing $\score_\Omega(B)$ with respect to $B$ should still yield the correct $B_0$.

Consider an arbitrary diagonal weight matrix $\Omega_1$. We first provide  bounds on the ratio between entries of $\Omega_0$ and $\Omega_1$ which ensure that $B_0 = \arg\min_{B \in \scriptU} \left\{\score_{\Omega_1}(B)\right\}$, even though the model is misspecified. Let
\begin{equation*}
	a_{\max} \defn \lambda_{\max}(\Omega_0 \Omega_1^{-1}), \quad \text{and} \quad a_{\min} \defn \lambda_{\min}(\Omega_0 \Omega_1^{-1}),
\end{equation*}
denote the maximum and minimum ratios between corresponding diagonal entries of $\Omega_1$ and $\Omega_0$. Now define the additive gap between the score of $G_0$ and the next best DAG, given by
\begin{equation}
	\label{EqnXi}
	\xi \defn \min_{G \in \scriptD, \; G \not\supseteq G_0} \{ \score_{\Omega_0}(G) - \score_{\Omega_0}(G_0) \} = \min_{G \in \scriptD, G \not\supseteq G_0} \{\score_{\Omega_0}(G)\} - p.
\end{equation}
By Theorem~\ref{ThmScore}, we know that $\xi > 0$. The following theorem provides a sufficient condition for correct model selection in terms of the gap $\xi$ and the ratio $\frac{a_{\max}}{a_{\min}}$, which are both invariant to the scale factor $\alpha$. It is a measure of robustness for how roughly the entries of $\Omega_0$ may be approximated and still produce $B_0$ as the unique minimizer. The proof of the theorem is contained in Appendix~\ref{AppRatio}.
\begin{theorem}
	\label{ThmRatio}
	Suppose
	\begin{equation}
		\label{EqnARatio}
		\frac{a_{\max}}{a_{\min}} \le 1 + \frac{\xi}{p}.
	\end{equation}
	Then $B_0 \in \arg\min_{B \in \scriptU} \left\{\score_{\Omega_1}(B)\right\}$. If inequality~\eqref{EqnARatio} is strict, then $B_0$ is the unique minimizer of $\score_{\Omega_1}(B)$.
\end{theorem}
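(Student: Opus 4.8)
The plan is to compare the two weighted scores coordinate by coordinate, using the gap $\xi$ to absorb the discrepancy between the weight $\Omega_0$ appearing in~\eqref{EqnXi} and the misspecified weight $\Omega_1$. Write $e_j(B) \defn \E[(X_j - b_j^T X)^2]$ for the $j$th squared residual and $r_j \defn (\Omega_0)_{jj}/(\Omega_1)_{jj}$; since $\Omega_0 \Omega_1^{-1}$ is diagonal with entries $r_j$, we have $a_{\min} \le r_j \le a_{\max}$ for every $j$. Because both the hypothesis~\eqref{EqnARatio} and the conclusion are invariant under rescaling the true error covariance, I first normalize $\alpha = 1$, taking the true error covariance to be $\Omega_0$ itself; then $e_j(B_0) = \var[\epsilon_j] = (\Omega_0)_{jj}$, which gives $\score_{\Omega_0}(B_0) = p$ (matching Theorem~\ref{ThmScore} and the form of~\eqref{EqnXi}) and $\score_{\Omega_1}(B_0) = \sum_{j=1}^p r_j$.

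Next I would write $\min_{B \in \scriptU} \score_{\Omega_1}(B) = \min_{G \in \scriptD} \score_{\Omega_1}(G)$ and argue separately for the two types of DAG. When $G_0 \subseteq G$, Lemma~\ref{LemScoreDAG} applied with weight $\Omega_1$ shows that $B_0$ is already the unique minimizer of $\score_{\Omega_1}$ over $\scriptU_G$, so no such $B$ can beat $B_0$. The substance is the case $G \not\supseteq G_0$, where I use that $e_j(B) \ge 0$ and $r_j \ge a_{\min}$ to obtain, for every $B \in \scriptU_G$,
\begin{equation*}
\score_{\Omega_1}(B) = \sum_{j=1}^p r_j\,\frac{e_j(B)}{(\Omega_0)_{jj}} \;\ge\; a_{\min}\sum_{j=1}^p \frac{e_j(B)}{(\Omega_0)_{jj}} = a_{\min}\,\score_{\Omega_0}(B) \;\ge\; a_{\min}\,\score_{\Omega_0}(G) \;\ge\; a_{\min}\,(p + \xi),
\end{equation*}
where the second-to-last step uses $\score_{\Omega_0}(G) = \min_{B' \in \scriptU_G} \score_{\Omega_0}(B')$ and the last uses the definition of $\xi$ in~\eqref{EqnXi}.

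Combining this with the elementary upper bound $\score_{\Omega_1}(B_0) = \sum_{j} r_j \le p\,a_{\max}$, the hypothesis~\eqref{EqnARatio} closes the argument through the chain
\begin{equation*}
\score_{\Omega_1}(B) \;\ge\; a_{\min}(p+\xi) \;\ge\; a_{\max}\,p \;\ge\; \score_{\Omega_1}(B_0), \qquad B \in \scriptU_G,\ G \not\supseteq G_0,
\end{equation*}
since $a_{\min}(p+\xi) \ge a_{\max}\,p$ is exactly a rearrangement of $a_{\max}/a_{\min} \le 1 + \xi/p$. As every $B \in \scriptU$ belongs to some $\scriptU_G$, this establishes $B_0 \in \arg\min_{B \in \scriptU} \score_{\Omega_1}(B)$. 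For uniqueness under the strict hypothesis, the middle inequality of the chain becomes strict, so every $B$ with $G \not\supseteq G_0$ is strictly suboptimal; together with the strict optimality of $B_0$ within each $\scriptU_G$ for $G_0 \subseteq G$ (Lemma~\ref{LemScoreDAG}) and the fact that $B_0 \in \scriptU_G$ precisely when $G_0 \subseteq G$, this forces $B_0$ to be the unique global minimizer.

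I expect the principal difficulty to be conceptual rather than computational: the gap $\xi$ lives in the $\Omega_0$-weighted geometry while the conclusion concerns the $\Omega_1$-weighted geometry, and the key realization is that \emph{both} scores decompose over coordinates through the same residuals $e_j(B)$, which is what permits a uniform sandwich by $a_{\min}$ and $a_{\max}$ in place of a delicate DAG-by-DAG comparison. After that, the only care needed is to check that~\eqref{EqnARatio} is exactly the break-even threshold $a_{\min}(p+\xi) \ge a_{\max}\,p$, so that the recovery guarantee holds with no slack to spare.
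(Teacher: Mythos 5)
Your proof is correct and takes essentially the same route as the paper's: both arguments rest on the coordinate-wise sandwich $a_{\min}\,\score_{\Omega_0}(B) \le \score_{\Omega_1}(B) \le a_{\max}\,\score_{\Omega_0}(B)$, the gap bound $\score_{\Omega_0}(B) \ge p + \xi$ for $B \in \scriptU_G$ with $G \not\supseteq G_0$, and Lemma~\ref{LemScoreDAG} (applied with weight $\Omega_1$) for DAGs containing $G_0$. Your chain $\score_{\Omega_1}(B) \ge a_{\min}(p+\xi) \ge a_{\max}\,p \ge \score_{\Omega_1}(B_0)$ is the paper's chain read in reverse, with hypothesis~\eqref{EqnARatio} invoked as the break-even comparison rather than as a multiplicative factor; you are, if anything, slightly more explicit about the uniqueness step and the normalization $\alpha = 1$.
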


\begin{remark}
Theorem~\ref{ThmRatio} provides an error allowance concerning the accuracy to which we may specify the error covariances and still recover the correct autoregression matrix $B_0$ from an improperly weighted score function. In the case $\Omega_1 = \alpha \Omega_0$, we have $a_{\max} = a_{\min} = 1$, so the condition~\eqref{EqnARatio} is always strictly satisfied, which is consistent with our earlier result in Theorem~\ref{ThmScore} that $B_0 = \arg\min_{B \in \scriptU} \left\{\score_{\alpha \Omega_0}(B)\right\}$.
\end{remark}

Naturally, the error tolerance specified by Theorem~\ref{ThmRatio} is a function of the gap $\xi$ between the true DAG $G_0$ and the next best candidate: If $\xi$ is larger, the score is more robust to misspecification of the weights $\Omega$. Note that if we restrict our search space from the full set of DAGs $\scriptD$ to some smaller space $\scriptD'$, so $B \in \bigcup_{G \in \scriptD'} \scriptU_G$, we may restate the condition in Theorem~\ref{ThmRatio} in terms of the gap
\begin{equation}
	\label{EqnGapPrime}
	\xi(\scriptD') \defn \min_{G \in \scriptD', \; G \not\supseteq G_0} \{\score_{\Omega_0}(G) - \score_{\Omega_0}(G_0) \},
\end{equation}
which may be considerably larger than $\xi$ when $\scriptD'$ is much smaller than $\scriptD$ (cf.\ equation~\eqref{EqnXiOmega} and Section~\ref{SecGap} on weakening the gap condition below).

Specializing to the case when $\Omega_1 = I$, we may interpret Theorem~\ref{ThmRatio} as providing a window of variances around which we may treat a heteroscedastic model as homoscedastic, and use the simple (unweighted) squared $\ell_2$-score to recover the correct model. See Lemma~\ref{LemTwoVar} in the next section for a concrete example.

\subsection{Example: 2- and 3-variable models}
\label{SecSmallDAGs}

In this section, we develop examples illustrating the gap $\xi$ introduced in Section~\ref{SecMisspec}. We study two cases, involving two and three variables, respectively.

\subsubsection{Two variables}

We first consider the simplest example with a two-variable directed graph. Suppose the forward model is defined by
\begin{equation*}
	B_0 = \left(
	\begin{array}{cc}
		0 & b_0 \\
		0 & 0 \\
	\end{array}
	\right), \qquad \Omega_0 = \left(
	\begin{array}{cc}
		d_1^2 & 0 \\
		0 & d_2^2
	\end{array}
	\right),
\end{equation*}
and consider the backward matrix defined by the autoregression matrix
\begin{equation}
	\label{Eqn2True}
	B = \left(
	\begin{array}{cc}
		0 & 0 \\
		b & 0 \\
	\end{array}
	\right).
\end{equation}
The forward and backward models are illustrated in Figure~\ref{Fig2DAG}.
\begin{figure}
\begin{center}
	\begin{tabular}{cc}
		\includegraphics[width=3cm]{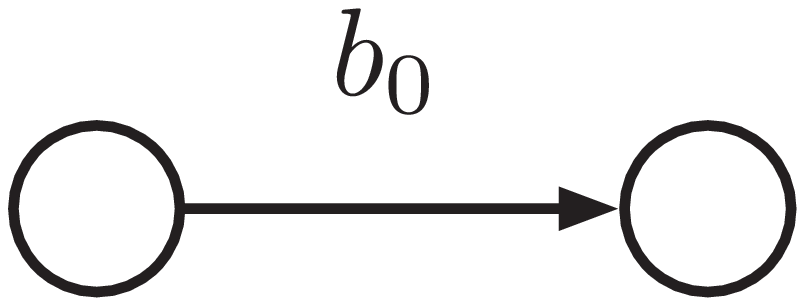} & \includegraphics[width=3cm]{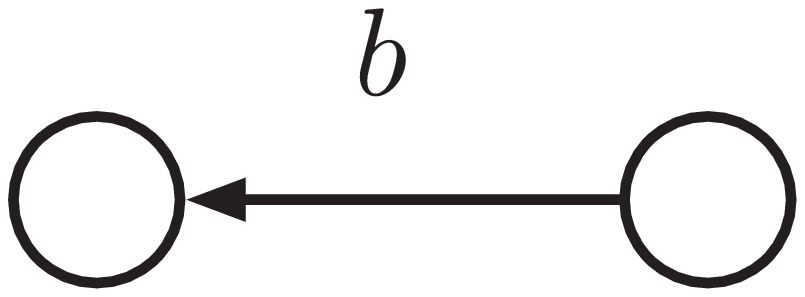} \\
		(a) & (b)
	\end{tabular}
\end{center}
\caption{Two-variable DAG. (a) The forward model. (b) The backward model.}
\label{Fig2DAG}
\end{figure}

A straightforward calculation shows that
\begin{equation*}
	\score_{\Omega_0}(B) = 2 + b^2 b_0^2 + \left(b \frac{d_2}{d_1} - b_0 \frac{d_1}{d_2}\right)^2,
\end{equation*}
which is minimized for $b = \frac{b_0}{b_0^2 + \frac{d_2^2}{d_1^2}}$, implying that
\begin{equation}
	\label{Eqn2Gap}
	\xi = \min_b \{\score_{\Omega_0}(B) - \score_{\Omega_0}(B_0)\} = \frac{b_0^4}{\frac{d_2^4}{d_1^4} + b_0^2 \frac{d_2^2}{d_1^2}}.
\end{equation}
We see that the gap $\xi$ grows with the strength of the true edge $b_0$, when $|b_0| > 1$, and is symmetric with respect to the sign of $b_0$. The gap also grows with the magnitude of the ratio $\frac{d_1}{d_2}$.

To gain intuition for the interplay between $b_0$ and $\frac{d_1}{d_2}$, we derive the following lemma, a corollary of Theorem~\ref{ThmRatio} specialized to the case of the two-variable DAG:
\begin{lemma}
	\label{LemTwoVar}
	Consider the two-variable DAG defined by equation~\eqref{Eqn2True}. Let $\Omega_1 = I_2$ and define $r \defn \frac{d_2}{d_1}$. Suppose the following conditions hold:
	\begin{equation}
		\label{EqnBCond}
		b_0^2 \ge
		\begin{cases}
			r^2\left((r^2-1) + \sqrt{r^4 - 1}\right), & \text{ if } r \ge 1, \\
			(1-r^2) + \sqrt{1 - r^4}, & \text{ if } r \le 1.
		\end{cases}
	\end{equation}
	Then $B_0 = \arg\min_{B \in \scriptU} \{\score_{\Omega_1}(B)\}$; i.e., $B_0$ is the unique minimizer of the score function under the unweighted squared-$\ell_2$ loss.
\end{lemma}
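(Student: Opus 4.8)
The plan is to obtain this lemma as a direct specialization of Theorem~\ref{ThmRatio}, by computing $a_{\max}$, $a_{\min}$, and $\xi$ explicitly for the two-variable model and then unwinding condition~\eqref{EqnARatio}. Since $\Omega_1 = I_2$, we have $\Omega_0 \Omega_1^{-1} = \Omega_0 = \diag(d_1^2, d_2^2)$, whose eigenvalues are simply $d_1^2$ and $d_2^2$. Thus when $r = d_2/d_1 \ge 1$ we get $a_{\max}/a_{\min} = d_2^2/d_1^2 = r^2$, while when $r \le 1$ we get $a_{\max}/a_{\min} = d_1^2/d_2^2 = 1/r^2$. The gap $\xi$ is already available from equation~\eqref{Eqn2Gap}; rewriting it in terms of $r$ gives $\xi = b_0^4/(r^4 + b_0^2 r^2)$. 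Since $p = 2$ here, condition~\eqref{EqnARatio} reads $a_{\max}/a_{\min} \le 1 + \xi/2$.

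I would then substitute the two expressions for $a_{\max}/a_{\min}$ into this inequality and clear denominators, obtaining in each case a quadratic inequality in the variable $u \defn b_0^2$. For $r \ge 1$ the inequality becomes $u^2 - 2(r^2-1)r^2\, u - 2(r^2-1)r^4 \ge 0$, and for $r \le 1$ it becomes $u^2 - 2(1-r^2)\,u - 2(1-r^2)r^2 \ge 0$. In both cases the product of the two roots is nonpositive, so exactly one root is nonnegative; since $u = b_0^2 \ge 0$, the relevant condition is that $u$ lie above the larger (nonnegative) root. Solving each quadratic and retaining that branch yields precisely the two bounds stated in~\eqref{EqnBCond}. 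Theorem~\ref{ThmRatio} then gives $B_0 \in \arg\min_{B \in \scriptU}\{\score_{\Omega_1}(B)\}$, with uniqueness whenever the inequality is strict.

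The one step where the clean closed form emerges is the factorization of the discriminants: for $r \ge 1$ the inner radicand simplifies as $(r^2-1)^2 + 2(r^2-1) = (r^2-1)(r^2+1) = r^4 - 1$, producing the larger root $r^2\bigl((r^2-1) + \sqrt{r^4-1}\bigr)$, while for $r \le 1$ one has $(1-r^2)^2 + 2(1-r^2)r^2 = (1-r^2)(1+r^2) = 1 - r^4$, producing the larger root $(1-r^2) + \sqrt{1-r^4}$. These are exactly the $\sqrt{r^4-1}$ and $\sqrt{1-r^4}$ terms appearing in~\eqref{EqnBCond}. I anticipate no conceptual obstacle, since the argument is routine quadratic algebra once Theorem~\ref{ThmRatio} is invoked; the only points requiring care are selecting the correct (larger) root of each quadratic and confirming that the direction of the resulting inequality on $b_0^2$ matches the direction of~\eqref{EqnARatio}.
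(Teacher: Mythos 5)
Your proposal is correct and follows essentially the same route as the paper's own proof: specialize Theorem~\ref{ThmRatio} with $a_{\max}/a_{\min} = r^2$ (or $1/r^2$), substitute the gap~\eqref{Eqn2Gap} with $p=2$, and reduce to a quadratic in $b_0^2$ whose discriminant factors as $r^4-1$ (resp.\ $1-r^4$), yielding the larger root in~\eqref{EqnBCond}. Your added remark that uniqueness follows only when the inequality is strict is a careful touch the paper's proof leaves implicit.
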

Lemma~\ref{LemTwoVar} is proved in Appendix~\ref{AppTwoVar}.

\begin{remark}
Note that the two right-hand expressions in inequality~\eqref{EqnBCond} are similar, although the expression in the case $r^2 \ge 1$ contains an extra factor of $r^2$, so the sufficient condition is stronger. Both lower bounds in equation~\eqref{EqnBCond} increase with $|r-1|$, which agrees with intuition: If the true model is more non-homoscedastic, the strength of the true edge must be stronger in order for the unweighted squared-$\ell_2$ score to correctly identify the model. When $r = 1$, we have the vacuous condition $b_0^2 \ge 0$, since $\Omega_1 = \alpha \Omega_0$ and the variances are correctly specified, so Theorem~\ref{ThmScore} implies $B_0 = \arg\min_{B \in \scriptU} \{\score_{\Omega_1}(B)\}$ for any choice of $b_0$.
\end{remark}

\subsubsection{$v$-structure}

We now examine a three-variable graph. Suppose the actual graph involves a $v$-structure, as depicted in Figure~\ref{Fig3True}, and is parametrized by the matrices
\begin{equation}
	\label{Eqn3True}
	B_0 = \left(
	\begin{array}{ccc}
		0 & 0 & b_{13} \\
		0 & 0 & b_{23} \\
		0 & 0 & 0
	\end{array}
	\right), \qquad \Omega = \left(
	\begin{array}{ccc}
		d_1^2 & 0 & 0 \\
		0 & d_2^2 & 0 \\
		0 & 0 & d_3^2
	\end{array}
	\right).
\end{equation}
\begin{figure}
	\begin{center}
		\includegraphics[width=3cm]{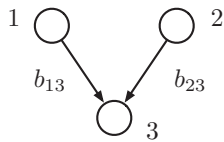}
	\end{center}
	\caption{Three-variable DAG with $v$-structure.}
	\label{Fig3True}
\end{figure}
We have the following lemma, proved in Appendix~\ref{App3DAG}:
\begin{lemma}
	\label{Lem3DAG}
	Consider the three-variable DAG characterized by Figure~\ref{Fig3True} and equations~\eqref{Eqn3True}. The gap $\xi$ defined by equation~\eqref{EqnXi} is given by
	\begin{equation*}
		\xi = \min\left\{\frac{b_{23}^4}{\frac{d_3^4}{d_2^4} + b_{23}^2 \frac{d_3^2}{d_2^2}}, \; \frac{b_{13}^4}{\frac{d_3^4}{d_1^4} + b_{13}^2 \frac{d_3^2}{d_1^2}}\right\}.
	\end{equation*}
\end{lemma}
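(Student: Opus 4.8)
The plan is to compute $\score_{\Omega_0}(G)$ directly for the relevant competing DAGs, exploiting the decomposability of the score together with the conditional independence $X_1 \condind X_2$ built into the $v$-structure. First I would record $\Sigma = \cov[X]$: since $X_1 = \epsilon_1$ and $X_2 = \epsilon_2$ are independent and $X_3 = b_{13}X_1 + b_{23}X_2 + \epsilon_3$, the only nonzero off-diagonals are $\Sigma_{13} = b_{13}d_1^2$ and $\Sigma_{23} = b_{23}d_2^2$, with $\Sigma_{33} = b_{13}^2 d_1^2 + b_{23}^2 d_2^2 + d_3^2$. By Remark~\ref{RemScore} and equation~\eqref{EqnScore}, for any DAG $G$ the score decomposes as $\score_{\Omega_0}(G) = \sum_{j}\frac{1}{d_j^2}\,\var(X_j \mid X_{\Pa_G(j)})$, where $\var(X_j\mid X_S)$ denotes the residual variance of the best linear predictor of $X_j$ on $X_S$; since $\score_{\Omega_0}(G_0) = 3$, I would work throughout with the gap $\score_{\Omega_0}(G)-3$.

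Next I would reduce the minimization in equation~\eqref{EqnXi} to a finite search. Adding an edge only enlarges a parent set and weakly decreases each residual variance, so $\score_{\Omega_0}$ is non-increasing under edge addition and the minimum is attained at a DAG maximal among those not containing $G_0$. I would then show there are exactly six such maximal DAGs: the four complete three-edge DAGs whose topological order does not place node $3$ last, namely the orders $(1,3,2)$, $(2,3,1)$, $(3,1,2)$, $(3,2,1)$, together with the two chains $1\to2\to3$ and $2\to1\to3$. This enumeration follows by noting that the edge $1\text{--}2$ must always be present (neither orientation can create $G_0$, so blocking both would require a cycle in each direction), and that omitting the pair $\{1,3\}$ (resp.\ $\{2,3\}$) forces exactly the chain $1\to2\to3$ (resp.\ $2\to1\to3$) as the unique maximal possibility, since $1\to3$ is then blocked only by the presence of $2\to3$.

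To identify the two candidate minimizers, for the order $(1,3,2)$ I would condition on the retained parent $X_1$ of node $3$: writing $\widetilde{X}_3 := X_3 - b_{13}X_1 = b_{23}X_2 + \epsilon_3$, the pair $(X_2,\widetilde{X}_3)$ is an independent two-variable SEM $2\to3$ with source variance $d_2^2$, target error variance $d_3^2$, and edge weight $b_{23}$. Hence the nodes-$2$-and-$3$ contributions under $(1,3,2)$, minus those under $G_0$, are exactly the two-variable gap of equation~\eqref{Eqn2Gap} under $(d_1,d_2,b_0)\mapsto(d_2,d_3,b_{23})$, yielding $\frac{b_{23}^4}{d_3^4/d_2^4 + b_{23}^2 d_3^2/d_2^2}$; the full-conditional identity $\var(X_2\mid X_1,X_3)=1/\Theta_{22}$ from Lemma~\ref{LemCovs} makes node $2$'s term immediate. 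By the symmetry $1\leftrightarrow2$, the order $(2,3,1)$ gives the second term. Writing $A=b_{13}^2d_1^2$, $C=b_{23}^2d_2^2$, $D=d_3^2$, these two candidates are $\frac{C^2}{D(C+D)}$ and $\frac{A^2}{D(A+D)}$. For the order $(3,1,2)$, where node $3$ is a source, a short algebraic identity gives gap $=\frac{C^2}{D(C+D)} + \frac{A}{D} - \frac{A}{A+C+D} \ge \frac{C^2}{D(C+D)}$, and symmetrically $(3,2,1)$ has gap at least $\frac{A^2}{D(A+D)}$; the chain $1\to2\to3$ has gap $\frac{A}{D} > \frac{A^2}{D(A+D)}$, and $2\to1\to3$ has gap $\frac{C}{D} > \frac{C^2}{D(C+D)}$. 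Thus every maximal competitor is bounded below by $\min\bigl\{\frac{C^2}{D(C+D)},\frac{A^2}{D(A+D)}\bigr\}$, with both bounds attained, which is the claimed formula for $\xi$.

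The main obstacle is the exhaustive-but-finite bookkeeping: verifying that the six maximal DAGs are precisely those listed, and that neither the two node-$3$-as-source orders nor the two chains undercuts the single-reversal DAGs. The clean way through is the conditioning argument, which collapses each relevant three-variable computation to the already-established two-variable gap of equation~\eqref{Eqn2Gap}, together with the elementary identities above that express each remaining gap as a candidate term plus a manifestly nonnegative remainder.
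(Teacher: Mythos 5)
Your proposal is correct, and it differs from the paper's proof in two substantive ways. First, the computation itself: the paper parametrizes each competing DAG by its free autoregression entries (the matrices $C$, $D$, $E$, $F$ in Appendix~\ref{App3DAG}) and explicitly minimizes the quadratic score over those entries, arriving at the expressions~\eqref{EqnXi1} and~\eqref{EqnXi2}; you instead write the score as a sum of weighted residual variances and use the conditioning trick $\widetilde{X}_3 = X_3 - b_{13}X_1$ to collapse each three-variable computation to the already-established two-variable gap~\eqref{Eqn2Gap}. The two routes agree: with $A = b_{13}^2 d_1^2$, $C = b_{23}^2 d_2^2$, $D = d_3^2$, your expression for the order $(3,1,2)$, namely $\frac{C^2}{D(C+D)} + \frac{A}{D} - \frac{A}{A+C+D}$, is exactly the paper's $\xi_1$, and your two candidate minimizers are exactly $\xi_2$ and $\xi_4$. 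Your reduction involves less algebra and reuses prior work, whereas the paper's brute-force minimization is self-contained and exhibits the minimizing coefficients explicitly. Second --- and worth emphasizing --- the enumeration step: the poset of DAGs not containing $G_0$ has \emph{six} maximal elements, the four complete DAGs of Figure~\ref{Fig3Alt} plus the two chains $1\to2\to3$ and $2\to1\to3$ (the chain $1\to2\to3$ is maximal because adding $1\to3$ produces a supergraph of $G_0$, while adding $3\to1$ produces a cycle). The paper's proof checks only the four complete DAGs and silently omits the two chains; you check them, compute their gaps $A/D$ and $C/D$, and verify that these strictly exceed $\frac{A^2}{D(A+D)}$ and $\frac{C^2}{D(C+D)}$ respectively, so the stated formula for $\xi$ is unaffected. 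In this respect your argument is actually more complete than the one in the paper.
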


A key reduction in the proof of Lemma~\ref{Lem3DAG} is to note that we only need to consider a relatively small number of DAGs, given by Figure~\ref{Fig3Alt} in Appendix~\ref{App3DAG}. Indeed, for $G_1 \subseteq G_2$, we have
$\score_{\Omega_0}(G_2) \le \score_{\Omega_0}(G_1)$, so it suffices to consider maximal elements in the poset of DAGs not containing the true DAG $G_0$.

\begin{remark}
\label{RemGap}
Note that the form of the gap in Lemma~\ref{Lem3DAG} is very similar to the form for the two-variable model, and the individual ratios scale with the strength of the edge and the ratio of the corresponding error variances. Indeed, we could derive a version of Lemma~\ref{LemTwoVar} for the three-variable model, giving lower bounds on the edge strengths $b_{23}^2$ and $b_{13}^2$ that guarantee the accuracy of the unweighted squared $\ell_2$-loss; however, the conditions would be more complicated. It is interesting to note from our calculations in Appendix~\ref{App3DAG} that the gap between models accumulates according to the number of edge reversals from the misspecified model: Reversing the directions of edges $(2,3)$ and $(1,3)$ in succession leads to an additional term in the expressions for $\xi_1$ and $\xi_2$ in equations~\eqref{EqnXi1} and~\eqref{EqnXi2} below. We will revisit these observations in Section~\ref{SecGap}, where we define a version of the gap function rescaled by the number of nodes that differ in their parents sets.
\end{remark}

\subsection{Proof of Theorem~\ref{ThmScore}}
\label{SecThmScore}

First note that for a constant $\alpha > 0$, we have
\begin{equation*}
	\score_{\alpha\Omega}(B) = \frac{1}{\alpha} \cdot \score_\Omega(B),
\end{equation*}
so minimizing $\score_{\alpha\Omega}(B)$ is equivalent to minimizing $\score_\Omega(B)$. Hence, it is sufficient to prove the statement for $\alpha = 1$.

Recalling equation~\eqref{EqnCovs}, we may write
\begin{align*}
	\score_{\Omega_0}(B) & = \E_{B_0}[\|\Omega_0^{-1/2}(I-B)^T X\|_2^2] \\
	& = \tr\left[\Omega_0^{-1/2}(I-B)^T \cdot \cov_{B_0}[X] \cdot (I-B) \Omega_0^{-1/2}\right] \\
	& = \tr\left[\Omega_0^{-1/2}(I-B)^T \cdot (I-B_0)^{-T} \Omega_0 (I-B_0)^{-1} \cdot (I-B) \Omega_0^{-1/2}\right].
\end{align*}
Now note that
\begin{align*}
	(I-B) \Omega_0^{-1/2} & = \Omega_0^{-1/2} (I - \Omega_0^{1/2} B \Omega_0^{-1/2}) \defn \Omega_0^{-1/2} (I-\Btil), \\
	\Omega_0^{1/2} (I-B_0)^{-1} & = (I - \Omega_0^{1/2} B_0 \Omega_0^{-1/2})^{-1}\Omega_0^{1/2} \defn (I-\Btil_0)^{-1} \Omega_0^{1/2},
\end{align*}
where $\Btil, \Btil_0 \in \scriptU$. Hence, we may rewrite
\begin{align*}
	\score_{\Omega_0}(B) & = \tr\left[(I-\Btil)^T \Omega_0^{-1/2} \cdot \Omega_0^{1/2} (I-\Btil_0)^{-T} \cdot (I-\Btil_0)^{-1} \Omega_0^{1/2} \cdot \Omega_0^{-1/2} (I-\Btil)\right] \\
	& = \tr\left[(I-\Btil)^T (I-\Btil_0)^{-T} (I-\Btil_0)^{-1} (I-\Btil)\right] \\
	& = \tr\left[(I-\Btil)(I-\Btil)^T (I-\Btil_0)^{-T} (I-\Btil_0)^{-1}\right].
\end{align*}
Since $\Btil, \Btil_0 \in \scriptU$, the matrices $I-\Btil$ and $I-\Btil_0$ are both permutation similar to lower triangular matrices with 1's on the diagonal. Hence, Lemma~\ref{LemMinTrace} in Appendix~\ref{AppMatrix} implies
\begin{equation*}
	\score_{\Omega_0}(B) \ge p,
\end{equation*}
with equality if and only if
\begin{equation*}
	I - \Btil = I - \Btil_0,
\end{equation*}
or equivalently, $B = B_0$, as claimed.

\section{Consequences for statistical estimation}
\label{SecStat}

The population-level results in Theorems~\ref{ThmInvDAG} and~\ref{ThmScore} provide a natural avenue for estimating the DAG of a linear SEM from data. In this section, we outline how the true DAG may be estimated in the presence of fully-observed or systematically corrupted data. Our method is applicable also in the high-dimensional setting, assuming the moralized DAG is sufficiently sparse.

Our inference algorithm consists of two main components:
\begin{enumerate}
	\item Estimate the moralized DAG $\scriptM(G_0)$ using the inverse covariance matrix of $X$.
	\item Search through the space of DAGs consistent with $\scriptM(G_0)$, and find the DAG that minimizes $\score_\Omega(B)$.
\end{enumerate}

Theorem~\ref{ThmInvDAG} and Assumption~\ref{AsInvDAG} ensure that for almost every choice of autoregression matrix $B_0$, the support of the true inverse covariance matrix $\Theta_0$ exactly corresponds to the edge set of the moralized graph. Theorem~\ref{ThmScore} ensures that when the weight matrix $\Omega$ is chosen appropriately, $B_0$ will be the unique minimizer of $\score_\Omega(B)$.

\subsection{Fully-observed data}

We now present concrete statistical guarantees for the correctness of our algorithm in the usual setting when $\{x_i\}_{i=1}^n$ are fully-observed and i.i.d. Recall that a random variable $X$ is \emph{sub-Gaussian} with parameter $\sigma^2$ if
\begin{equation*}
	\E[\exp(\lambda(X - \E[X]))] \le \exp\left(\frac{\sigma^2 \lambda^2}{2}\right), \qquad \forall \lambda \in \real.
\end{equation*}
If $X \in \real^p$ is a random vector, it is sub-Gaussian with parameter $\sigma^2$ if $v^T X$ is a sub-Gaussian random variable with parameter $\sigma^2$ for all unit vectors $v \in \real^p$.

\subsubsection{Estimating the inverse covariance matrix}
\label{SecEstInverse}

We first consider the problem of inferring $\Theta_0$. Let
\begin{equation*}
	\Thetamin \defn \min_{j,k} \big\{|(\Theta_0)_{jk}|: (\Theta_0)_{jk} \neq 0\big\}
\end{equation*}
denote the magnitude of the minimum nonzero element of $\Theta_0$. We consider the following two scenarios:

\paragraph{Low-dimensional setting.} If $n \ge p$, the sample covariance matrix $\Sigmahat = \frac{1}{n} \sum_{i=1}^n x_i x_i^T$ is invertible, and we use the estimator
\begin{equation*}
	\Thetahat = (\Sigmahat)^{-1}.
\end{equation*}
We have the following lemma, which follows from standard bounds in random matrix theory:

\begin{lemma}
	\label{LemLowDim}
	Suppose the $x_i$'s are i.i.d.\ sub-Gaussian vectors with parameter $\sigma^2$. With probability at least $1 - c_1 \exp(-c_2 p)$, we have
	\begin{equation*}
		\|\Thetahat - \Theta_0\|_{\max} \le c_0 \sigma^2 \sqrt{\frac{p}{n}},
	\end{equation*}
	and thresholding $\Thetahat$ at level $\tau = c_0 \sigma^2 \sqrt{\frac{p}{n}}$ succeeds in recovering $\supp(\Theta_0)$, if $\Thetamin > 2\tau$.
\end{lemma}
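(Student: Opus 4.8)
The plan is to funnel everything through a single high-probability bound on $\opnorm{\Sigmahat - \Sigma_0}$ in operator norm, and then propagate that bound first to $\Thetahat - \Theta_0$ and finally to the entrywise maximum, after which the thresholding claim is immediate. Throughout I will use the implicit eigenvalue conditions on $\Sigma_0$ (bounded $\lambda_{\min}(\Sigma_0)$ and $\lambda_{\max}(\Sigma_0)$), into which the constant $c_0$ will absorb its dependence; note that sub-Gaussianity with parameter $\sigma^2$ already forces $\lambda_{\max}(\Sigma_0) \le \sigma^2$, since $v^T \Sigma_0 v = \var(v^T X) \le \sigma^2$ for every unit vector $v$.

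\emph{Step 1 (concentration of the sample covariance).} First I would invoke the standard non-asymptotic bound for the sample covariance of i.i.d.\ sub-Gaussian vectors: there exist universal constants such that, with probability at least $1 - c_1 \exp(-c_2 p)$,
\[
\opnorm{\Sigmahat - \Sigma_0} \le c\, \sigma^2 \left(\sqrt{\tfrac{p}{n}} + \tfrac{p}{n}\right) \le c'\, \sigma^2 \sqrt{\tfrac{p}{n}},
\]
where the final inequality uses the hypothesis $n \ge p$. The mechanism here is an $\epsilon$-net argument over the unit sphere combined with a sub-exponential tail bound on $v^T(\Sigmahat - \Sigma_0) v$ for each fixed unit vector $v$, which is where the only genuinely probabilistic work and the exponential-in-$p$ failure probability enter.

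\emph{Step 2 (from covariance to precision).} On the above event I would take $n$ large enough that $c'\sigma^2\sqrt{p/n} \le \tfrac{1}{2}\lambda_{\min}(\Sigma_0)$; Weyl's inequality then gives $\lambda_{\min}(\Sigmahat) \ge \tfrac{1}{2}\lambda_{\min}(\Sigma_0) > 0$, so $\Sigmahat$ is invertible with $\opnorm{\Sigmahat^{-1}} \le 2/\lambda_{\min}(\Sigma_0)$. Applying the resolvent identity
\[
\Thetahat - \Theta_0 = \Sigmahat^{-1} - \Sigma_0^{-1} = -\,\Sigmahat^{-1}(\Sigmahat - \Sigma_0)\Sigma_0^{-1}
\]
together with submultiplicativity of the operator norm yields
\[
\opnorm{\Thetahat - \Theta_0} \le \opnorm{\Sigmahat^{-1}}\,\opnorm{\Sigmahat - \Sigma_0}\,\opnorm{\Sigma_0^{-1}} \le \frac{2 c'}{\lambda_{\min}(\Sigma_0)^2}\,\sigma^2 \sqrt{\tfrac{p}{n}},
\]
and I would set $c_0 = 2c'/\lambda_{\min}(\Sigma_0)^2$.

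\emph{Step 3 (entrywise bound and thresholding).} Since $\|A\|_{\max} \le \opnorm{A}$ for any matrix $A$ (because $|A_{jk}| = |e_j^T A e_k| \le \opnorm{A}$), Step 2 immediately gives $\|\Thetahat - \Theta_0\|_{\max} \le c_0 \sigma^2 \sqrt{p/n} = \tau$. For support recovery, on the same event every off-support pair satisfies $|\Thetahat_{jk}| = |\Thetahat_{jk} - (\Theta_0)_{jk}| \le \tau$ and is thresholded to zero, whereas every on-support pair satisfies $|\Thetahat_{jk}| \ge |(\Theta_0)_{jk}| - \tau \ge \Thetamin - \tau > \tau$ under the assumption $\Thetamin > 2\tau$, and hence survives. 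Thus thresholding at level $\tau$ recovers $\supp(\Theta_0)$ exactly. The main obstacle is the interaction between Steps 1 and 2: the precision-matrix bound is only meaningful once $\Sigmahat$ is guaranteed to be well-conditioned, which forces the operator-norm deviation to be controlled relative to $\lambda_{\min}(\Sigma_0)$ and is precisely why $c_0$ must depend on the extreme eigenvalues of $\Sigma_0$ rather than being universal.
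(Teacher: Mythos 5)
Your proof is correct and follows essentially the same route as the paper: operator-norm concentration of the sample covariance for sub-Gaussian vectors, propagation to the inverse, the bound $\|\Thetahat - \Theta_0\|_{\max} \le \opnorm{\Thetahat - \Theta_0}_2$, and the standard thresholding argument. The only cosmetic difference is that you control the inverse via the one-step resolvent identity $\Sigmahat^{-1} - \Sigma_0^{-1} = -\Sigmahat^{-1}(\Sigmahat - \Sigma_0)\Sigma_0^{-1}$ with Weyl's inequality, whereas the paper's Lemma~\ref{LemSubGSpec} runs the Neumann-series expansion; both yield the same $2\opnorm{\Sigma_0^{-1}}_2^2\opnorm{\Sigmahat-\Sigma_0}_2$-type bound under the same implicit well-conditioning requirement.
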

For the proof, see Appendix~\ref{AppLowDim}. Here, we use to $\|\cdot\|_{\max}$ denote the elementwise $\ell_\infty$-norm of a matrix.

\paragraph{High-dimensional setting.} If $p > n$, we assume each row of the true inverse covariance matrix $\Theta_0$ is $d$-sparse. Then we use the graphical Lasso:
\begin{equation}
	\label{EqnGraphLasso}
	\Thetahat \in \arg\min_{\Theta \succeq 0} \left\{\tr(\Theta \Sigmahat) - \log\det(\Theta) + \lambda \sum_{j \neq k} |\Theta_{jk}|\right\}.
\end{equation}
Standard results~\citep{RavEtal11} establish the statistical consistency of the graphical Lasso~\eqref{EqnGraphLasso} as an estimator for the inverse covariance matrix in the setting of sub-Gaussian observations; consequently, we omit the proof of the following lemma.
\begin{lemma}
	\label{LemGLasso}
	Suppose the $x_i$'s are i.i.d.\ sub-Gaussian vectors with parameter $\sigma^2$. Suppose the sample size satisfies $n \ge Cd \log p$. With probability at least $1 - c_1 \exp(-c_2 \log p)$, we have
	\begin{equation*}
		\|\Thetahat - \Theta_0\|_{\max} \le c_0 \sigma^2 \sqrt{\frac{\log p}{n}},
	\end{equation*}
	and thresholding $\Thetahat$ at level $\tau = c_0 \sigma^2 \sqrt{\frac{\log p}{n}}$ succeeds in recovering $\supp(\Theta_0)$, if $\Thetamin > 2 \tau$.
\end{lemma}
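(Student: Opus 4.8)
The plan is to combine a standard elementwise concentration bound for the sample covariance of sub-Gaussian vectors with the established deviation guarantees for the graphical Lasso, and then to deduce exact support recovery from the resulting bound by an elementary thresholding argument. The lemma is essentially a black-box application of~\cite{RavEtal11}, so most of the work is in verifying that the sub-Gaussian assumption supplies the inputs their analysis requires.

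First I would control the sample covariance in the elementwise $\ell_\infty$-norm. Since each $x_i$ is sub-Gaussian with parameter $\sigma^2$, every entry of $\Sigmahat - \Sigma_0$ (where $\Sigma_0 = \Theta_0^{-1}$) is a centered sub-exponential random variable, being an average of products of sub-Gaussian coordinates. A Bernstein-type tail bound gives $\mprob(|\Sigmahat_{jk} - (\Sigma_0)_{jk}| \ge t) \le c_1 \exp(-c_2 n t^2 / \sigma^4)$ for $t$ in the relevant range, and a union bound over the $O(p^2)$ entries yields
\[
\|\Sigmahat - \Sigma_0\|_{\max} \le c\, \sigma^2 \sqrt{\tfrac{\log p}{n}}
\]
with probability at least $1 - c_1 \exp(-c_2 \log p)$. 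This is exactly the tail behavior demanded as input by the graphical Lasso analysis.

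Second, I would invoke the deviation bounds of~\cite{RavEtal11}. Choosing the regularization parameter $\lambda \asymp \sigma^2 \sqrt{\tfrac{\log p}{n}}$ so that it dominates the sampling error from the previous step, and under the incoherence (mutual irrepresentability) condition on $\Theta_0$ together with the $d$-sparsity of its rows, their main theorem guarantees $\|\Thetahat - \Theta_0\|_{\max} \le c_0 \sigma^2 \sqrt{\tfrac{\log p}{n}}$ on the same high-probability event. The sample-size requirement $n \ge Cd \log p$ is precisely what makes the sampling error small relative to the condition-number and incoherence constants appearing in their bounds, with the factor $d$ entering through those constants.

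Finally, I would establish support recovery by thresholding at level $\tau = c_0 \sigma^2 \sqrt{\tfrac{\log p}{n}}$. On the event above, for any $(j,k)$ with $(\Theta_0)_{jk} = 0$ we have $|\Thetahat_{jk}| \le \tau$, so the entry is zeroed out, while for any $(j,k)$ with $(\Theta_0)_{jk} \ne 0$ the hypothesis $\Thetamin > 2\tau$ gives $|\Thetahat_{jk}| \ge \Thetamin - \tau > \tau$, so the entry survives; hence thresholding recovers $\supp(\Theta_0)$ exactly. The only genuinely new content is this thresholding step, which is routine. The main technical obstacle is instead bookkeeping: matching the sub-Gaussian tail of the first step to the precise tail condition assumed in~\cite{RavEtal11}, and tracking how the sparsity $d$ propagates into the sample-size condition. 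Because we treat their graphical Lasso guarantee as a black box, no primal-dual witness construction needs to be reproduced.
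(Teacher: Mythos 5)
Your proposal is correct and takes essentially the same route as the paper: the paper explicitly omits the proof of this lemma, deferring entirely to the standard graphical Lasso guarantees of \cite{RavEtal11} for sub-Gaussian observations, which is precisely the black-box invocation you carry out, supplemented by the routine covariance concentration and thresholding steps. If anything, you are more careful than the paper in flagging that the cited theorem also requires an incoherence (irrepresentability) condition on $\Theta_0$, an assumption the lemma statement leaves implicit.
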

Alternatively, we may perform nodewise regression with the ordinary Lasso~\citep{MeiBuh06} to recover the support of $\Theta_0$, with similar rates for statistical consistency.

\subsubsection{Scoring candidate DAGs}

Moving on to the second step of the algorithm, we need to estimate the score functions $\score_\Omega(B)$ of candidate DAGs and choose the minimally scoring candidate. In this section, we focus on methods for estimating an empirical version of the score function and derive rates for statistical estimation under certain models. If the space of candidate DAGs is sufficiently small, we may evaluate the empirical score function for every candidate DAG and select the optimum. In Section~\ref{SecComputation}, we describe computationally efficient procedures based on dynamic programming to choose the optimal DAG when the candidate space is too large for naive search.

The input of our algorithm is the sparsely estimated inverse covariance matrix $\Thetahat$ from Section~\ref{SecEstInverse}. For a matrix $\Theta$, define the candidate neighborhood sets
\begin{equation*}
	N_\Theta(j) \defn \{k: k \neq j \text{ and } \Theta_{jk} \neq 0\}, \qquad \forall j,
\end{equation*}
and let
\begin{equation*}
	\scriptD_\Theta \defn \{G \in \scriptD: \Pa_G(j) \subseteq N_\Theta(j), \quad \forall j\}
\end{equation*}
denote the set of DAGs with skeleton contained in the graph defined by $\supp(\Theta)$. By Theorem~\ref{ThmInvDAG} and Assumption~\ref{AsInvDAG}, we have $G_0 \in \scriptD_{\Theta_0}$, so if $\supp(\Thetahat) \supseteq \supp(\Theta_0)$, which occurs with high probability under the conditions of Section~\ref{SecEstInverse}, it suffices to search over the reduced DAG space $\scriptD_{\Thetahat}$.

\begin{remark}
	In fact, we could reduce the search space even further to only include DAGs with moralized graph equal to the undirected graph defined by $\supp(\Theta)$. The dynamic programming algorithm to be described in Section~\ref{SecComputation} only requires as input a superset of the skeleton; for alternative versions of the dynamic programming algorithm taking as input a superset of the moralized graph, we would indeed restrict $\scriptD_\Theta$ to DAGs with the correct moral structure.
\end{remark}

We now consider an arbitrary $d$-sparse matrix $\Theta$, with $d \le n$, and take $G \in \scriptD_{\Theta}$. By Remark~\ref{RemScore}, we have
\begin{equation}
	\label{EqnDecompose}
	\score_\Omega(G) = \sum_{j=1}^p f_{\sigma_j}(\Pa_G(j)),
\end{equation}
where
\begin{equation*}
	f_{\sigma_j}(S) \defn \frac{1}{\sigma_j^2} \cdot \E[(x_j - b_j^T x_S)^2],
\end{equation*}
and $b_j^T x_S$ is the best linear predictor for $x_j$ regressed upon $x_S$. In order to estimate $\score_\Omega(G)$, we use the empirical functions
\begin{equation}
	\label{EqnFhatClean}
	\fhat_{\sigma_j}(S) \defn \frac{1}{\sigma_j^2} \cdot \frac{1}{n} \sum_{i=1}^n (x_{ij} - x_{i,S}^T \bhat_j)^2 = \frac{1}{\sigma_j^2} \cdot \frac{1}{n} \|X_j - X_S \bhat_j\|_2^2,
\end{equation}
where
\begin{equation*}
	\bhat_j \defn (X_S^T X_S)^{-1} X_S^T X_j
\end{equation*}
is the usual ordinary least squares solution for linear regression of $X_j$ upon $X_S$. We will take $S \subseteq N_{\Theta}(j)$, so since $|N_{\Theta}(j)| \le d \le n$, the matrix $X_S^TX_S$ is invertible w.h.p. The following lemma, proved in Appendix~\ref{AppScoreConc}, provides rates of convergence for the empirical score function:
\begin{lemma}
	\label{LemScoreConc}
	Suppose the $x_i$'s are i.i.d.\ sub-Gaussian vectors with parameter $\sigma^2$. Suppose $d \le n$ is a parameter such that $|N_{\Theta}(j)| \le d$ for all $j$. Then
	\begin{equation}
		\label{EqnScoreConc}
		|\fhat_{\sigma_j}(S) - f_{\sigma_j}(S)| \le \frac{c_0 \sigma^2}{\sigma_j^2} \sqrt{\frac{\log p}{n}}, \qquad \forall j \text{ and } S \subseteq N_{\Theta}(j),
	\end{equation}
	with probability at least $1 - c_1 \exp(-c_2 \log p)$.
\end{lemma}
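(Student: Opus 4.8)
The plan is to reduce the lemma to a single high-probability deviation bound on the sample covariance matrix and then convert that bound \emph{deterministically} into control on each score, so that the exponentially many subsets $S$ never enter the probabilistic part of the argument. Writing $\Sigmahat = \frac{1}{n}\sum_{i=1}^n x_i x_i^T$, the first step is to establish the elementwise bound $\|\Sigmahat - \Sigma\|_{\max} \le c_0' \sigma^2 \sqrt{\log p / n}$ with probability at least $1 - c_1 \exp(-c_2 \log p)$. This is the workhorse estimate: each entry $\Sigmahat_{jk} - \Sigma_{jk}$ is an average of $n$ i.i.d.\ mean-zero variables $x_{ij}x_{ik} - \E[x_j x_k]$, which are sub-exponential because products of sub-Gaussians are sub-exponential, so a Bernstein inequality gives per-entry concentration at the rate $\sigma^2 \sqrt{\log p/n}$, and a union bound over the $p^2$ entries yields the stated failure probability.

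The second step rewrites both scores as conditional variances (Schur complements) of the two covariance matrices and exploits their variational form. Since $b_j = \Sigma_{SS}^{-1}\Sigma_{Sj}$ is the population best linear predictor and $\bhat_j = \Sigmahat_{SS}^{-1}\Sigmahat_{Sj}$ its empirical analogue,
\[
\sigma_j^2 f_{\sigma_j}(S) = \min_b\, \big(\Sigma_{jj} - 2b^T\Sigma_{Sj} + b^T\Sigma_{SS}b\big) =: \min_b Q(b),
\]
and likewise $\sigma_j^2 \fhat_{\sigma_j}(S) = \min_b Q_n(b)$ with $Q_n(b) = \Sigmahat_{jj} - 2b^T\Sigmahat_{Sj} + b^T\Sigmahat_{SS}b$. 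Using that $b_j$ minimizes $Q$ while $\bhat_j$ minimizes $Q_n$, a standard sandwich gives
\[
Q_n(\bhat_j) - Q(\bhat_j) \;\le\; \sigma_j^2\big(\fhat_{\sigma_j}(S) - f_{\sigma_j}(S)\big) \;\le\; Q_n(b_j) - Q(b_j),
\]
so it suffices to bound $|Q_n(b) - Q(b)| = |\Delta_{jj} - 2b^T\Delta_{Sj} + b^T\Delta_{SS}b|$, where $\Delta = \Sigmahat - \Sigma$, at the two vectors $b \in \{b_j, \bhat_j\}$.

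The third step bounds these quadratic deviations by combining the elementwise bound from step one with the eigenvalue conditions on $\Sigma$. The eigenvalue lower bound yields $\|b_j\|_2^2 \le \Sigma_{jj}/\lambda_{\min}(\Sigma) \le \lambda_{\max}(\Sigma)/\lambda_{\min}(\Sigma)$ \emph{uniformly} in $S$ (using $b_j^T\Sigma_{SS}b_j \le \Sigma_{jj}$), and an analogous argument controls $\|\bhat_j\|_2$ once $\Sigmahat_{SS}$ is shown to be well-conditioned; each of the three terms is then controlled through $\|\Delta\|_{\max}$ together with these norm bounds.

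The main obstacle I anticipate is precisely this control of the empirical regression vector $\bhat_j$: one must guarantee that every $d\times d$ principal submatrix $\Sigmahat_{SS}$ is uniformly well-conditioned, i.e., a lower bound on $\lambda_{\min}(\Sigmahat_{SS})$ holding simultaneously over all subsets $S$ with $|S|\le d$, so that $\bhat_j$ cannot blow up. I would obtain this from operator-norm concentration of sub-Gaussian sample covariances restricted to $d$ coordinates combined with a union bound over subsets, and the delicate point is to keep the resulting dependence on the neighborhood size $d$ from degrading the target rate $\sqrt{\log p/n}$; this is where the hypothesis $d \le n$ and the eigenvalue conditions must be used. Once $\bhat_j$ is under control, the remaining estimates reduce to routine applications of the elementwise bound from step one.
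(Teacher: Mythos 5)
Your sandwich inequality
\begin{equation*}
Q_n(\bhat_j) - Q(\bhat_j) \;\le\; \sigma_j^2\big(\fhat_{\sigma_j}(S) - f_{\sigma_j}(S)\big) \;\le\; Q_n(b_j) - Q(b_j)
\end{equation*}
is correct, and it is a genuinely different reduction from the paper's, which instead uses the exact least-squares algebra $X_j - X_S\bhat_j = \big(I - X_S(X_S^TX_S)^{-1}X_S^T\big)E_j$ and then concentrates $\tfrac{1}{n}\|E_j\|_2^2$ around its mean. However, there is a genuine quantitative gap in your third step. Converting the elementwise bound $\|\Sigmahat - \Sigma\|_{\max} \lesssim \sigma^2\sqrt{\log p/n}$ into control of the quadratic form $b^T\Delta_{SS}b$ (where $\Delta = \Sigmahat - \Sigma$) necessarily costs a factor of $|S|$: the best generic inequality is $|b^T\Delta_{SS}b| \le \|b\|_1^2\,\|\Delta\|_{\max} \le d\,\|b\|_2^2\,\|\Delta\|_{\max}$, and this is tight (take $\Delta_{SS}$ with all entries equal to $\|\Delta\|_{\max}$ and $b$ proportional to the all-ones vector). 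So your argument, as written, proves a bound of order $\sigma^2 d\sqrt{\log p/n}/\sigma_j^2$, not the claimed rate. Since the lemma assumes only $d \le n$, the factor $d$ cannot be absorbed into the constant $c_0$; for instance, with $d \asymp \sqrt{n}$ your bound does not even tend to zero, while the claimed one does. The advertised advantage of your plan---that ``the exponentially many subsets $S$ never enter the probabilistic part of the argument''---is exactly what creates this loss: a single max-norm event is too weak to control all $d\times d$ principal submatrices in operator norm without paying dimension.

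The fix is to replace the max-norm workhorse by spectral-norm concentration of submatrices, $\opnorm{\Sigmahat_{S'S'} - \Sigma_{S'S'}}_2 \lesssim \sigma^2\big(\sqrt{d/n} + t\big)$ as in Lemma~\ref{LemSubGSpec}, combined with a union bound over the $p\,2^d$ pairs $(j,S)$ with $t \asymp \sqrt{(d+\log p)/n}$---i.e., precisely the union over subsets you hoped to avoid, and which you in any case already need in order to lower-bound $\lambda_{\min}(\Sigmahat_{SS})$ uniformly and keep $\bhat_j$ under control. With that substitution, your sandwich argument goes through and yields the rate $\sigma^2\sqrt{(d+\log p)/n}$ with failure probability $c_1\exp(-c_2(d + \log p))$. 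It is worth noting that this is also what the paper's own argument delivers when its union bounds are carried out fully (its residual vectors $E_j$ depend on $S$ as well as $j$, so the concentration step there should likewise be taken uniformly over the $p\,2^d$ pairs); the discrepancy with the displayed $\sqrt{\log p/n}$ rate disappears only when $d$ is treated as bounded or $d = O(\log p)$. So your high-level route is salvageable, and arguably cleaner than the paper's projection-matrix computation, but the specific idea of a purely deterministic conversion from one $\|\cdot\|_{\max}$ bound is the step that fails.
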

In particular, we have the following result, proved in Appendix~\ref{AppPGap}, which provides a sufficient condition for the empirical score functions to succeed in selecting the true DAG. Here,
\begin{equation}
	\label{EqnXiOmega}
	\xi_\Omega(\scriptD_\Theta) \defn \min_{G \in \scriptD_\Theta, G \not\supseteq G_0} \left\{\score_\Omega(G) - \score_\Omega(G_0)\right\}
\end{equation}
is the gap between $G_0$ and the next best DAG in $\scriptD_\Theta$. Note that the expression~\eqref{EqnXiOmega} is reminiscent of the expression~\eqref{EqnGapPrime} defined in Section~\ref{SecMisspec}, but we now allow $\Omega$ to be arbitrary.
\begin{lemma}
	\label{LemPGap}
	Suppose inequality~\eqref{EqnScoreConc} holds, and suppose
\begin{equation}
	\label{EqnPGap}
	c_0 \sigma^2 \sqrt{\frac{\log p}{n}} \cdot \sum_{j=1}^p \frac{1}{\sigma_j^2} < \frac{\xi_\Omega(\scriptD_\Theta)}{2}.
\end{equation}
Then
\begin{equation}
	\label{EqnGMin}
	\scorehat_\Omega(G_0) < \scorehat_\Omega(G), \qquad \forall G \in \scriptD_\Theta: G \not\supseteq G_0.
\end{equation}
\end{lemma}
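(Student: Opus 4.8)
The plan is to convert the population-level separation guaranteed by the gap $\xi_\Omega(\scriptD_\Theta)$ into an empirical separation, by uniformly controlling the deviation between $\scorehat_\Omega$ and $\score_\Omega$ over the reduced DAG space $\scriptD_\Theta$. Write $\Delta \defn c_0 \sigma^2 \sqrt{\frac{\log p}{n}} \cdot \sum_{j=1}^p \frac{1}{\sigma_j^2}$, so that the hypothesis~\eqref{EqnPGap} reads $\Delta < \xi_\Omega(\scriptD_\Theta)/2$. The argument is then a two-sided triangle-inequality sandwich: bound $|\scorehat_\Omega(G) - \score_\Omega(G)| \le \Delta$ for every $G \in \scriptD_\Theta$, and combine this with the strict population gap.

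First I would establish the uniform deviation bound. By the decomposition~\eqref{EqnDecompose}, both $\scorehat_\Omega(G) = \sum_{j=1}^p \fhat_{\sigma_j}(\Pa_G(j))$ and $\score_\Omega(G) = \sum_{j=1}^p f_{\sigma_j}(\Pa_G(j))$ split into $p$ node-wise terms. Since every $G \in \scriptD_\Theta$ satisfies $\Pa_G(j) \subseteq N_\Theta(j)$ for all $j$, each set $\Pa_G(j)$ is an admissible choice of $S$ in inequality~\eqref{EqnScoreConc}; crucially, the high-probability event in Lemma~\ref{LemScoreConc} already holds simultaneously over all $j$ and all $S \subseteq N_\Theta(j)$, so a single good event governs every candidate. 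Applying the triangle inequality term by term gives
\begin{equation*}
	|\scorehat_\Omega(G) - \score_\Omega(G)| \le \sum_{j=1}^p |\fhat_{\sigma_j}(\Pa_G(j)) - f_{\sigma_j}(\Pa_G(j))| \le \sum_{j=1}^p \frac{c_0 \sigma^2}{\sigma_j^2} \sqrt{\frac{\log p}{n}} = \Delta,
\end{equation*}
for every $G \in \scriptD_\Theta$. The same bound applies to $G_0$ itself, since $G_0 \in \scriptD_\Theta$ (its parent sets lie in $N_\Theta(j)$, which is the setting in which this lemma is invoked).

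Next I would combine this with the gap. Fix any $G \in \scriptD_\Theta$ with $G \not\supseteq G_0$. The definition~\eqref{EqnXiOmega} of $\xi_\Omega(\scriptD_\Theta)$ gives $\score_\Omega(G) - \score_\Omega(G_0) \ge \xi_\Omega(\scriptD_\Theta)$. Sandwiching the two empirical scores between their population counterparts,
\begin{equation*}
	\scorehat_\Omega(G) - \scorehat_\Omega(G_0) \ge \big(\score_\Omega(G) - \Delta\big) - \big(\score_\Omega(G_0) + \Delta\big) \ge \xi_\Omega(\scriptD_\Theta) - 2\Delta > 0,
\end{equation*}
where the final strict inequality is exactly the hypothesis $\Delta < \xi_\Omega(\scriptD_\Theta)/2$. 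This yields the claim~\eqref{EqnGMin}.

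There is no serious analytic obstacle here: the estimate is a routine uniform-deviation sandwich, and all of the probabilistic work has been isolated into Lemma~\ref{LemScoreConc}. The only point demanding care is that the per-coordinate bound~\eqref{EqnScoreConc} must hold \emph{simultaneously} across all nodes and all admissible parent sets; otherwise a union bound over the (exponentially many) DAGs in $\scriptD_\Theta$ would be required. This simultaneity is already built into the statement of inequality~\eqref{EqnScoreConc}, so the aggregated deviation $\Delta$ picks up only the benign factor $\sum_{j} \sigma_j^{-2}$ rather than any dependence on $|\scriptD_\Theta|$. One should also confirm that $G_0 \in \scriptD_\Theta$, so that $\scorehat_\Omega(G_0)$ is well defined and covered by the same good event.
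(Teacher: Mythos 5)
Your proposal is correct and follows essentially the same route as the paper's own proof: both use the node-wise decomposition~\eqref{EqnDecompose}, the triangle inequality, and the uniform (over all $j$ and $S \subseteq N_\Theta(j)$) bound~\eqref{EqnScoreConc} to control $|\scorehat_\Omega(G) - \score_\Omega(G)|$ by $\xi_\Omega(\scriptD_\Theta)/2$ simultaneously for all $G \in \scriptD_\Theta$, and then sandwich the empirical gap against the population gap. Your added remarks on the simultaneity of the good event and on needing $G_0 \in \scriptD_\Theta$ are accurate and consistent with the paper's implicit assumptions.
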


\begin{remark}
Lemma~\ref{LemPGap} does not explicitly assume that $\Omega$ is equal to $\Omega_0$, the true matrix of error variances. However, inequality~\eqref{EqnPGap} can only be satisfied when $\xi_\Omega(\scriptD_G) > 0$; hence, $\Omega$ should be chosen such that $G_0 = \arg\min_{G \in \scriptD_\Theta, G \not\supseteq G_0} \left\{\score_\Omega(G)\right\}$. As discussed in Section~\ref{SecMisspec}, this condition holds for a wider range of choices for $\Omega$.
\end{remark}

Note that the conclusion~\eqref{EqnGMin} in Lemma~\ref{LemPGap} is not quite the same as the condition
\begin{equation}
	\label{EqnG0Min}
	G_0 = \arg\min_{G \in \scriptD_\Theta, G \not\supseteq G_0} \left\{\scorehat_\Omega(G)\right\},
\end{equation}
which is what we would need for exact recovery of our score-minimizing algorithm. The issue is that $\score_\Omega(G)$ is equal for all $G \supseteq G_0$; however the empirical scores $\scorehat_\Omega(G)$ may differ among this class, so equation~\eqref{EqnG0Min} may not be satisfied. However, it is easily seen from the proof of Lemma~\ref{LemPGap} that in fact,
\begin{equation}
	\label{EqnGSet}
	\arg\min_{G \in \scriptD_\Theta} \left\{\scorehat_\Omega(G)\right\} \subseteq \{G \in \scriptD_\Theta: G \supseteq G_0\}.
\end{equation}
By applying a thresholding procedure to the empirical score minimizer $\Ghat \supseteq G_0$ selected by our algorithm, we could then recover the true $G_0$. In other words, since $\Pa_{G_0}(j) \subseteq \Pa_{\Ghat}(j)$ for each $j$, we could use standard sparse regression techniques to recover the parent set of each node in the true DAG.

To gain some intuition for the condition~\eqref{EqnPGap}, consider the case when $\sigma_j^2 = 1$ for all $j$. Then the condition becomes
	\begin{equation}
		\label{EqnPScaling}
		c_0 \sigma^2 \sqrt{\frac{\log p}{n}} < \frac{\xi(\scriptD_\Theta)}{2p}.
	\end{equation}
If $\xi(\scriptD_\Theta) = \Omega(1)$, which might be expected based on our calculations in Section~\ref{SecSmallDAGs}, we require $n \ge Cp^2 \log p$ in order to guarantee statistical consistency, which is not a truly high-dimensional result. On the other hand, if $\xi(\scriptD_\Theta) = \Omega(p)$, as is assumed in similar work on score-based DAG learning~\citep{vanBuh13, BuhEtal13}, our method is consistent provided $\frac{\log p}{n} \rightarrow 0$. In Section~\ref{SecGap}, we relax the condition~\eqref{EqnPScaling} to a slightly weaker condition that is more likely to hold in settings of interest.

\subsection{Weakening the gap condition}
\label{SecGap}

Motivated by our comments from the previous section, we establish a sufficient condition for statistical consistency that is slightly weaker than the condition~\eqref{EqnPGap}, which still guarantees that equation~\eqref{EqnGSet} holds.

For two DAGs $G, G' \in \scriptD$, define
\begin{equation*}
	H(G, G') \defn \{j: \Pa_G(j) \neq \Pa_{G'}(j)\}
\end{equation*}
to be the set of nodes on which the parent sets differ between graphs $G$ and $G'$, and define the ratio
\begin{equation*}
	\gamma_\Omega(G, G') \defn \frac{\score_{\Omega}(G) - \score_\Omega(G')}{|H(G, G')|},
\end{equation*}
a rescaled version of the gap between the score functions. Consider the following condition:
\begin{assumption}
	\label{AsXiPrime}
	There exists $\xi' > 0$ such that
	\begin{equation}
		\label{EqnXiPrime}
		\gamma_\Omega(G_0) \defn \min_{G \in \scriptD_\Theta, G \not\supseteq G_0} \left\{ \max_{G_1 \supseteq G_0} \left\{\gamma_\Omega(G, G_1)\right\} \right\} \ge \xi'.
	\end{equation}
\end{assumption}
Note that in addition to minimizing over DAGs in the class $\scriptD_\Theta$, the expression~\eqref{EqnXiPrime} defined in Assumption~\ref{AsXiPrime} takes an inner maximization over DAGs containing $G_0$. As established in Lemma~\ref{LemScoreDAG}, we have $\score_\Omega(G_1) = \score_\Omega(G_0)$ whenever $G_0 \subseteq G_1$. However, $|H(G,G_1)|$ may be appreciably different from $|H(G,G_0)|$, and we are only interested in computing the gap ratio between a DAG $G \not\supseteq G_0$ and the closest DAG containing $G_0$.

We then have the following result, proved in Appendix~\ref{AppXiPrime}:
\begin{lemma}
	\label{LemXiPrime}
	Under Assumption~\ref{AsXiPrime}, suppose
	\begin{equation}
		\label{EqnXiHalf}
		|\fhat_{\sigma_j}(S) - f_{\sigma_j}(S)| \le \frac{\xi'}{2}, \qquad \forall j \text{ and } S \subseteq N_\Theta(j).
	\end{equation}
	Then the containment~\eqref{EqnGSet} holds.
\end{lemma}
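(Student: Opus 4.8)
The plan is to exploit two structural facts: the score decomposes node-wise (equation~\eqref{EqnDecompose}), and by Lemma~\ref{LemScoreDAG} every DAG containing $G_0$ has the same population score as $G_0$. I would show that every $G \in \scriptD_\Theta$ with $G \not\supseteq G_0$ is beaten, in \emph{empirical} score, by some DAG that contains $G_0$ and still lies in $\scriptD_\Theta$; this is exactly what the containment~\eqref{EqnGSet} asserts. Fix such a $G$. Assumption~\ref{AsXiPrime} supplies a DAG $G_1 \supseteq G_0$ with $\gamma_\Omega(G, G_1) \ge \xi'$, and since $\score_\Omega(G_1) = \score_\Omega(G_0)$ this reads
\begin{equation*}
	\score_\Omega(G) - \score_\Omega(G_1) \ge \xi' \, |H(G, G_1)|.
\end{equation*}

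Next, I would prune $G_1$ into $\scriptD_\Theta$. The witness from Assumption~\ref{AsXiPrime} ranges over \emph{all} DAGs containing $G_0$, so its parent sets need not lie in the neighborhoods $N_\Theta(j)$, and the uniform deviation bound~\eqref{EqnXiHalf} — which only controls $\fhat_{\sigma_j}$ on subsets of $N_\Theta(j)$ — might not apply to them. I would fix this by forming $\widetilde{G}$ from $G_1$ by shrinking, at each $j \in H(G, G_1)$, the set $\Pa_{G_1}(j)$ down to $\Pa_{G_0}(j)$ and leaving the other parent sets untouched. Deleting edges cannot create a cycle, so $\widetilde{G}$ remains a DAG; at the untouched nodes $\Pa_{\widetilde{G}}(j) = \Pa_G(j) \supseteq \Pa_{G_0}(j)$ (the inclusion since $G_1 \supseteq G_0$) and at the pruned nodes $\Pa_{\widetilde{G}}(j) = \Pa_{G_0}(j)$, so $\widetilde{G} \supseteq G_0$; and $H(G, \widetilde{G}) \subseteq H(G, G_1)$. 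Crucially, every parent set of $\widetilde{G}$ is now a parent set of either $G$ or $G_0$, hence contained in $N_\Theta(j)$, so $\widetilde{G} \in \scriptD_\Theta$ and~\eqref{EqnXiHalf} applies to all of its parent sets. Because $\widetilde{G} \supseteq G_0$ carries the same score as $G_0$, the witness inequality survives as $\score_\Omega(G) - \score_\Omega(\widetilde{G}) \ge \xi' |H(G, \widetilde{G})|$.

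For the core estimate, I would use that both scores decompose node-wise and that $G$ and $\widetilde{G}$ agree off $H(G, \widetilde{G})$:
\begin{equation*}
	\scorehat_\Omega(G) - \scorehat_\Omega(\widetilde{G}) = \sum_{j \in H(G, \widetilde{G})} \big[\fhat_{\sigma_j}(\Pa_G(j)) - \fhat_{\sigma_j}(\Pa_{\widetilde{G}}(j))\big].
\end{equation*}
Controlling each of the two empirical terms per node by~\eqref{EqnXiHalf} (total slack $\xi'$ per differing node) and summing over the $|H(G,\widetilde{G})|$ terms gives
\begin{equation*}
	\scorehat_\Omega(G) - \scorehat_\Omega(\widetilde{G}) \ge \big[\score_\Omega(G) - \score_\Omega(\widetilde{G})\big] - \xi' |H(G, \widetilde{G})| \ge 0.
\end{equation*}
Thus $\scorehat_\Omega(\widetilde{G}) \le \scorehat_\Omega(G)$ with $\widetilde{G} \in \scriptD_\Theta$ and $\widetilde{G} \supseteq G_0$, which forces the empirical minimizer over $\scriptD_\Theta$ into $\{G \supseteq G_0\}$, i.e.~\eqref{EqnGSet}.

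The main obstacle is precisely the pruning step: the witness granted by Assumption~\ref{AsXiPrime} lives in the full DAG space, whereas the concentration guarantee~\eqref{EqnXiHalf} is only available on the neighborhoods $N_\Theta(j)$, and the construction of $\widetilde{G}$ is what reconciles them — it must simultaneously (i) preserve acyclicity, (ii) preserve containment of $G_0$, (iii) not enlarge $H$, and (iv) land in $\scriptD_\Theta$, all of which hold because shrinking to $\Pa_{G_0}$ at the differing nodes only removes edges. A secondary point is that the final bound is $\ge 0$ rather than strict: to exclude ties (so the containment holds for the \emph{entire} $\arg\min$, not merely its intersection with $\{G \supseteq G_0\}$), I would invoke that~\eqref{EqnXiHalf} holds with strict inequality on the high-probability event of Lemma~\ref{LemScoreConc}, which upgrades the estimate to $\scorehat_\Omega(G) > \scorehat_\Omega(\widetilde{G})$ since $|H(G,\widetilde{G})| \ge 1$.
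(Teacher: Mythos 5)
Your proposal follows the same skeleton as the paper's own proof: fix $G \in \scriptD_\Theta$ with $G \not\supseteq G_0$, invoke the witness $G_1 \supseteq G_0$ supplied by Assumption~\ref{AsXiPrime}, use the node-wise decomposition~\eqref{EqnDecompose} so that empirical and population score differences agree off $H$, and combine~\eqref{EqnXiHalf} with the gap-ratio inequality to rule $G$ out as an empirical minimizer. The genuine difference is your pruning step, and it is an improvement rather than a detour: the paper applies~\eqref{EqnXiHalf} directly to the parent sets $\Pa_{G_1}(j)$ and then concludes $G \not\in \arg\min_{G' \in \scriptD_\Theta}\{\scorehat_\Omega(G')\}$ from $\scorehat_\Omega(G) > \scorehat_\Omega(G_1)$. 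Both steps are unjustified when $G_1 \notin \scriptD_\Theta$, which Assumption~\ref{AsXiPrime} permits, since its inner maximization ranges over \emph{all} DAGs containing $G_0$: the bound~\eqref{EqnXiHalf} says nothing about $\fhat_{\sigma_j}(\Pa_{G_1}(j))$ when $\Pa_{G_1}(j) \not\subseteq N_\Theta(j)$, and beating a DAG outside $\scriptD_\Theta$ does not expel $G$ from the arg min over $\scriptD_\Theta$. Your $\widetilde{G}$ (shrink to $\Pa_{G_0}(j)$ on $H(G,G_1)$, keep $\Pa_G(j)$ elsewhere) stays acyclic, contains $G_0$, lies in $\scriptD_\Theta$, does not enlarge $H$, and preserves the witness inequality via Lemma~\ref{LemScoreDAG}; it repairs exactly this gap in the paper's argument.

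A second discrepancy is the constant. The paper's inequality~\eqref{EqnNuss} claims a bound of $|H(G,G_1)| \cdot \xi'/2$, but two applications of~\eqref{EqnXiHalf} per differing node (one for $\Pa_G(j)$, one for $\Pa_{G_1}(j)$) give only $|H(G,G_1)| \cdot \xi'$ — your accounting is the correct one. With the corrected constant, the paper's chain, like yours, terminates in the non-strict bound $\scorehat_\Omega(G) - \scorehat_\Omega(G_1) \ge 0$, so the tie problem you flag is real and is present (though masked) in the paper's own proof: with non-strict~\eqref{EqnXiHalf}, exact ties are not excluded and the containment~\eqref{EqnGSet}, which constrains \emph{every} empirical minimizer, does not follow. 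Your patch — strict inequality on the high-probability event of Lemma~\ref{LemScoreConc}, which is how the lemma is consumed in Corollary~\ref{CorScoreConc} — is legitimate; an equally clean fix is to state~\eqref{EqnXiHalf} with $\xi'/4$, after which either version of the argument yields strict inequality because $|H(G,\widetilde{G})| \ge 1$ and $\score_\Omega(G) - \score_\Omega(G_0) \ge \xi' > 0$.
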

Combining with Lemma~\ref{LemScoreConc}, we have the following corollary:
\begin{corollary}
	\label{CorScoreConc}
	Suppose the $x_i$'s are i.i.d.\ sub-Gaussian with parameter $\sigma^2$, and $|N_\Theta(j)| \le d$ for all $j$. Also suppose Assumption~\ref{AsXiPrime} holds. Then with probability at least $1 - c_1 \exp(-c_2 \log p)$, condition~\eqref{EqnGSet} is satisfied.
\end{corollary}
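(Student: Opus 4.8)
The plan is to combine the concentration bound of Lemma~\ref{LemScoreConc} with the deterministic containment result of Lemma~\ref{LemXiPrime}, so that the corollary is essentially a composition of two already-established facts. First I would invoke Lemma~\ref{LemScoreConc}: since the $x_i$'s are i.i.d.\ sub-Gaussian with parameter $\sigma^2$ and $|N_\Theta(j)| \le d$ for all $j$, with probability at least $1 - c_1 \exp(-c_2 \log p)$ the uniform deviation bound~\eqref{EqnScoreConc} holds, namely
\[
	|\fhat_{\sigma_j}(S) - f_{\sigma_j}(S)| \le \frac{c_0 \sigma^2}{\sigma_j^2}\sqrt{\frac{\log p}{n}}
\]
simultaneously over all $j$ and all $S \subseteq N_\Theta(j)$. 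Call this event $\mathcal{E}$.

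Next I would verify that on $\mathcal{E}$ the hypothesis~\eqref{EqnXiHalf} of Lemma~\ref{LemXiPrime} is met. Bounding $\sigma_j^2$ below by $\min_j \sigma_j^2$ uniformly, the right-hand side above is at most $\frac{c_0 \sigma^2}{\min_j \sigma_j^2}\sqrt{\frac{\log p}{n}}$ for every $j$. Hence, provided the sample size satisfies $n \ge \frac{4 c_0^2 \sigma^4 \log p}{(\xi')^2 (\min_j \sigma_j^2)^2}$, this common bound is at most $\xi'/2$, so that on $\mathcal{E}$ we have $|\fhat_{\sigma_j}(S) - f_{\sigma_j}(S)| \le \xi'/2$ for all $j$ and all $S \subseteq N_\Theta(j)$, which is precisely~\eqref{EqnXiHalf}.

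Finally, since Assumption~\ref{AsXiPrime} holds by hypothesis, I would apply Lemma~\ref{LemXiPrime} to conclude that on $\mathcal{E}$ the containment~\eqref{EqnGSet} is satisfied; because $\mathcal{E}$ occurs with probability at least $1 - c_1 \exp(-c_2 \log p)$, this completes the argument.

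I do not expect any genuine obstacle here, as the statement is a direct combination of Lemmas~\ref{LemScoreConc} and~\ref{LemXiPrime}. The only point requiring care is the passage from the $j$-dependent deviation bound of Lemma~\ref{LemScoreConc} to the uniform threshold $\xi'/2$ demanded by Lemma~\ref{LemXiPrime}: this forces a lower bound on $n$ scaling like $\sigma^4 \log p / \big((\xi')^2 (\min_j \sigma_j^2)^2\big)$, which is left implicit in the statement and should arguably be recorded explicitly. One should also keep in mind, as noted in the remark following Lemma~\ref{LemPGap}, that the weight matrix $\Omega$ must be chosen so that a positive $\xi'$ in~\eqref{EqnXiPrime} is attainable; Assumption~\ref{AsXiPrime} encodes exactly this requirement.
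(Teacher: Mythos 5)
Your proposal is correct and matches the paper's (implicit) proof exactly: the corollary is stated as a direct combination of Lemma~\ref{LemScoreConc} and Lemma~\ref{LemXiPrime}, which is precisely your argument. Your observation that passing from the bound~\eqref{EqnScoreConc} to the threshold $\xi'/2$ in~\eqref{EqnXiHalf} requires a sample-size condition of order $n \gtrsim \sigma^4 \log p / \bigl((\xi')^2 \min_j \sigma_j^4\bigr)$ is a valid point that the paper leaves implicit, and recording it explicitly strengthens the statement rather than revealing any flaw in your reasoning.
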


We now turn to the question of what values of $\xi'$ might be expected to give condition~\eqref{EqnXiPrime} for various DAGs. Certainly, we have
\begin{equation*}
	\gamma_\Omega(G,G') \ge \frac{\score_\Omega(G) - \score_\Omega(G')}{p},
\end{equation*}
so the condition holds when
\begin{equation*}
	p \cdot \xi' < \xi(\scriptD_\Theta).
\end{equation*}
However, for $\xi' = \order(\xi(\scriptD_\Theta)/p)$, Corollary~\ref{CorScoreConc} yields a scaling condition similar to inequality~\eqref{EqnPScaling}, which we wish to avoid. As motivated by our computations of the score functions for small DAGs (see Remark~\ref{RemGap} in Section~\ref{SecSmallDAGs}), the difference $\left\{\score_\Omega(G) - \score_\Omega(G_0)\right\}$ seems to increase linearly with the number of edge reversals needed to transform $G_0$ to $G$. Hence, we might expect $\gamma_\Omega(G,G_0)$ to remain roughly constant, rather than decreasing linearly with $p$. The following lemma verifies this intuition in a special case. For a review of junction tree terminology, see Appendix~\ref{AppJT}.
\begin{lemma}
	\label{LemSingleton}
	Suppose the moralized graph $\scriptM(G_0)$ admits a junction tree representation with only singleton separator sets. Let $C_1, \dots, C_k$ denote the maximal cliques in $\scriptM(G_0)$, and let $\{G^\ell_0\}_{\ell=1}^k$ denote the corresponding restrictions of $G_0$ to the cliques. Then
	\begin{equation*}
		\gamma_\Omega(G_0) \ge \min_{1 \le \ell \le k} \gamma_\Omega(G^\ell_0),
	\end{equation*}
	where
	\begin{equation*}
		\gamma_\Omega(G^\ell_0) \defn \min_{G^\ell \in \scriptD_\Theta \mid_{C_\ell}, G^\ell \not\supseteq G^\ell_0} \left\{ \max_{G^\ell_1 \supseteq G^\ell_0} \left\{\frac{\score_\Omega(G^\ell) - \score_\Omega(G^\ell_1)}{|H(G^\ell, G^\ell_1)|}\right\} \right\}
	\end{equation*}	
	is the gap ratio computed over DAGs restricted to clique $C_\ell$ that are consistent with the moralized graph.
\end{lemma}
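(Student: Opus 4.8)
The plan is to exploit the decomposability of the score together with the junction-tree structure in order to reduce the global gap ratio to the per-clique gap ratios, and then to combine the clique-wise bounds by a mediant inequality. The first, clarifying, observation is a simplification of the inner maximization in~\eqref{EqnXiPrime}: by Lemma~\ref{LemScoreDAG}, $\score_\Omega(G_1) = \score_\Omega(G_0)$ for every $G_1 \supseteq G_0$, so the numerator $\score_\Omega(G) - \score_\Omega(G_1)$ does not depend on $G_1$, and hence
\[
\max_{G_1 \supseteq G_0} \gamma_\Omega(G, G_1) = \frac{\score_\Omega(G) - \score_\Omega(G_0)}{\min_{G_1 \supseteq G_0} |H(G, G_1)|}.
\]
Thus it suffices to fix an arbitrary $G \in \scriptD_\Theta$ with $G \not\supseteq G_0$, lower bound the numerator and upper bound the minimal $|H|$ in terms of clique-wise quantities, and then invoke the definition of $\gamma_\Omega(G^\ell_0)$.

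Next I would set up the clique decomposition. After rooting the junction tree, the singleton-separator hypothesis forces any two distinct maximal cliques to meet in at most one vertex, so each directed edge of $G_0$ lies in a unique clique, and since $\{j\} \cup \Pa_{G_0}(j)$ is a clique of $\scriptM(G_0)$, the whole parent set $\Pa_{G_0}(j)$ lies in a single clique $C_{\ell_0(j)}$. Writing $G^\ell \defn G|_{C_\ell}$ for the induced subgraph, the hypothesis $G \not\supseteq G_0$ means some edge of $G_0$ is missing from $G$, so at least one clique satisfies $G^\ell \not\supseteq G^\ell_0$. I would also record the local analogue of Lemma~\ref{LemScoreDAG}: because $\epsilon_j \condind (X_1,\dots,X_{j-1})$ and $\Omega_0$ is diagonal, the marginal law of $X_{C_\ell}$ is itself a linear SEM with DAG $G^\ell_0$, so $\score_\Omega(G^\ell_1) = \score_\Omega(G^\ell_0)$ for every within-clique $G^\ell_1 \supseteq G^\ell_0$, and the local gap simplifies to $\gamma_\Omega(G^\ell_0) = \min_{G^\ell \not\supseteq G^\ell_0}\big(\score_\Omega(G^\ell) - \score_\Omega(G^\ell_0)\big)\big/\min_{G^\ell_1 \supseteq G^\ell_0}|H(G^\ell, G^\ell_1)|$.

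The two core estimates are then $\score_\Omega(G) - \score_\Omega(G_0) \ge \sum_\ell \big(\score_\Omega(G^\ell) - \score_\Omega(G^\ell_0)\big)$ and $\min_{G_1\supseteq G_0}|H(G,G_1)| \le \sum_\ell \min_{G^\ell_1\supseteq G^\ell_0}|H(G^\ell,G^\ell_1)|$. The second is the easier one: gluing the per-clique minimizers $G^\ell_1$ along the shared separators produces a DAG $G_1 \supseteq G_0$ (no directed cycle can cross a singleton cut-vertex), and because each node's parent set splits as a disjoint union over the cliques containing it, $H(G,G_1)$ is contained in $\bigcup_\ell H(G^\ell, G^\ell_1)$. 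The first estimate is the main obstacle. Node by node the two sides agree except at separator vertices $s$, where after cancellation the claim reduces to a subadditivity statement for explained variance: writing $P = \Pa_G(s)$ and $P_i = P \cap C_i$ over the cliques containing $s$, I must show $e(P) \le \sum_i e(P_i)$, where $e(\cdot)$ is the variance of $X_s$ explained by a linear regression. Here the singleton separator is essential: the groups $P_i$ lie in distinct components of $\scriptM(G_0)\setminus\{s\}$, so $X_{P_i}\condind X_{P_j}\mid X_s$, which forces the block structure $\Sigma_{P_iP_j} = \Sigma_{P_i s}\,\Sigma_{ss}^{-1}\,\Sigma_{sP_j}$ for $i \neq j$. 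A Sherman--Morrison computation then writes $e(P) = g\big(\sum_i w_i\big)$ and $e(P_i) = g(w_i)$ for nonnegative scalars $w_i$ and the concave, increasing map $g(t) = \Sigma_{ss}\,t/(\Sigma_{ss}+t)$, and the inequality follows from subadditivity of any such $g$ with $g(0)=0$.

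Finally I would combine the two estimates. Setting $\Delta_\ell \defn \score_\Omega(G^\ell) - \score_\Omega(G^\ell_0) \ge 0$ and $h_\ell \defn \min_{G^\ell_1\supseteq G^\ell_0}|H(G^\ell,G^\ell_1)|$, the cliques with $G^\ell \supseteq G^\ell_0$ have $\Delta_\ell = h_\ell = 0$ and drop out, while every surviving clique has $h_\ell \ge 1$, $\Delta_\ell > 0$, and $\Delta_\ell/h_\ell \ge \gamma_\Omega(G^\ell_0)$ by definition of the local gap. The mediant inequality then gives
\[
\max_{G_1 \supseteq G_0}\gamma_\Omega(G,G_1) \;\ge\; \frac{\sum_\ell \Delta_\ell}{\sum_\ell h_\ell} \;\ge\; \min_{\ell:\,h_\ell \ge 1} \frac{\Delta_\ell}{h_\ell} \;\ge\; \min_{1\le\ell\le k}\gamma_\Omega(G^\ell_0),
\]
and taking the minimum over $G \not\supseteq G_0$ yields the claim. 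The delicate points to verify carefully are that the assembled $G_1$ is genuinely acyclic and that the separator cancellation in the numerator is exact; the genuinely hard inequality is the variance subadditivity $e(P)\le\sum_i e(P_i)$, which is where the singleton-separator assumption does all the work.
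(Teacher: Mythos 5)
Your proposal is correct, and while it shares the paper's overall skeleton (restrict to cliques, glue per-clique graphs into a $G_1 \supseteq G_0$, finish with a mediant inequality), it diverges at the key technical step in a way that is actually \emph{more} careful than the paper's own proof. The paper first establishes that the marginal of $X$ over each clique is again a linear SEM (its Lemma~\ref{LemCliques} --- the fact you ``record'' in one line; it does require the singleton-separator structure and a short argument, so it should be proved, not just asserted), and then claims the clique-wise decomposition \emph{with equality}, $\score_\Omega(G) = \sum_{\ell} \score_\Omega(G^\ell) - \sum_r (m_r-1) f_{\sigma_{s_r}}(\emptyset)$, on the grounds that separator nodes of any $G \in \scriptD_\Theta$ have all their parents inside one clique. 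Under the paper's skeleton-based definition of $\scriptD_\Theta$ that claim is false --- a $v$-structure at a cut vertex is allowed --- and the equality genuinely fails there. Your one-sided bound $\score_\Omega(G) - \score_\Omega(G_0) \ge \sum_\ell \bigl(\score_\Omega(G^\ell) - \score_\Omega(G^\ell_0)\bigr)$, reduced at each separator to the explained-variance subadditivity $e(P) \le \sum_i e(P_i)$ and proved via Sherman--Morrison plus subadditivity of $g(t) = \Sigma_{ss} t / (\Sigma_{ss} + t)$, is exactly what is needed, goes in the favorable direction, and covers the case the paper glosses over. Two further points in your favor: your glued $G_1$ is indeed acyclic (a directed cycle would have to re-enter a leaf clique of the visited subtree through its single separator vertex, producing a within-clique cycle), and choosing the per-clique $|H|$-minimizers for $G^\ell_1$, rather than $G^\ell_0$ as the paper does, makes the final appeal to the definition of $\gamma_\Omega(G^\ell_0)$ airtight, since the inner maximum there is attained precisely at the $|H|$-minimizer.

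One justification does need repair, although the conclusion it supports is true. You derive the block identity $\Sigma_{P_iP_j} = \Sigma_{P_i s}\Sigma_{ss}^{-1}\Sigma_{sP_j}$ from the conditional independence $X_{P_i} \condind X_{P_j} \mid X_s$. For non-Gaussian variables this implication fails: conditional independence only gives $\cov(X_{P_i}, X_{P_j}) = \cov\bigl(\E[X_{P_i} \mid X_s], \E[X_{P_j} \mid X_s]\bigr)$, and the conditional means need not be linear in $X_s$, so no purely second-moment identity follows (take $X_s$ symmetric and both groups equal to $X_s^2$ plus independent noise: the left side is positive while the right side of your identity is zero). The correct route is algebraic rather than probabilistic: by Theorem~\ref{ThmInvDAG}, $\supp(\Theta_0)$ is contained in the edge set of $\scriptM(G_0)$, and $\{s\}$ separates $P_i$ from $P_j$ in that graph; the zero pattern of the positive definite matrix $\Theta_0$ together with this separation forces the stated identity for $\Sigma = \Theta_0^{-1}$ --- for instance, apply the Gaussian global Markov property to the Gaussian distribution with covariance $\Sigma$, noting that the resulting identity is a statement about $\Sigma$ alone. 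With that substitution, your proof is complete.
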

The proof is contained in Appendix~\ref{AppSingleton}.

We might expect the gap ratio $\gamma_\Omega(G^\ell_0)$ to be a function of the size of the clique. In particular, if the treewidth of $\scriptM(G_0)$ is bounded by $w$ and we have $\gamma_\Omega(G^\ell_0) \ge \xi_w$ for all $\ell$, Lemma~\ref{LemSingleton} implies that
\begin{equation*}
	\gamma_\Omega(G_0) \ge \xi_w,
\end{equation*}
and we only need the parameter $\xi'$ appearing in Assumption~\ref{AsXiPrime} to be larger than $\xi_w$, rather than scaling as the inverse of $p$. We expect a version of Lemma~\ref{LemSingleton} to hold for graphs with bounded treewidth even when the separator sets have larger cardinality, but a full generalization of Lemma~\ref{LemSingleton} and a more accurate characterization of $\gamma_\Omega(G_0)$ for arbitrary graphs is beyond the scope of this paper.

\subsection{Systematically corrupted data}

We now describe how our algorithm for DAG structure estimation in linear SEMs may be extended easily to accommodate systematically corrupted data. This refers to the setting where we observe noisy surrogates $\{z_i\}_{i=1}^n$ in place of $\{x_i\}_{i=1}^n$. Two common examples include the following:
\begin{itemize}
	\item[(a)] \emph{Additive noise.} We have $z_i = x_i + w_i$, where $w_i \condind x_i$ is additive noise with known covariance $\Sigma_w$.
	\item[(b)] \emph{Missing data.} This is one instance of the more general setting of multiplicative noise. For each $1 \le j \le p$, and independently over coordinates, we have
	\begin{equation*}
		z_{ij} =
		\begin{cases} x_{ij}, & \text{with probability} \quad 1-\alpha, \\
		\star, & \text{with probability} \quad \alpha,
		\end{cases}
	\end{equation*}
	where the missing data probability $\alpha$ is either estimated or known.
\end{itemize}

We again divide our discussion into two parts: estimating $\Theta_0$ and computing score functions based on corrupted data.

\subsubsection{Inverse covariance estimation}

Following the observation of~\cite{LohWai12}, the graphical Lasso~\eqref{EqnGraphLasso} may still be used to estimate the inverse covariance matrix $\Theta_0$ in the high-dimensional setting, where we plug in a suitable estimator $\Gamhat$ for the covariance matrix $\Sigma = \cov(x_i)$, based on the corrupted observations $\{z_i\}_{i=1}^n$. For instance, in the additive noise scenario, we may take
\begin{equation}
	\label{EqnAddGamma}
	\Gamhat = \frac{Z^TZ}{n} - \Sigma_w,
\end{equation}
and in the missing data setting, we may take
\begin{equation}
	\label{EqnMissGamma}
	\Gamhat = \frac{Z^TZ}{n} \odot M,
\end{equation}
where $\odot$ denotes the Hadamard product and $M$ is the matrix with diagonal entries equal to $\frac{1}{1-\alpha}$ and off-diagonal entries equal to $\frac{1}{(1-\alpha)^2}$.

Assuming conditions such as sub-Gaussianity, the output $\Thetahat$ of the modified graphical Lasso~\eqref{EqnGraphLasso} is statistically consistent under similar scaling as in the uncorrupted setting~\citep{LohWai12}. For instance, in the additive noise setting, where the $z_i$'s are sub-Gaussian with parameter $\sigma_z^2$, Lemma~\ref{LemGLasso} holds with $\sigma^2$ replaced by $\sigma_z^2$. Analogous results hold in the low-dimensional setting, when the expressions for $\Gamhat$ in equations~\eqref{EqnAddGamma} and~\eqref{EqnMissGamma} are invertible with high probability, and we may simply use $\Thetahat = (\Gamhat)^{-1}$.

\subsubsection{Computing DAG scores}

We now describe how to estimate score functions for DAGs based on corrupted data. By equation~\eqref{EqnDecompose}, this reduces to estimating
\begin{equation*}
	f_{\sigma_j}(S) = \frac{1}{\sigma_j^2} \cdot \E[(x_j - b_j^T x_S)^2],
\end{equation*}
for a subset $S \subseteq \{1, \dots, p\} \backslash \{j\}$, with $|S| \le n$. Note that
\begin{equation*}
	\sigma_j^2 \cdot f_{\sigma_j}(S) = \Sigma_{jj} - 2b_j^T \Sigma_{S,j} + b_j^T \Sigma_{SS} b_j = \Sigma_{jj} - \Sigma_{j,S} \Sigma_{SS}^{-1} \Sigma_{S,j},
\end{equation*}
since $b_j = \Sigma_{SS}^{-1} \Sigma_{S,j}$.

Let $\Gamhat$ be the estimator for $\Sigma$ based on corrupted data used in the graphical Lasso (e.g., equations~\eqref{EqnAddGamma} and~\eqref{EqnMissGamma}). We then use the estimator
\begin{equation}
	\label{EqnFhatGen}
	\ftil_{\sigma_j}(S) = \frac{1}{\sigma_j^2} \cdot \left(\Gamhat_{jj} - \Gamhat_{j,S} \Gamhat_{SS}^{-1} \Gamhat_{S,j}\right).
\end{equation}
Note in particular that equation~\eqref{EqnFhatGen} reduces to expression~\eqref{EqnFhatClean} in the fully-observed setting. We establish consistency of the estimator in equation~\eqref{EqnFhatGen}, under the following deviation condition on $\Gamhat$:
\begin{equation}
	\label{EqnDev}
	\mprob\left(\opnorm{\Gamhat_{SS} - \Sigma_{SS}}_2 \ge \sigma^2 \left(\sqrt{\frac{d}{n}} + t\right)\right) \le c_1 \exp(-c_2 nt^2), \quad \text{ for any } |S| \le d.
\end{equation}
For instance, such a condition holds in the case of the sub-Gaussian additive noise model (cf.\ Lemma~\ref{LemSubGSpec} in Appendix~\ref{AppConcentrate}), with $\Gamhat$ given by equation~\eqref{EqnAddGamma}, where $\sigma^2 = \sigma_z^2$.

We have the following result, an extension of Lemma~\ref{LemScoreConc} applicable also for corrupted variables:
\begin{lemma}
	\label{LemNoisyScoreConc}
	Suppose $\Gamhat$ satisfies the deviation condition~\eqref{EqnDev}. Suppose $|N_\Theta(j)| \le d$ for all $j$. Then
	\begin{equation*}
		|\ftil_{\sigma_j}(S) - f_{\sigma_j}(S)| \le \frac{c_0 \sigma^2}{\sigma_j^2} \sqrt{\frac{\log p}{n}}, \qquad \forall j \text{ and } S \subseteq N_\Theta(j),
	\end{equation*}
	with probability at least $1 - c_1 \exp(-c_2 \log p)$.
\end{lemma}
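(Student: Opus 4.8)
The plan is to recognize that $\sigma_j^2 f_{\sigma_j}(S) = \Sigma_{jj} - \Sigma_{j,S}\Sigma_{SS}^{-1}\Sigma_{S,j}$ and $\sigma_j^2 \ftil_{\sigma_j}(S) = \Gamhat_{jj} - \Gamhat_{j,S}\Gamhat_{SS}^{-1}\Gamhat_{S,j}$ are precisely the Schur complements of $\Sigma_{SS}$ and $\Gamhat_{SS}$ in the $(S\cup\{j\})$-indexed blocks of $\Sigma$ and $\Gamhat$. Writing $T \defn S \cup \{j\}$, the whole argument then reduces to a deterministic matrix-perturbation bound: control $|\ftil_{\sigma_j}(S) - f_{\sigma_j}(S)|$ by a constant multiple of $\frac{1}{\sigma_j^2}\opnorm{\Gamhat_{TT} - \Sigma_{TT}}_2$, and then feed in the deviation condition~\eqref{EqnDev} together with a union bound. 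This mirrors the structure of the proof of Lemma~\ref{LemScoreConc}, with the sample covariance replaced throughout by the surrogate estimator $\Gamhat$.

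For the deterministic step, I would first observe that the operator norm of any principal or off-diagonal submatrix is dominated by that of the full matrix, so a single bound $\opnorm{\Gamhat_{TT} - \Sigma_{TT}}_2 \le \delta$ simultaneously controls $|\Gamhat_{jj} - \Sigma_{jj}|$, $\|\Gamhat_{j,S} - \Sigma_{j,S}\|_2$, and $\opnorm{\Gamhat_{SS} - \Sigma_{SS}}_2$, all by $\delta$. I would then telescope the difference of quadratic forms,
\begin{align*}
\Gamhat_{j,S}\Gamhat_{SS}^{-1}\Gamhat_{S,j} - \Sigma_{j,S}\Sigma_{SS}^{-1}\Sigma_{S,j} &= (\Gamhat_{j,S} - \Sigma_{j,S})\Gamhat_{SS}^{-1}\Gamhat_{S,j} \\
&\quad + \Sigma_{j,S}(\Gamhat_{SS}^{-1} - \Sigma_{SS}^{-1})\Gamhat_{S,j} + \Sigma_{j,S}\Sigma_{SS}^{-1}(\Gamhat_{S,j} - \Sigma_{S,j}),
\end{align*}
and use the resolvent identity $\Gamhat_{SS}^{-1} - \Sigma_{SS}^{-1} = -\Gamhat_{SS}^{-1}(\Gamhat_{SS} - \Sigma_{SS})\Sigma_{SS}^{-1}$. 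Each summand is then bounded by a product of $\delta$ with operator norms of $\Sigma_{SS}^{-1}$, $\Gamhat_{SS}^{-1}$ and Euclidean norms of $\Sigma_{j,S}$, $\Gamhat_{j,S}$. The eigenvalue conditions on $\Sigma$ supply the population factors, since $\lambda_{\min}(\Sigma_{SS}) \ge \lambda_{\min}(\Sigma) > 0$ and $\opnorm{\Sigma}_2 < \infty$ give uniformly bounded $\opnorm{\Sigma_{SS}^{-1}}_2$ and $\|\Sigma_{j,S}\|_2$; a Weyl perturbation argument then shows that on the event $\delta \le \tfrac{1}{2}\lambda_{\min}(\Sigma)$ the matrix $\Gamhat_{SS}$ is invertible with $\opnorm{\Gamhat_{SS}^{-1}}_2$ bounded by a constant. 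Combining, one obtains $|\sigma_j^2\ftil_{\sigma_j}(S) - \sigma_j^2 f_{\sigma_j}(S)| \le C\delta$ once $\delta$ is small enough.

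Finally, I would apply~\eqref{EqnDev} with $S$ replaced by $T = S \cup \{j\}$ (still of size at most $d+1$) to get $\delta = \opnorm{\Gamhat_{TT} - \Sigma_{TT}}_2 \le \sigma^2(\sqrt{d/n} + t)$ with probability $1 - c_1\exp(-c_2 n t^2)$, and union bound over the at most $p\cdot 2^d$ pairs $(j,S)$ with $S \subseteq N_\Theta(j)$. Choosing $t \asymp \sqrt{\log p / n}$ makes the total failure probability $c_1\exp(-c_2\log p)$ and, in the sparse regime where the $\sqrt{d/n}$ offset is absorbed into the rate (matching the convention of Lemma~\ref{LemScoreConc}), yields $\delta \le c_0\sigma^2\sqrt{\log p/n}$; dividing by $\sigma_j^2$ produces the claimed bound. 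I expect the main obstacle to be the inverse-perturbation control, namely certifying that $\Gamhat_{SS}^{-1}$ exists and is uniformly bounded across the exponentially many subsets $S$, which is exactly what forces the interplay between the eigenvalue lower bound on $\Sigma$, the smallness of $\delta$, and the union bound; everything else is routine algebra once this uniform invertibility is secured.
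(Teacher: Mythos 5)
Your proposal is correct and follows essentially the same route as the paper's proof: reduce to the Schur-complement difference, telescope the quadratic-form discrepancy into three terms, control the inverse perturbation $\Gamhat_{SS}^{-1}-\Sigma_{SS}^{-1}$ via the deviation condition~\eqref{EqnDev}, bound the cross terms through the augmented block $S\cup\{j\}$, and finish with a union bound over the $p\cdot 2^d$ pairs with $t \asymp \sqrt{\log p/n}$. Your resolvent-identity-plus-Weyl step is just a more explicit version of what the paper delegates to the Neumann-series argument of Lemma~\ref{LemSubGSpec}, and your remark about absorbing the $\sqrt{d/n}$ offset matches the paper's implicit convention, so no substantive difference remains.
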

The proof is contained in Appendix~\ref{AppNoisyScoreConc}. In particular, Corollary~\ref{CorScoreConc}, providing guarantees for statistical consistency, also holds.

\section{Computational considerations}
\label{SecComputation}

In practice, the main computational bottleneck in inferring the DAG structure comes from having to compute score functions over a large number of DAGs. The simplest approach of searching over all possible permutation orderings of indices gives rise to $p!$ candidate DAGs, which scales exponentially with $p$. In this section, we describe how the result of Theorem~\ref{ThmInvDAG} provides a general framework for achieving vast computational savings for finding the best-scoring DAG when data are generated from a linear SEM. We begin by reviewing existing methods, and describe how our results may be used in conjunction with dynamic programming to produce accurate and efficient DAG learning.

\subsection{Decomposable score functions}

Following the literature, we call a score function over DAGs \emph{decomposable} if it may be written as a sum of score functions over individual nodes, each of which is a function of only the node and its parents:
\begin{equation*}
	\score(G) = \sum_{j=1}^p \score_j(\Pa_G(j)).
\end{equation*}
Note that we allow the score functions to differ across nodes. Consistent with our earlier notation, the goal is to find the DAG $G \in \scriptD$ that minimizes $\score(G)$.

Some common examples of decomposable scores that are used for DAG inference include maximum likelihood, BDe, BIC, and AIC~\citep{Chi95}. By equation~\eqref{EqnDecompose}, the squared $\ell_2$-score is clearly decomposable, and it gives an example where $\score_j$ differs over nodes. Another interesting example is the nonparametric maximum likelihood, which extends the ordinary likelihood method for scoring DAGs~\citep{NowBuh13}.

Various recent results have focused on methods for optimizing a decomposable score function over the space of candidate DAGs in an efficient manner. Some methods include exhaustive search~\citep{SilMyl06}, greedy methods~\citep{Chi02}, and dynamic programming~\citep{OrdSze12, KorPar13}. We will focus here on a dynamic programming method that takes as input an undirected graph and outputs the best-scoring DAG with skeleton contained in the input graph.

\subsection{Dynamic programming}
\label{SecDynamic}

In this section, we detail a method due to~\cite{OrdSze12} that will be useful for our purposes. Given an input undirected graph $G_I$ and a decomposable score function, the dynamic programming algorithm finds a DAG with minimal score that has skeleton contained in $G_I$. Let $\{N_I(j)\}_{j=1}^p$ denote the neighborhood sets of $G_I$. The runtime of the dynamic programming algorithm is exponential in the treewidth $w$ of $G_I$; hence, the algorithm is only tractable for bounded-treewidth graphs.

The main steps of the dynamic programming algorithm are as follows. For a review of basic terminology of graph theory, including treewidth and tree decompositions, see Appendix~\ref{AppGraph}; for further details and a proof of correctness, see~\cite{OrdSze12}.
\begin{enumerate}
	\item Construct a \emph{tree decomposition} of $G_I$ with minimal treewidth.
	\item Construct a \emph{nice tree decomposition} of the graph. Let $\chi(t)$ denote the subset of $\{1, \dots, p\}$ associated to a node $t$ in the nice tree decomposition.
	\item Starting from the leaves of the nice tree decomposition up to the root, compute the \emph{record} for each node $t$. The record $\scriptR(t)$ is the set of tuples $(a,p,s)$ corresponding to minimal-scoring DAGs defined on the vertices $\chi^*(t)$ in the subtree attached to $t$, with skeleton contained in $G_I$. For each such DAG, $s$ is the score, $a$ lists the parent sets of vertices in $\chi(t)$, such that $a(v) \subseteq N_I(v)$ for each $v \in \chi(t)$, and $a(v)$ restricted to $\chi^*(t)$ agrees with the partial DAG; and $p$ lists the directed paths between vertices in $\chi(t)$. The records $\scriptR(t)$ may computed recursively over the nice tree decomposition as follows:
	\begin{itemize}
		\item \textbf{Join node:} Suppose $t$ has children $t_1$ and $t_2$. Then $\scriptR(t)$ is the union of tuples $(a,p,s)$ formed by tuples $(a_1, p_1, s_1) \in \scriptR(t_1)$ and $(a_2, p_2, s_2) \in \scriptR(t_2)$, where (1) $a = a_1 = a_2$; (2) $p$ is the transitive closure of $p_1 \cup p_2$; (3) $p$ contains no cycles; and (4) $s = s_1 + s_2$.
		\item \textbf{Introduce node:} Suppose $t$ is an introduce node with child $t'$, such that $\chi(t) = \chi(t') \cup \{v_0\}$. Then $\scriptR(t)$ is the set of tuples $(a,p,s)$ formed by pairs $P \subseteq N_I(v_0)$ and $(a',p',s') \in \scriptR(t')$, such that (1) $a(v_0) = P$; (2) for every $v \in \chi(t')$, we have $a(v) = a'(v)$; (3) $p$ is the transitive closure of $p' \cup \{(u,v_0): u \in P\} \cup \{(v_0, u): v_0 \in a'(u), u \in \chi(t')\}$; (4) $p$ contains no cycles; and (5) $s = s'$.
		\item \textbf{Forget node:} Suppose $t$ is a forget node with child $t'$, such that $\chi(t') = \chi(t) \cup \{v_0\}$. Then $\scriptR(t)$ is the set of tuples $(a,p,s)$ formed from tuples $(a',p',s') \in \scriptR(t')$, such that (1) $a(u) = a'(u), \; \forall u \in \chi(t)$; (2) $p = \{(u,v) \in p': u,v \in \chi(t)\}$; and (3) $s = s' + \score_{v_0}(a'(v_0))$.
	\end{itemize}
\end{enumerate}

Note that~\cite{KorPar13} present a variant of this dynamic programming method, also using a nice tree decomposition, which is applicable even for graphs with unbounded degree but bounded treewidth. They assume that the starting undirected graph $G_I$ is a superset of the moralized DAG. Their algorithm runs in time linear in $p$ and exponential in $w$. From a theoretical perspective, we are agnostic to both methods, since our results on statistical consistency of the graphical Lasso require the true moralized graph to have bounded degree. However, since $\supp(\Theta_0)$ exactly corresponds to the edge set of $\scriptM(G_0)$, the alternative method will also lead to correct recovery. In practice, the relative efficiency of the two dynamic programming algorithms will rely heavily on the structure of $\scriptM(G_0)$.

\subsection{Runtime}

We first review the runtime of various components of the dynamic programming algorithm described in Section~\ref{SecDynamic}. This is mentioned briefly in~\cite{KorPar13}, but we include details here for completeness. In our calculations, we assume the treewidth $w$ of $G$ is bounded and treat $w$ as a constant.

The first step involves constructing a tree decomposition of minimal treewidth $w$, which may be done in time $\order(p)$. The second step involves constructing a nice tree decomposition. Given a tree decomposition of width $w$, a nice tree decomposition with $\order(p)$ nodes and treewidth $w$ may be constructed in $\order(p)$ time (see Appendix~\ref{AppTree}). Finally, the third step involves computing records for nodes in the nice tree decomposition. We consider the three different types of nodes in succession. Note that
\begin{equation}
	\label{EqnRecord}
|\scriptR(t)| \le 2^{(w+1)(w+d)}, \qquad \forall t,
\end{equation}
where $d = \max_j |N_I(j)|$. This is because the number of choices of parent sets of any vertex in $\chi(t)$ is bounded by $2^d$, leading to a factor of $2^{d(w+1)}$, and the number of possible pairs that are connected by a path is bounded by $2^{(w+1)w}$.
\begin{itemize}
	\item If $t$ is a join node with children $t_1$ and $t_2$, we may compute $\scriptR(t)$ by comparing pairs of records in $\scriptR(t_1)$ and $\scriptR(t_2)$; by inequality~\eqref{EqnRecord}, this may be done in time $\order(2^{2(w+1)(w+d)})$.
	\item If $t$ is an introduce node with child $t'$, we may compute $\scriptR(t)$ by considering records in $\scriptR(t')$ and parent sets of the introduced node $v_0$. Since the number of choices for the latter is bounded by $2^d$, we conclude that $\scriptR(t)$ may be computed in time $\order(2^{(w+1)(w+d) + d})$.
	\item Clearly, if $t$ is a forget node, then $\scriptR(t)$ may be computed in time $\order(2^{(w+1)(w+d)})$.
\end{itemize}

Altogether, we conclude that all records of nodes in the nice tree decomposition may be computed in time $\order(p \cdot 2^{2(w+1)(w+d)})$. Combined with the graphical Lasso preprocessing step for estimating $\scriptM(G_0)$, this leads to an overall complexity of $\order(p^2)$. This may be compared to the runtime of other standard methods for causal inference, including the PC algorithm~\citep{SpiEtal00}, which has computational complexity $\order(p^w)$, and (direct) LiNGAM~\citep{ShiEtal06, ShiEtal11}, which requires time $\order(p^4)$. It has been noted that both the PC and LiNGAM algorithms may be expedited when prior knowledge about the DAG space is available, further highlighting the power of Theorem~\ref{ThmInvDAG} as a preprocessing step for any causal inference algorithm.


\section{Discussion}

We have provided a new framework for estimating the DAG corresponding to a linear SEM. We have shown that the inverse covariance matrix of linear SEMs always reflects the edge structure of the moralized graph, even in non-Gaussian settings, and the reverse statement also holds under a mild faithfulness assumption. Furthermore, we have shown that when the error variances are known up to close precision, a simple weighted squared $\ell_2$-loss may be used to select the correct DAG. As a corollary, we have established identifiability for the class of linear SEMs with error variances specified up to a constant multiple. We have proved that our methods are statistically consistent, under reasonable assumptions on the gap between the score of the true DAG and the next best DAG in the model class. A characterization of this gap parameter for various graphical structures is the topic of future work.

We have also shown how dynamic programming may be used to select the best-scoring DAG in an efficient manner, assuming the treewidth of the moralized graph is small. Our results relating the inverse covariance matrix to the moralized DAG provide a powerful method for reducing the DAG search space as a preprocessing step for dynamic programming, and are the first to provide rigorous guarantees for when the graphical Lasso may be used in non-Gaussian settings. Note that the dynamic programming algorithm only uses the information that the true DAG has skeleton lying in the input graph, and does \emph{not} incorporate any information about (a) the fact that the data comes from a linear SEM; or (b) the fact that the input graph exactly equals the moralized DAG. Intuitively, both types of information should place significant constraints on the restricted DAG space, leading to further speedups in the dynamic programming algorithm. Perhaps these restrictions would make it possible to establish a version of dynamic programming for DAGs where the moralized graph has bounded degree but large treewidth.

An important open question concerns scoring candidate DAGs when the diagonal matrix $\Omega_0$ of error variances is unknown. As we have seen, using the weighted squared $\ell_2$-loss to score DAGs may produce a graph that is far from the true DAG when $\Omega_0$ is misspecified. Alternatively, it would be useful to have a checkable condition that would allow us to verify whether a given matrix $\Omega$ will correctly select the true DAG, or to be able to select the true $\Omega_0$ from among a finite collection of candidate matrices.

\acks{We acknowledge all the members of the Seminar f\"{u}r Statistik for providing an immensely hospitable and fruitful environment when PL was visiting ETH, and the Forschungsinstitut f\"{u}r Mathematik at ETH Z\"{u}rich for financial support. We also thank Varun Jog for helpful discussions. PL was additionally supported by a Hertz Fellowship and an NSF Graduate Research Fellowship.}


\appendix

\section{Graph-theoretic concepts}
\label{AppGraph}

In this Appendix, we review some fundamental concepts in graph theory that we use in our exposition. We begin by discussing junction trees, and then move to the related notion of tree decompositions. Note that these are purely graph-theoretic operations that may be performed on an arbitrary undirected graph.

\subsection{Junction trees}
\label{AppJT}

We begin with the basic junction tree framework. For more details, see~\cite{Lau96} or~\cite{KolFri09}.

For an undirected graph $G = (V,E)$, a \emph{triangulation} is an augmented graph $\widetilde{G} = (V, \widetilde{E})$ that contains no chordless cycles of length greater than three. By classical graph theory, any triangulation $\widetilde{G}$ gives rise to a \emph{junction tree} representation of $G$, where nodes in the junction tree are subsets of $V$ corresponding to maximal cliques of $\widetilde{G}$, and the intersection of any two adjacent cliques $C_1$ and $C_2$ in the junction tree is referred to as a \emph{separator set} $S = C_1 \cap C_2$. Furthermore, any junction tree must satisfy the \emph{running intersection property}, meaning that for any two nodes in the junction tree, say corresponding to cliques $C_j$ and $C_k$, the intersection $C_j \cap C_k$ must belong to every separator set on the unique path between $C_j$ and $C_k$ in the junction tree. The \emph{treewidth} of $G$ is defined to be one less than the size of the largest clique in any triangulation $\widetilde{G}$ of $G$, minimized over all triangulations.

As a classic example, note that if $G$ is a tree, then $G$ is already triangulated, and the junction tree parallels the tree structure of $G$. The maximal cliques in the junction tree are equal to the edges of $G$ and the separator sets correspond to singleton vertices. The treewidth of $G$ is consequently equal to 1.

\subsection{Tree decompositions}
\label{AppTree}

We now highlight some basic concepts of tree decompositions and nice tree decompositions used in our dynamic programming framework. Our exposition follows~\cite{Klo94}.

Let $G = (V,E)$ be an undirected graph. A \emph{tree decomposition} of $G$ is a tree $T$ with node set $W$ such that each node $t \in W$ is associated with a subset $V_t \subseteq V$, and the following properties are satisfied:
\begin{itemize}
	\item[(a)] $\bigcup_{t \in T} V_t = V$;
	\item[(b)] for all $(u,v) \in E$, there exists a node $t \in W$ such that $u,v \in V_t$;
	\item[(c)] for each $v \in V$, the set of nodes $\{t: v \in V_t\}$ forms a subtree of $T$.
\end{itemize}
The \emph{width} of the tree decomposition is $\max_{t \in T} |V_t| - 1$. The \emph{treewidth} of $G$ is the minimal width of any tree decomposition of $G$; this quantity agrees with the treewidth defined in terms of junction trees in the previous section. If $G$ has bounded treewidth, a tree decomposition with minimum width may be constructed in time $\order(|V|)$ (cf.\ Chapter 15 of~\cite{Klo94}).

A \emph{nice tree decomposition} is rooted tree decomposition satisfying the following properties:
\begin{itemize}
	\item[(a)] every node has at most two children;
	\item[(b)] if a node $t$ has two children $r$ and $s$, then $V_t = V_r = V_s$;
	\item[(c)] if a node $t$ has one child $s$, then either
	\begin{itemize}
		\item[(i)] $|V_t| = |V_s| + 1$ and $V_s \subseteq V_t$, or
		\item[(ii)] $|V_s| = |V_t| + 1$ and $V_t \subseteq V_s$.
	\end{itemize}
\end{itemize}
Nodes of the form (b), (c)(i), and (c)(ii) are called \emph{join} nodes, \emph{introduce} nodes, and \emph{forget} nodes, respectively. Given a tree decomposition of $G$ with width $w$, a nice tree decomposition with width $w$ and at most $4|V|$ nodes may be computed in time $\order(|V|)$ (cf.\ Lemma 13.1.3 of~\cite{Klo94}).


\section{Matrix derivations}
\label{AppMatrix}

In this section, we present a few matrix results that are used to prove Theorem~\ref{ThmScore}.

Define a \emph{unit lower triangular (LT)} matrix to be a lower triangular matrix with 1's on the diagonal. Recall that matrices $A$ and $B$ are \emph{permutation similar} if there exists a permutation matrix $P$ such that $A = PBP^T$. Call a matrix \emph{permutation unit LT} if it is permutation similar to a unit lower triangular matrix. We have the following lemma:
\begin{lemma}
	\label{LemLDL}
	Suppose $A$ and $B$ are permutation unit LT matrices, and suppose $AA^T = BB^T$. Then $A = B$.
\end{lemma}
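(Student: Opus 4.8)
The plan is to argue by induction on the dimension $p$, exploiting the fact that the ``trivial rows'' of $A$ and $B$ are determined intrinsically by the common matrix $M \defn AA^T = BB^T$; this lets the induction proceed even though $A$ and $B$ may be triangular with respect to \emph{different} orderings.

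First I would record two elementary structural facts. Since $A$ is permutation similar to a unit lower triangular matrix $L$, writing $A_{ab} = L_{\pi^{-1}(a), \pi^{-1}(b)}$ for the corresponding permutation $\pi$ shows that every diagonal entry of $A$ equals $1$, and that the row of $A$ indexed by $\pi(1)$ (the source of the associated ordering) is exactly the standard basis vector $e_{\pi(1)}^T$. In particular $A$ always has at least one such trivial row, and the same holds for $B$. The crucial point is that $M_{ii} = 1 + \sum_{k \neq i} A_{ik}^2$, so $M_{ii} = 1$ holds precisely when row $i$ of $A$ is trivial; since the left-hand side depends only on $M$, the sets of trivial rows of $A$ and of $B$ coincide, and this set is nonempty. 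Fix an index $i_0$ in it.

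Next I would extract the column information. Because row $i_0$ of $A$ equals $e_{i_0}^T$, for every $j$ we have $M_{i_0 j} = \sum_k A_{i_0 k} A_{jk} = A_{j i_0}$, and identically $M_{i_0 j} = B_{j i_0}$. Hence column $i_0$ of $A$ coincides with column $i_0$ of $B$. Deleting row and column $i_0$ from $A$ and $B$ yields principal submatrices $A'$ and $B'$, each again permutation unit LT (a principal submatrix of a unit lower triangular matrix is unit lower triangular, so the restriction of $\pi$ to the remaining indices triangularizes $A'$). For indices $j, l \neq i_0$, splitting off the $k = i_0$ term gives $(A'A'^T)_{jl} = M_{jl} - A_{j i_0} A_{l i_0}$ and likewise $(B'B'^T)_{jl} = M_{jl} - B_{j i_0} B_{l i_0}$; the column equality just established forces these to be equal, so $A'A'^T = B'B'^T$.

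Finally, by the inductive hypothesis $A' = B'$, and combining this with the agreement of row $i_0$ (both $e_{i_0}^T$) and of column $i_0$ yields $A = B$; the base case $p = 1$ is immediate since the single entry is $1$. The main obstacle, and the reason a naive Cholesky-uniqueness argument does not apply directly, is that $A$ and $B$ need not be triangular in the same ordering, so one cannot simply form $B^{-1}A$ and argue that it is simultaneously orthogonal and unit triangular. The device that resolves this is precisely the observation that the trivial rows can be read off the diagonal of $M$ and are therefore common to both factorizations, supplying a shared pivot $i_0$ along which to peel off one dimension at a time. I would take care to verify the permutation-bookkeeping claim that deleting $i_0$ preserves the permutation unit LT property, which is the only nonroutine ingredient in the reduction.
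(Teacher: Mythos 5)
Your proof is correct, but it is organized differently from the paper's argument, and the difference is worth noting. The paper fixes orderings up front (taking $A$ unit lower triangular without loss of generality and choosing a permutation $P$ so that $C = PBP^T$ is unit lower triangular) and then proves the entrywise identity $(PAP^T)_{ij} = C_{ij}$ by a double induction: an outer induction on the row index $i$, an inner sub-induction on $j < i$ using the off-diagonal Gram identities, and a final appeal to the diagonal identity $\sum_k \widetilde{a}_{ik}^2 = m_{ii}$ to kill the entries with $j > i$. Your argument instead inducts on the dimension, and its key device --- that the trivial rows of $A$ and $B$ (rows equal to a standard basis vector) are exactly the indices $i$ with $(AA^T)_{ii} = 1$, hence are read off intrinsically from the common matrix $M = AA^T$ and shared by both factors --- appears in the paper only implicitly, in the base case of its induction. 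What your route buys is a cleaner resolution of the central difficulty you correctly identify (that $A$ and $B$ may be triangular with respect to different orderings): you never need to juggle two permutations simultaneously, since the pivot $i_0$ is canonical, the column identity $A_{j i_0} = M_{i_0 j} = B_{j i_0}$ is immediate, and the rank-one update $(A'A'^T)_{jl} = M_{jl} - A_{j i_0} A_{l i_0}$ hands the problem to the inductive hypothesis. What the paper's route buys is an explicit, fully constructive matching of entries within a single fixed ordering, at the cost of the $\pi$-bookkeeping and the nested sub-induction. Your one flagged verification --- that deleting the pivot row and column preserves the permutation unit LT property --- is indeed the only nonroutine bookkeeping step, and it holds because a principal submatrix of a unit lower triangular matrix, taken with the inherited ordering, is again unit lower triangular.
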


\begin{proof}
	Under the appropriate relabeling, we assume without loss of generality that $A$ is unit LT. There exists a permutation matrix $P$ such that $C \defn PBP^T$ is also unit LT. We have 
\begin{equation}
	\label{EqnLTeq}
	PAA^TP^T = CC^T.
\end{equation}

Let $\pi$ be the permutation on $\{1, \dots, n\}$ such that $P_{i,\pi(i)} = 1$ for all $i$, and $P$ has 0's everywhere else. Define the notation
\begin{equation*}
	\atil_{ij} \defn (PA)_{ij}, \qquad \text{and} \qquad m_{ij} \defn (CC^T)_{ij},
\end{equation*}
and let $\{c_{ij}\}$ denote the entries of $C$. We will make use of the following equalities, which follow from equation~\eqref{EqnLTeq} and the fact that $C$ is unit LT:
\begin{equation}
	\label{EqnMij1}
	\sum_k \atil_{ik} \atil_{jk} \; = \; m_{ij} \; = \; \sum_{k<j} c_{ik} c_{jk} + c_{ij}, \qquad \forall i > j,
\end{equation}
and
\begin{equation}
	\label{EqnMij2}
	\sum_k \atil_{ik}^2 \; = \; m_{ii} \; = \; 1 + \sum_{k < i} c_{ik}^2.
\end{equation}
We now derive the following equality:
\begin{equation}
	\label{EqnAentry}
	\atil_{i,\pi(j)} = c_{ij}, \qquad \forall i, j.
\end{equation}
Note that equation~\eqref{EqnAentry} implies $(PA)P^T = C$, from which it follows that $A = B$.

If $j = i$, we have $\atil_{i,\pi(i)} = 1$ trivially, since $A$ has 1's on the diagonal. For the remaining cases, we induct on $i$. When $i = 1$, we need to show that $\atil_{1,\pi(1)} = 1$ and all other entries in the first row are 0. By equation~\eqref{EqnMij2}, we have
\begin{equation*}
	\sum_k \atil_{1k}^2 = m_{11} = 1.
\end{equation*}
Since $\atil_{1,\pi(1)} = 1$, it is clear that $\atil_{1k} = 0$ for all $k \neq \pi(1)$, establishing the base case.

For the induction step, consider $i > 1$. We first show that $\atil_{i,\pi(j)} = c_{ij}$ for all $j < i$ by a sub-induction on $j$. For $j = 1$, we have by equation~\eqref{EqnMij1} and the base result for $i = 1$ that
\begin{equation*}
	\atil_{i, \pi(1)} = m_{i, 1} = c_{i, 1},
\end{equation*}
which is exactly what we want. For the sub-induction step, consider $1 < j < i$, and suppose $\atil_{i, \pi(\ell)} = c_{i\ell}$ for all $\ell < j$. Note that $\atil_{j, \pi(\ell)} = 0$ for all $\ell > j$ by the outer induction hypothesis. Hence, equation~\eqref{EqnMij1} and the fact that $\atil_{j, \pi(j)} = 1$ gives
\begin{equation}
	\label{EqnCookie}
	\sum_{\ell < j} \atil_{i, \pi(\ell)} \atil_{j, \pi(\ell)} + \atil_{i, \pi(j)} \; = \; m_{ij} \; = \; \sum_{k < j} c_{ik} c_{jk} + c_{ij}.
\end{equation}
Since also $\atil_{j, \pi(\ell)} = c_{j\ell}$ for $\ell < j$ by the outer induction hypothesis, equation~\eqref{EqnCookie} condenses to
\begin{equation*}
	\sum_{\ell < j} c_{i\ell} c_{j\ell} + \atil_{i, \pi(j)} = \sum_{k < j} c_{ik} c_{jk} + c_{ij},
\end{equation*}
from which it follows that $\atil_{i, \pi(j)} = c_{ij}$, as wanted. This completes the inner induction and shows that $\atil_{i, \pi(j)} = c_{ij}$, for all $j < i$. Finally, note that by equation~\eqref{EqnMij2}, we have
\begin{equation*}
	m_{ii} = \sum_k \atil_{ik}^2 = 1 + \sum_{j \neq i} \atil_{i, \pi(j)}^2 \ge 1 + \sum_{j < i} \atil_{i, \pi(j)}^2 = 1 + \sum_{j < i} c_{ij}^2 = m_{ii},
\end{equation*}
implying that we must have $\atil_{i, \pi(j)} = 0$, for all $j > i$. This establishes equation~\eqref{EqnAentry}.

\end{proof}

We also need the following known result (cf.\ Exercise 7.8.19 in~\cite{HorJoh90}). We include a proof for completeness.
\begin{lemma}
	\label{LemHorJoh}
	Suppose $A \in \real^{n \times n}$ is positive definite with $\det(A) = 1$. Then
	\begin{equation*}
		\min\{\tr(AB): B \succ 0 \text{ and } \det(B) = 1\} = n.
	\end{equation*}
\end{lemma}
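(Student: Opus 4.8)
The plan is to reduce this matrix optimization to a scalar inequality on the eigenvalues of the product $AB$, and then invoke the arithmetic-geometric mean inequality. The starting observation is that although $AB$ need not be symmetric, it is similar to the symmetric positive definite matrix $A^{1/2} B A^{1/2}$ via conjugation by $A^{1/2}$; hence $AB$ has $n$ positive real eigenvalues $\mu_1, \dots, \mu_n$. Since the trace is invariant under similarity and the determinant is both similarity-invariant and multiplicative, I would record the two identities $\tr(AB) = \sum_{i=1}^n \mu_i$ and $\prod_{i=1}^n \mu_i = \det(AB) = \det(A)\det(B) = 1$, using the constraints $\det(A) = \det(B) = 1$.

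Next I would apply the AM-GM inequality to the positive reals $\mu_1, \dots, \mu_n$:
\begin{equation*}
	\frac{1}{n}\sum_{i=1}^n \mu_i \; \ge \; \left(\prod_{i=1}^n \mu_i\right)^{1/n} = 1,
\end{equation*}
which immediately gives $\tr(AB) \ge n$ for every feasible $B$. To show the bound is attained, I would exhibit $B = A^{-1}$: this choice satisfies $B \succ 0$ and $\det(B) = \det(A)^{-1} = 1$, and yields $\tr(AB) = \tr(I) = n$. Together these establish that the minimum equals $n$.

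To round out the argument, I would also identify when equality holds: AM-GM is tight exactly when $\mu_1 = \cdots = \mu_n$, which combined with $\prod_i \mu_i = 1$ forces every $\mu_i = 1$; since $A^{1/2} B A^{1/2}$ is symmetric with all eigenvalues equal to $1$, it must equal the identity, whence $B = A^{-1}$ is in fact the unique minimizer. This extra observation is not strictly needed for the stated equality $=n$, but it makes the proof self-contained and clarifies the extremal configuration.

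The argument is short, and the only point requiring genuine care is the reduction in the first step: one must justify that the eigenvalues of $AB$ are real and positive, so that AM-GM legitimately applies, rather than attempting to work with the (possibly non-symmetric) matrix $AB$ directly. This is precisely where the similarity of $AB$ to $A^{1/2} B A^{1/2}$ is essential. I expect this to be the main---though modest---obstacle; everything downstream is a routine application of AM-GM together with a verification that $B = A^{-1}$ is feasible and extremal.
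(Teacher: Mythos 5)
Your proof is correct, but it follows a genuinely different route from the paper's. The paper diagonalizes $A = U\Lambda U^*$, writes $\tr(AB) = \sum_i \lambda_i b_{ii}$ where $b_{ii}$ are the diagonal entries of $U^*BU$, and then needs \emph{two} inequalities: AM-GM applied to the products $\lambda_i b_{ii}$, followed by Hadamard's inequality $\prod_i b_{ii} \ge \det(U^*BU) = 1$ to control the resulting geometric mean. You instead pass to the eigenvalues $\mu_1, \dots, \mu_n$ of the product $AB$ itself --- justified by the similarity $AB \sim A^{1/2}BA^{1/2}$, which makes them real and positive --- so that $\tr(AB) = \sum_i \mu_i$ and $\prod_i \mu_i = \det(A)\det(B) = 1$, and a single application of AM-GM finishes the bound. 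Your approach dispenses with Hadamard's inequality entirely, and it yields a cleaner equality analysis: all $\mu_i = 1$ forces $A^{1/2}BA^{1/2} = I$, hence $B = A^{-1}$ is the unique minimizer. That uniqueness statement is exactly what the paper needs downstream in its Lemma~\ref{LemMinTrace}, where it is instead extracted by tracking the equality cases of both Hadamard's inequality and AM-GM (a diagonal condition plus $b_{ii} = 1/\lambda_i$), so your route would in fact streamline that later argument as well. The paper's approach, for its part, requires only the spectral decomposition of $A$ and never introduces the matrix square root, which is a marginal economy of machinery but no real advantage here.
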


\begin{proof}
	Consider the singular value decomposition $A = U \Lambda U^*$, and note that $\tr(AB) = \tr(\Lambda(U^*BU))$. Denote $b_{ij} \defn (U^*BU)_{ij}$ and $\lambda_i \defn \Lambda_{ii}$. Then by the AM-GM inequality and Hadamard's inequality, we have
	\begin{equation*}
		\frac{1}{n} \cdot \tr(\Lambda(U^*BU)) \ge \left(\prod_i \lambda_i b_{ii}\right)^{1/n} = \left(\det(A) \cdot \prod_i b_{ii}\right)^{1/n} \ge (\det(U^*BU))^{1/n} = 1,
	\end{equation*}
implying the result.
\end{proof}

Building upon Lemmas~\ref{LemLDL} and~\ref{LemHorJoh}, we obtain the following result:
\begin{lemma}
	\label{LemMinTrace}
	Suppose $A$ and $B$ are $n \times n$ permutation unit LT matrices. Then
	\begin{equation}
		\label{EqnMinTrace}
		\min_B \tr(AA^T B^TB) \ge n,
	\end{equation}
	with equality achieved if and only if $B = A^{-1}$.
\end{lemma}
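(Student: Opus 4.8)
The plan is to reduce the minimization to Lemma~\ref{LemHorJoh} for the lower bound, and then invoke Lemma~\ref{LemLDL} to identify the minimizer. First I would observe that any permutation unit LT matrix has determinant $1$: the determinant is invariant under permutation similarity, since $\det(PCP^T) = \det(P)^2\det(C) = \det(C)$ as $\det(P) = \pm 1$, and a unit lower triangular matrix has determinant equal to the product of its unit diagonal entries. Consequently $AA^T$ and $B^TB$ are both positive definite with $\det(AA^T) = \det(B^TB) = 1$.

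With this in hand, applying Lemma~\ref{LemHorJoh} with $AA^T$ in the role of the fixed matrix and $B^TB$ in the role of the free matrix immediately yields
\begin{equation*}
	\tr(AA^T B^TB) \ge n
\end{equation*}
for every permutation unit LT matrix $B$, establishing inequality~\eqref{EqnMinTrace}. Checking that $B = A^{-1}$ attains the bound is a one-line computation: since $B^TB = A^{-T}A^{-1}$, we have $\tr(AA^T A^{-T}A^{-1}) = \tr\big(A(A^T A^{-T})A^{-1}\big) = \tr(AA^{-1}) = \tr(I) = n$.

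The substantive part is the uniqueness of the minimizer, which requires tracking the equality conditions inside the proof of Lemma~\ref{LemHorJoh}. Writing $AA^T = U\Lambda U^*$ and $\widetilde{N} \defn U^*(B^TB)U$, equality in the AM-GM step forces all products $\lambda_i \widetilde{N}_{ii}$ to be equal, while equality in Hadamard's inequality forces $\widetilde{N}$ to be diagonal. Combined with $\det\widetilde{N} = 1$ and $\prod_i \lambda_i = \det(AA^T) = 1$, these two conditions pin down $\widetilde{N} = \Lambda^{-1}$, whence $B^TB = U\Lambda^{-1}U^* = (AA^T)^{-1}$.

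Finally I would convert this identity into the hypothesis of Lemma~\ref{LemLDL}. Taking inverses of $B^TB = (AA^T)^{-1}$ gives $AA^T = B^{-1}B^{-T} = B^{-1}(B^{-1})^T$. Since the inverse of a unit lower triangular matrix is again unit lower triangular, $B^{-1}$ is permutation unit LT, so both $A$ and $B^{-1}$ satisfy the hypotheses of Lemma~\ref{LemLDL}; applying it gives $A = B^{-1}$, i.e.\ $B = A^{-1}$, as claimed. The main obstacle is the equality analysis in the third paragraph: the stated form of Lemma~\ref{LemHorJoh} only records the \emph{value} of the minimum, not its uniqueness, so I must reprove the equality case of the AM-GM/Hadamard chain in order to extract the identity $B^TB = (AA^T)^{-1}$ needed to feed into Lemma~\ref{LemLDL}.
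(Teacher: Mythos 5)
Your proof is correct and follows essentially the same route as the paper's: the lower bound via Lemma~\ref{LemHorJoh}, the equality analysis by re-examining the AM-GM/Hadamard steps to extract $B^TB = (AA^T)^{-1}$, and then Lemma~\ref{LemLDL} applied to $A$ and $B^{-1}$ to conclude $B = A^{-1}$. The only differences are that you spell out details the paper leaves implicit (determinant one for permutation unit LT matrices, that $B^{-1}$ is again permutation unit LT, and the direct verification that $B = A^{-1}$ attains the bound), which are all fine.
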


\begin{proof}
	Write $A' = AA^T$ and $B' = B^TB$, and note that since $\det(A) = \det(B) = 1$, we also have $\det(A') = \det(B') = 1$. Then inequality~\eqref{EqnMinTrace} holds by Lemma~\ref{LemHorJoh}.
	
To recover the conditions for equality, note that equality holds in Hadamard's inequality if and only if some $b_{ii} = 0$ or the matrix $U^*BU$ is diagonal. Note that the first case is not possible, since $U^*BU \succ 0$. In the second case, we see that in addition, we need $b_{ii} = \frac{1}{\lambda_i}$ for all $i$ in order to achieve equality in the AM-GM inequality. It follows that $U^*BU = \Lambda^{-1}$, so $AA^T = B^{-1} B^{-T}$.

Since $A$ and $B$ are permutation unit LT, Lemma~\ref{LemLDL} implies that the last equality can only hold when $B = A^{-1}$.
\end{proof}


\section{Proofs for population-level results}
\label{AppPop}

In this section, we provide proofs for the remaining results in Sections~\ref{SecCIG} and~\ref{SecFit}.

\subsection{Proof of Lemma~\ref{LemCovs}}
\label{AppCovs}

We first show that $\Omega$ is a diagonal matrix. Consider $j < k$; we will show that $\epsilon_j \condind \epsilon_k$, from which we conclude that
\begin{equation*}
	\E[\epsilon_j \epsilon_k] = \E[\epsilon_j] \cdot \E[\epsilon_k] = 0.
\end{equation*}
Indeed, we have $\epsilon_k \condind (X_1, \dots, X_{k-1})$ by assumption. Since $\epsilon_j = X_j - b_j^T X$ is a deterministic function of $(X_1, \dots, X_j)$, it follows that $\epsilon_k \condind \epsilon_j$, as claimed.

Turning to equations~\eqref{EqnInvElts} and~\eqref{EqnInvDiag}, note that since $B \in \scriptU$, the matrix $(I-B)$ is always invertible, and by equation~\eqref{EqnLinStruct}, we have
\begin{equation}
	\label{EqnCovs}
	\Sigma = (I-B)^{-T} \Omega (I-B)^{-1},
\end{equation}
and
\begin{equation}
	\Theta = \Sigma^{-1} = (I-B) \Omega^{-1} (I-B)^T.
\end{equation}
Then expanding and using the fact that $B$ is upper triangular and $\Omega$ is diagonal, we obtain equations~\eqref{EqnInvElts} and~\eqref{EqnInvDiag}.

\subsection{Proof of Lemma~\ref{LemScoreDAG}}
\label{AppScoreDAG}

Since $G_0 \subseteq G$, we have $\Pa_{G_0}(j) \subseteq \Pa_G(j)$, for each $j$. Furthermore, no element of $\Pa_G(j)$ may be a descendant of $j$ in $G_0$, since this would contradict the fact that $G$ contains no cycles. By the Markov property of $G_0$, we therefore have
\begin{equation*}
	X_j \condind X_{\Pa_G(j) \backslash \Pa_{G_0}(j)} \mid X_{\Pa_{G_0}(j)}.
\end{equation*}
Thus, the linear regression coefficients for $X_j$ regressed upon $X_{\Pa_G(j)}$ are simply the linear regression coefficients for $X_j$ regressed upon $X_{\Pa_{G_0}(j)}$ (and the remaining coefficients for $X_{\Pa_G(j) \backslash \Pa_{G_0}(j)}$ are zero). By Remark~\ref{RemScore}, we conclude that
\begin{equation*}
	B_0 = B_G = \arg\min_{B \in \scriptU_G} \{\score_{\Omega}(B)\},
\end{equation*}
and the uniqueness of $B_0$ follows from the uniqueness of $B_G$.

\subsection{Proof of Theorem~\ref{ThmRatio}}
\label{AppRatio}

From the decomposition~\eqref{EqnScore}, it is easy to see that for any $B \in \scriptU$, we have
\begin{equation}
	\label{EqnSandwich}
	a_{\min} \le \frac{\score_{\Omega_1}(B)}{\score_{\Omega_0}(B)} \le a_{\max},
\end{equation}
simply by comparing individual terms; e.g.,
\begin{align*}
	\frac{1}{(\Omega_1)_{jj}} \cdot \E[(X_j - b_j^T X)^2] & \le \max_j \left\{\frac{1/(\Omega_1)_{jj}}{1/(\Omega_0)_{jj}}\right\} \cdot \frac{1}{(\Omega_0)_{jj}} \cdot \E[(X_j - b_j^T X)^2] \\
	& = a_{\max} \left(\frac{1}{(\Omega_0)_{jj}} \cdot \E[(X_j - B_j^T X)^2]\right).
\end{align*}

Note that if $G \supseteq G_0$, then by Lemma~\ref{LemScoreDAG}, the matrix $B_0$ is the unique minimizer of $\score_{\Omega_1}(B)$ among the class $\scriptU_G$. Now consider $G \not\supseteq G_0$ and $B \in \scriptU_G$. We have
\begin{equation}
	\label{EqnXiBd}
	\left(1 + \frac{\xi}{p}\right) \cdot \score_{\Omega_0}(B_0) = \min_{G' \in \scriptD, \; G' \not\supseteq G_0} \{\score_{\Omega_0}(G')\} \le \score_{\Omega_0}(G) \le \score_{\Omega_0}(B),
\end{equation}
where we have used the definition of the gap~\eqref{EqnXi} and the fact that $\score_{\Omega_0}(B_0) = p$ by Theorem~\ref{ThmScore} in the first inequality. Hence,
\begin{equation*}
	\score_{\Omega_1}(B_0) \le a_{\max} \cdot \score_{\Omega_0}(B_0) \le \frac{a_{\max}}{1 + \xi/p} \cdot \score_{\Omega_0}(B) \le \frac{a_{\max}}{a_{\min}(1 + \xi/p)} \cdot \score_{\Omega_1}(B),
\end{equation*}
where the first and third inequalities use inequality~\eqref{EqnSandwich}, and the second inequality uses inequality~\eqref{EqnXiBd}. By the assumption~\eqref{EqnARatio}, it follows that
\begin{equation*}
	\score_{\Omega_1}(B_0) \le \score_{\Omega_1}(B),
\end{equation*}
as wanted. The statement regarding strict inequality is clear.

\subsection{Proof of Lemma~\ref{LemTwoVar}}
\label{AppTwoVar}

We first consider the case when $r \ge 1$. Then $\frac{a_{\max}}{a_{\min}} = r^2$, so combining Theorem~\ref{ThmRatio} with the expression~\eqref{Eqn2Gap}, we have the sufficient condition
\begin{equation*}
	r^2 \le 1 + \frac{b_0^4}{2(r^4 + b_0^2 r^2)}.
\end{equation*}
Rearranging gives
\begin{equation*}
	b_0^4 - 2r^2(r^2-1) b_0^2 - 2r^4(r^2-1) \ge 0,
\end{equation*}
which is equivalent to
\begin{equation*}
	b_0^2 \ge \frac{2r^2(r^2-1) + \sqrt{(4r^4(r^2-1)^2 + 8r^4(r^2-1))}}{2}.
\end{equation*}
Simplifying yields the desired expression.

If instead $r \le 1$, we have $\frac{a_{\max}}{a_{\min}} = \frac{1}{r^2}$, so the sufficient condition becomes
\begin{equation*}
	\frac{1}{r^2} \le 1 + \frac{b_0^4}{2(r^4 + b_0^2 r^2)},
\end{equation*}
which is equivalent to
\begin{equation*}
	b_0^4 - 2(1-r^2) b_0^2 - 2r^2(1-r^2) \ge 0,
\end{equation*}
or
\begin{equation*}
	b_0^2 \ge \frac{2(1-r)^2 + \sqrt{4(1-r^2)^2 + 8r^2(1-r^2)}}{2}.
\end{equation*}
Simplifying further yields the expression.

\subsection{Proof of Lemma~\ref{Lem3DAG}}
\label{App3DAG}

To compute the gap $\xi$, it is sufficient to consider the graphs in Figure~\ref{Fig3Alt}.
\begin{figure}
	\begin{center}
		\begin{tabular}{cccc}
			\includegraphics[width=3cm]{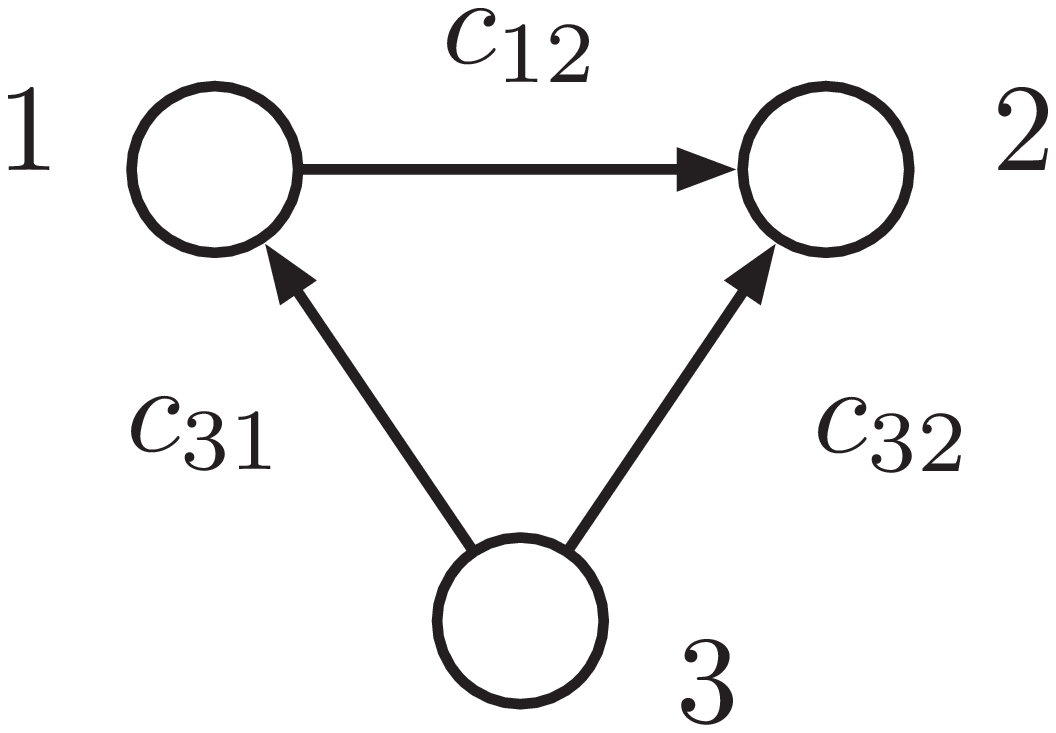} & \includegraphics[width=3cm]{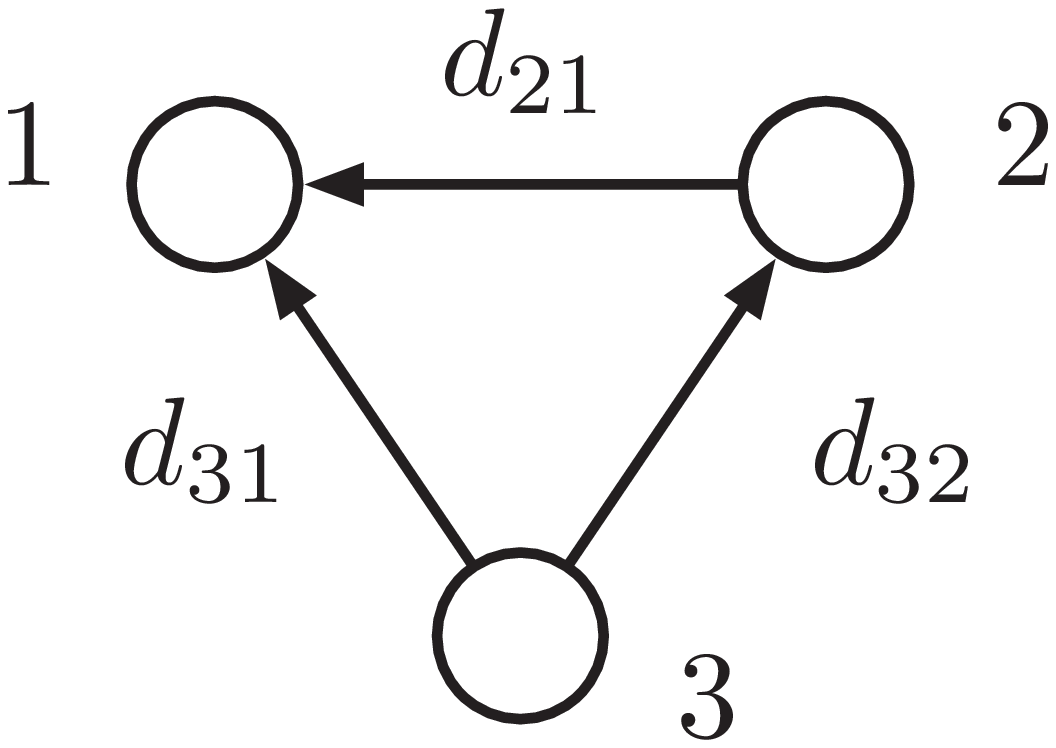} & \includegraphics[width=3cm]{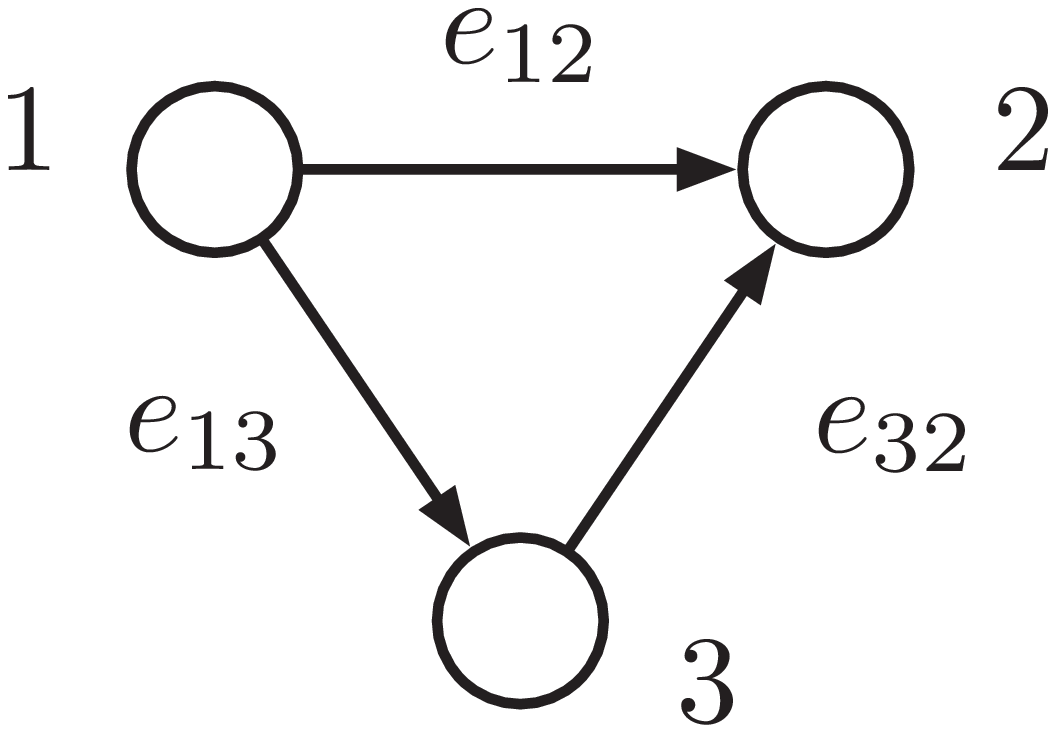} & \includegraphics[width=3cm]{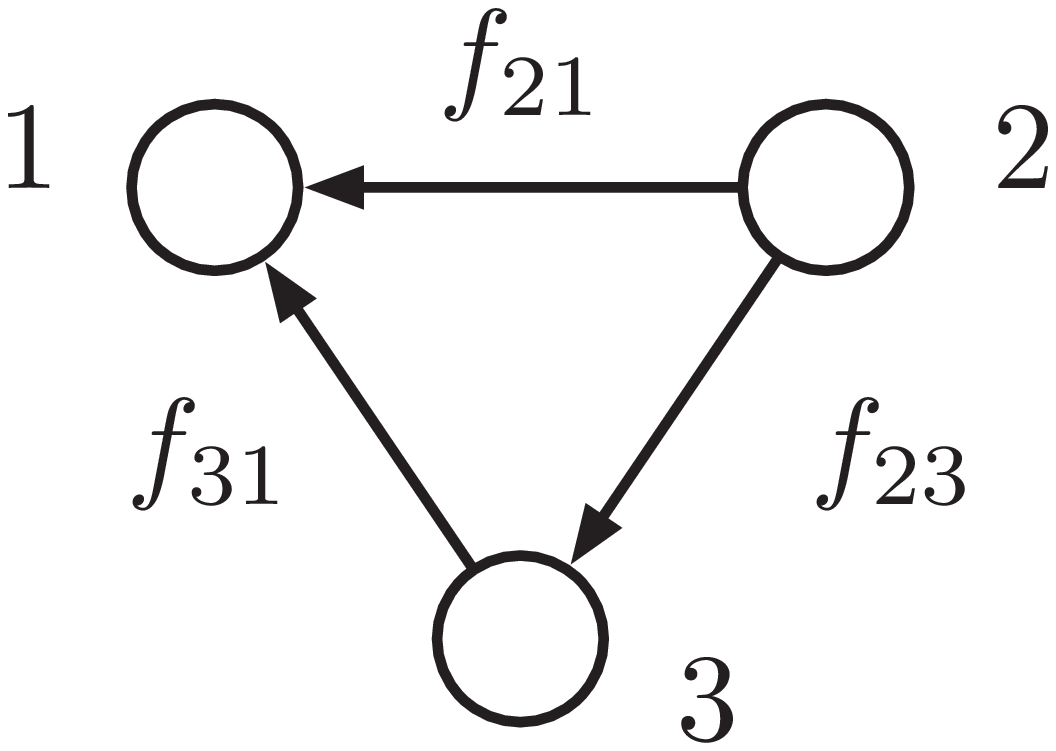} \\
			(a) & (b) & (c) & (d) \\
		\end{tabular}
	\end{center}
	\caption{Alternative DAGs.}
	\label{Fig3Alt}
\end{figure}
Indeed, we have $\score_{\Omega_0}(G) \le \score_{\Omega_0}(G')$ whenever $G' \subseteq G$, so we only need to consider maximal elements in the poset of DAGs not containing $G_0$. Consider the graphs given by autoregression matrices
\begin{equation*}
	C = \left(
	\begin{array}{ccc}
		0 & c_{12} & 0 \\
		0 & 0 & 0 \\
		c_{31} & c_{32} & 0
	\end{array}
	\right), \qquad E = \left(
	\begin{array}{ccc}
		0 & e_{12} & e_{13} \\
		0 & 0 & 0 \\
		0 & e_{32} & 0
	\end{array}
	\right),
\end{equation*}
corresponding to the DAGs in panels (a) and (c) of Figure~\ref{Fig3Alt}. A simple calculation shows that
\begin{multline*}
	\score_{\Omega_0}(C) = 3 + c_{31}^2 b_{13}^2 + c_{32}^2 b_{23}^2 + c_{31}^2 b_{23}^2 \frac{d_2^2}{d_1^2} + \left(c_{12} \frac{d_1}{d_2} + c_{32} b_{13} \frac{d_1}{d_2}\right)^2 \\
	+ \left(c_{31} \frac{d_3}{d_1} - b_{13} \frac{d_1}{d_3}\right)^2 + \left(c_{32} \frac{d_3}{d_2} - b_{23} \frac{d_2}{d_3}\right)^2,
\end{multline*}
which is minimized for
\begin{equation*}
	c_{12} = -b_{13} c_{32}, \qquad c_{31} = \frac{b_{13}}{\frac{d_3^2}{d_1^2} + b_{23}^2 \frac{d_2^2}{d_1^2} + b_{13}^2}, \qquad c_{32} = \frac{b_{23}}{\frac{d_3^2}{d_2^2} + b_{23}^2},
\end{equation*}
leading to
\begin{equation}
	\label{EqnXi1}
	\xi_1 = \min_{c_{12}, c_{31}, c_{32}} \{\score_{\Omega_0}(C) - \score_{\Omega_0}(B_0)\} = \frac{b_{23}^4}{\frac{d_3^4}{d_2^4} + b_{23}^2 \frac{d_3^2}{d_2^2}} + \frac{b_{13}^4 + b_{13}^2 b_{23}^2 \frac{d_2^2}{d_1^2}}{\frac{d_3^4}{d_1^4} + b_{13}^2 \frac{d_3^2}{d_1^2} + b_{23}^2 \frac{d_2^2 d_3^2}{d_1^4}}.
\end{equation}
Similarly, we may compute
\begin{equation*}
	\score_{\Omega_0}(E) = 3 + \left(e_{12} \frac{d_1}{d_2} + e_{32} b_{13} \frac{d_1}{d_2}\right)^2 + \left(e_{13} \frac{d_1}{d_3} - b_{13} \frac{d_1}{d_3}\right)^2 + \left(e_{32} \frac{d_3}{d_2} - b_{23} \frac{d_2}{d_3}\right)^2,
\end{equation*}
which is minimized for
\begin{equation*}
	e_{12} = -b_{13}, \qquad e_{13} = b_{13}, \qquad e_{32} = \frac{b_{23}}{\frac{d_3^2}{d_2^2} + b_{23}^2},
\end{equation*}
leading to
\begin{equation}
	\label{EqnXi2}
	\xi_2 = \min_{e_{12}, e_{13}, e_{32}} \{\score_{\Omega_0}(E) - \score_{\Omega_0}(B_0)\} = \frac{b_{23}^4}{\frac{d_3^4}{d_2^4} + b_{23}^2 \frac{d_3^2}{d_2^2}}.
\end{equation}
Finally, note that the graphs in panels (b) and (d) of Figure~\ref{Fig3Alt} are mirror images of the graphs in panels (a) and (c), respectively. Hence, we obtain
\begin{align*}
	\xi_3 & = \min_{d_{21}, d_{31}, d_{32}} \{\score_{\Omega_0}(D) - \score_{\Omega_0}(B_0) \} = \frac{b_{13}^4}{\frac{d_3^4}{d_1^4} + b_{13}^2 \frac{d_3^2}{d_1^2}} + \frac{b_{23}^4 + b_{13}^2 b_{23}^2 \frac{d_1^2}{d_2^2}}{\frac{d_3^4}{d_2^4} + b_{23}^2 \frac{d_3^2}{d_2^2} + b_{13}^2 \frac{d_1^2 d_3^2}{d_2^4}}, \\
	\xi_4 & = \min_{f_{21}, f_{23}, f_{31}} \{\score_{\Omega_0}(F) - \score_{\Omega_0}(B_0)\} = \frac{b_{13}^4}{\frac{d_3^4}{d_1^4} + b_{13}^2 \frac{d_3^2}{d_1^2}},
\end{align*}
simply by swapping the roles of nodes 1 and 2. Taking $\xi = \min\{\xi_1, \xi_2, \xi_3, \xi_4\}$ then yields the desired result.

\section{Proofs for statistical consistency}
\label{AppStat}

In this Appendix, we provide the proofs for the lemmas on statistical consistency stated in Section~\ref{SecStat}.

\subsection{Proof of Lemma~\ref{LemLowDim}}
\label{AppLowDim}

This result follows from the fact that
	\begin{equation*}
		\|\Thetahat - \Theta_0\|_{\max} \le \opnorm{\Thetahat - \Theta_0}_2,
	\end{equation*}
together with results on the spectral norm of sub-Gaussian covariances and their inverses (see Lemma~\ref{LemSubGSpec} in Appendix~\ref{AppConcentrate}).

\subsection{Proof of Lemma~\ref{LemScoreConc}}
\label{AppScoreConc}

First consider a fixed pair $(j,S)$ such that $S \subseteq N_\Theta(j)$.
We may write
\begin{equation}
	\label{EqnBLP}
	x_j = b_j^T x_S + e_j,
\end{equation}
where $e_j$ has zero mean and is uncorrelated with $x_S$ (and also depends on the choice of $S$). In matrix notation, we have
\begin{equation*}
	\bhat_j = (X_S^T X_S)^{-1} (X_S^T X_j) = (X_S^T X_S)^{-1} X_S^T(X_S b_j + E_j) = b_j + (X_S^T X_S)^{-1} X_S^T E_j,
\end{equation*}
where the second equality follows from equation~\eqref{EqnBLP}. Hence,
\begin{align}
	\label{EqnFhat}
	\sigma_j^2 \cdot \fhat_{\sigma_j}(S) & = \frac{1}{n} \|X_j - X_S \bhat_j\|_2^2 \notag \\
	& = \frac{1}{n} \|X_S(b_j - \bhat_j) + E_j\|_2^2 \notag \\
	& = \frac{1}{n} \|(I - (X_S^T X_S)^{-1} X_S^T) E_j\|_2^2.
\end{align}
By the triangle inequality, we have
\begin{equation}
	\label{EqnKekse}
	\Big| \|(I - (X_S^T X_S)^{-1} X_S^T) E_j\|_2 - \|E_j\|_2 \Big| \le \|(X_S^T X_S)^{-1} X_S^T E_j\|_2 \le \opnorm{(X_S^T X_S)^{-1} X_S^T}_2 \cdot \|E_j\|_2.
\end{equation}
Furthermore,
\begin{align*}
	\opnorm{(X_S^TX_S)^{-1} X_S^T}_2 & = \opnorm{X_S (X_S^TX_S)^{-1}}_2 \\
	& = \sup_{\|v\|_2 \le 1} \left\{v^T (X_S^TX_S)^{-1} X_S^T X_S (X_S^TX_S)^{-1}v \right\}^{1/2} \\
	& = \sup_{\|v\|_2 \le 1} \left\{v^T (X_S^TX_S)^{-1} v\right\}^{1/2} \\
	& = \frac{1}{\sqrt{n}} \opnorm{\left(\frac{X_S^TX_S}{n}\right)^{-1}}_2^{1/2} \\
	& \le \frac{C}{\sqrt{n}} \left(\opnorm{\Sigma_{SS}^{-1}}_2 + 2\sigma^2 \opnorm{\Sigma_{SS}^{-1}}_2^2 \cdot \max\{\delta, \delta^2\}\right)^{1/2},
\end{align*}
with probability at least $1 - 2\exp(-cnt^2)$, where $\delta = c'\sqrt{\frac{|S|}{n}} + c''t$, by Lemma~\ref{LemSubGSpec}. Taking a union bound over all $2^d$ choices for $S$ and $p$ choices for $j$, and setting $t = c\sqrt{\frac{d + \log p}{n}}$, we have
\begin{equation}
	\label{EqnKuchen}
	\opnorm{(X_S^T X_S)^{-1} X_S^T}_2 \le \frac{C'}{\sqrt{n}}, \qquad \forall S \text{ s.t. } S \subseteq N_\Theta(j) \text{ for some } j,
\end{equation}
with probability at least 
\begin{equation*}
	1 - c_1 \exp(-c_2 nt^2) = 1 - c_1 \exp(-c_2 nt^2 + d \log 2 + \log p) \ge 1 - c_1 \exp(-c_2'(d + \log p)).
\end{equation*}
Combining inequalities~\eqref{EqnKekse} and~\eqref{EqnKuchen}, we have the uniform bound
\begin{equation*}
	\left(1 - \frac{C'}{\sqrt{n}}\right)^2 \|E_j\|_2^2 \le \|I - (X_S^TX_S)^{-1} X_S^T)E_j\|_2^2 \le \left(1 + \frac{C'}{\sqrt{n}}\right)^2 \|E_j\|_2^2,
\end{equation*}
w.h.p., which together with equation~\eqref{EqnFhat} implies that
\begin{equation}
	\label{EqnSchoko}
	\left| \sigma_j^2 \cdot \fhat_{\sigma_j}(S) - \frac{1}{n}\|E_j\|_2^2 \right| \le \frac{3C'}{\sqrt{n}} \cdot \frac{1}{n}\|E_j\|_2^2,
\end{equation}
using the fact that
\begin{equation*}
	\max\{1 - (1-a)^2, (1+a)^2 - 1\} = \max\{2a - a^2, 2a + a^2\} = 2a + a^2 \le 3a,
\end{equation*}
for $a = \frac{C'}{\sqrt{n}}$ sufficiently small. Furthermore,
\begin{equation*}
	\frac{1}{n} \E[\|E_j\|_2^2] = \sigma_j^2 \cdot f_{\sigma_j}(S).
\end{equation*}
Note that the $e_j$'s are i.i.d.\ sub-Gaussians with parameter at most $c\sigma^2$, since we may write $e_j = \btil_j^T x$ for the appropriate $\btil_j \in \real^p$, and $\|\btil\|_2$ is bounded in terms of the eigenvalues of $\Sigma$. Applying the usual sub-Gaussian tail bounds, we then have
\begin{equation*}
	\mprob\left(\left|\frac{1}{n} \|E_j\|_2^2 - \frac{1}{n} \E[\|E_j\|_2^2]\right| \ge c\sigma^2 t\right) \le c_1 \exp(-c_2nt^2), \qquad \forall j,
\end{equation*}
and taking a union bound over $j$ and setting $t = c'\sqrt{\frac{\log p}{n}}$ gives
\begin{equation}
	\label{EqnHafer}
	\max_j \left|\frac{1}{n} \|E_j\|_2^2 - \frac{1}{n} \E[\|E_j\|_2^2] \right| \le c_0 \sigma^2 \sqrt{\frac{\log p}{n}},
\end{equation}
with probability at least $1 - c_1 \exp(-c_2 \log p)$. Combining inequalities~\eqref{EqnSchoko} and~\eqref{EqnHafer}, it follows that
\begin{align*}
	\sigma_j^2 |\fhat_{\sigma_j}(S) - f_{\sigma_j}(S)| & \le \left|\sigma_j^2 \cdot \fhat_{\sigma_j}(S) - \frac{1}{n} \|E_j\|_2^2\right| + \left|\frac{1}{n} \|E_j\|_2^2 - \frac{1}{n} \E[\|E_j\|_2^2]\right| \\
	& \le \frac{3C'}{\sqrt{n}} \left(\frac{1}{n} \E[\|E_j\|_2^2] + c_0 \sigma^2 \sqrt{\frac{\log p}{n}}\right) + c_0 \sigma^2 \sqrt{\frac{\log p}{n}} \\
	& \le c_0' \sigma^2 \sqrt{\frac{\log p}{n}},
\end{align*}
with probability at least $1 - c_1 \exp(-c_2 \log p)$.

\subsection{Proof of Lemma~\ref{LemPGap}}
\label{AppPGap}

Combining inequalities~\eqref{EqnScoreConc} and~\eqref{EqnPGap} and using the triangle inequality, we have
\begin{equation}
	\label{EqnScoreBd}
	\left|\scorehat_{\Omega}(G) - \score_{\Omega}(G)\right| \le \sum_{j=1}^p \left|\scorehat_{\sigma_j}(\Pa_G(j)) - \score_{\sigma_j}(\Pa_G(j))\right| < \frac{\xi(\scriptD_\Theta)}{2},
\end{equation}
for all $G \in \scriptD_\Theta$. In particular, for $G_1 \in \scriptD_\Theta$ such that $G_1 \not\supseteq G_0$, we have
\begin{align*}
	\scorehat_\Omega(G_0) & < \score_\Omega(G_0) + \frac{\xi(\scriptD_\Theta)}{2} \\
	& \le \left(\score_\Omega(G_1) - \xi(\scriptD_\Theta)\right) + \frac{\xi(\scriptD_\Theta)}{2} \\
	& < \scorehat_\Omega(G_1),
\end{align*}
where the first and third inequalities use inequality~\eqref{EqnScoreBd} and the second inequality uses the definition of the gap $\xi(\scriptD_\Theta)$. This implies inequality~\eqref{EqnGMin}.

\subsection{Proof of Lemma~\ref{LemXiPrime}}
\label{AppXiPrime}

Consider $G \in \scriptD_\Theta$ with $G \not\supseteq G_0$, and consider $G_1 \supseteq G_0$ such that $\gamma_\Omega(G, G_1)$ is maximized. Note that if $\Pa_G(j) = \Pa_{G_1}(j)$, then certainly,
\begin{equation*}
	\fhat_{\sigma_j}(\Pa_G(j)) - \fhat_{\sigma_j}(\Pa_{G_1}(j)) = 0 = f_{\sigma_j}(\Pa_G(j)) - f_{\sigma_j}(\Pa_{G_1}(j)).
\end{equation*}
Hence,
\begin{equation}
	\label{EqnNuss}
	\left| \big(\scorehat_\Omega(G) - \scorehat_\Omega(G_1)\big) - \big(\score_\Omega(G) - \score_\Omega(G_1)\big) \right| \le |H(G, G_1)| \cdot \frac{\xi'}{2},
\end{equation}
using inequality~\eqref{EqnXiHalf} and the triangle inequality. Furthermore, by inequality~\eqref{EqnXiPrime},
\begin{equation}
	\label{EqnKokos}
	|H(G, G_1)| \cdot \frac{\xi'}{2} \le \frac{\score_\Omega(G) - \score_\Omega(G_1)}{\xi'} \cdot \frac{\xi'}{2} = \frac{\score_\Omega(G) - \score_\Omega(G_1)}{2}.
\end{equation}
Combining inequalities~\eqref{EqnNuss} and~\eqref{EqnKokos} gives
\begin{equation*}
	\scorehat_\Omega(G) - \scorehat_\Omega(G_1) \ge \frac{\score_\Omega(G) - \score_\Omega(G_1)}{2} = \frac{\score_\Omega(G) - \score_\Omega(G_0)}{2} > 0,
\end{equation*}
where the last inequality holds because of the assumption $\xi' > 0$. Hence,
\begin{equation*}
	G \not\in \arg\min_{G \in \scriptD_\Theta} \{\scorehat_\Omega(G)\},	\end{equation*}
implying the desired result.

\subsection{Proof of Lemma~\ref{LemSingleton}}
\label{AppSingleton}

We begin with a simple lemma:
\begin{lemma}
	\label{LemCliques}
	Suppose $\scriptM(G)$ admits a junction tree representation with only singleton separators, and let $C_1, \dots, C_k$ denote the maximal cliques. If $X$ follows a linear SEM over $G$, then the marginal distribution of $X$ over the nodes in any clique $C_\ell$ also follows a linear SEM over $C_\ell$, with DAG structure specified by $G_\ell$, the restriction of $G$ to $C_\ell$. In addition, the autoregression matrix for the marginal SEM is simply the autoregression matrix for the full SEM restricted to the nodes in $C_\ell$.
\end{lemma}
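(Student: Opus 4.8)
The plan is to reduce the claim to a structural dichotomy about parent sets and then verify the defining properties of a linear SEM coordinate by coordinate. Fix a topological order of $G$ (say the identity, so that $B$ is strictly upper triangular); it restricts to a valid topological order on the vertices of each clique $C_\ell$, which makes $B_\ell \defn B|_{C_\ell \times C_\ell}$ strictly upper triangular with respect to $G_\ell$. The first step is the dichotomy: \emph{for every $j \in C_\ell$, either $\Pa(j) \subseteq C_\ell$ or $\Pa(j) \cap C_\ell = \emptyset$}. Since $\{j\} \cup \Pa(j)$ is a clique of $\scriptM(G)$, it lies inside some maximal clique $C_m$. If $C_m = C_\ell$ we are in the first case; otherwise $j \in C_m \cap C_\ell$, so the running-intersection property together with the singleton-separator hypothesis forces every separator on the junction-tree path between $C_m$ and $C_\ell$ to equal $\{j\}$, whence $C_m \cap C_\ell = \{j\}$ and $\Pa(j) \cap C_\ell = \emptyset$. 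In particular this shows column $j$ of $B_\ell$ is exactly the restriction of column $j$ of $B$ in both cases (the whole column vanishing in the second case), so $B_\ell$ is the asserted restricted matrix.

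Next I would set $\epsilon_j^{(\ell)} \defn X_j - \sum_{k \in \Pa(j) \cap C_\ell} B_{kj} X_k$ for $j \in C_\ell$, so that by construction $X_{C_\ell} = B_\ell^T X_{C_\ell} + \epsilon^{(\ell)}$ with $\E[\epsilon^{(\ell)}] = 0$. It then remains to check the SEM noise condition, that each $\epsilon_j^{(\ell)}$ is independent of the clique-predecessors $X_{P_j}$, where $P_j \defn \{k \in C_\ell : k < j\}$. When $\Pa(j) \subseteq C_\ell$ we have $\epsilon_j^{(\ell)} = \epsilon_j$, and since $P_j \subseteq \{1,\dots,j-1\}$ the original hypothesis $\epsilon_j \condind (X_1,\dots,X_{j-1})$ gives the claim immediately. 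Along the way I would record the standard upgrade that $\epsilon_j \condind (X_1,\dots,X_{j-1})$ implies full mutual independence of the $\{\epsilon_i\}$ (each $\epsilon_i$ being a function of $X_1,\dots,X_i$), which licenses the independence argument below.

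The substantive case, and the main obstacle, is $\Pa(j) \cap C_\ell = \emptyset$, where $\epsilon_j^{(\ell)} = X_j$ and I must show $X_j \condind X_{P_j}$. Here $j$ is a separator vertex whose parents lie across the singleton separator $\{j\}$. The key claim is that $j$ and each $k \in P_j$ have \emph{no common ancestor} in $G$ (each vertex counted as its own ancestor). I would argue by contradiction: a common ancestor $m$ gives directed paths $m \rightsquigarrow p$ (with $p \in \Pa(j)$) and $m \rightsquigarrow k$, which concatenate through $m$ into an undirected path from $p$ to $k$; since $j$ separates $\Pa(j)$ from $C_\ell \setminus \{j\}$ in $\scriptM(G)$ (again by the singleton-separator structure), this path must pass through $j$. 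If $j$ lies on the $m \rightsquigarrow k$ arc then $j \rightsquigarrow k$, making $j$ an ancestor of $k$ and contradicting $k < j$; if it lies on the $m \rightsquigarrow p$ arc then $j \rightsquigarrow p$, producing the cycle $j \rightsquigarrow p \to j$ and contradicting acyclicity. Finally, writing $X = (I-B)^{-T}\epsilon$ exhibits each $X_i$ as a linear function of the noise variables indexed by the ancestors of $i$; the no-common-ancestor conclusion therefore renders $X_j$ and $X_{P_j}$ functions of disjoint blocks of the mutually independent $\{\epsilon_i\}$, giving $X_j \condind X_{P_j}$. Assembling the two cases shows $\epsilon^{(\ell)}$ obeys the SEM noise condition, so $X_{C_\ell}$ follows a linear SEM over $G_\ell$ with autoregression matrix $B_\ell$, as claimed.
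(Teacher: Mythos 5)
Your proof is correct, and it follows the same skeleton as the paper's argument (Appendix~\ref{AppSingleton}): restrict to a topological order, establish the dichotomy that each $j \in C_\ell$ has either $\Pa(j) \subseteq C_\ell$ or $\Pa(j) \cap C_\ell = \emptyset$, define the modified noise $\epsilon_j^{(\ell)}$ by absorbing out-of-clique parental contributions, and split into the two cases. But it differs from the paper in two substantive ways, both to your credit. First, your dichotomy argument is more explicit: the paper simply asserts the case split for separator nodes, while you derive it from the running-intersection property and the singleton-separator hypothesis (any maximal clique $C_m \supseteq \{j\} \cup \Pa(j)$ distinct from $C_\ell$ satisfies $C_m \cap C_\ell = \{j\}$). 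Second, and more importantly, in the separator case the paper only verifies independence of $\epsilon_j'$ from the parent set $\Pa_{G_\ell}(j)$, which is empty, and declares the condition~\eqref{EqnIndep} ``vacuous.'' However, the paper's own definition of a linear SEM~\eqref{EqnLinStruct} requires the noise to be independent of \emph{all predecessors} in the topological order, and this stronger property is what is actually needed downstream: Lemma~\ref{LemSingleton} applies Lemma~\ref{LemScoreDAG} and Theorem~\ref{ThmScore} to the marginal SEM, which in turn rest on Lemma~\ref{LemCovs}, i.e., on the marginal noise covariance being diagonal --- and diagonality of $\cov[\epsilon^{(\ell)}]$ does not follow from independence from an empty parent set alone. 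You supply the missing argument: using the singleton separator $\{j\}$ to separate $\Pa(j)$ from $C_\ell \setminus \{j\}$ in $\scriptM(G)$, you show $j$ and its clique-predecessors share no common ancestor (else one obtains either a directed cycle or a violation of the topological order), then invoke mutual independence of $\{\epsilon_i\}$ (correctly upgraded from the predecessor-independence hypothesis) and the representation $X = (I - B)^{-T}\epsilon$ to conclude $X_j \condind X_{P_j}$. So your proof not only matches the paper's but closes a genuine gap in its handling of the separator case; the paper's version buys brevity, yours buys a statement that actually meets the definition it is supposed to verify.
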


\begin{proof}
	We relabel the nodes of $G$ so that the natural ordering on $\{1, \dots, p\}$ is a topological order. Clearly, this induces a topological order over the nodes of $G_\ell$, as well. Recall that we have equation~\eqref{EqnLinStruct}; i.e., for each $j$,
	\begin{equation}
		\label{EqnLinModel}
		X_j = b_j^T X_{1:j-1} + \epsilon_j, \qquad \text{ where } \epsilon_j \condind (X_1, \dots, X_{j-1}).
	\end{equation}
	For each $j \in C_\ell$, we define
	\begin{equation*}
		\epsilon_j' \defn \epsilon_j + \sum_{k < j, \; k \not\in \Pa_{G_\ell}(j)} b_{kj} X_k,
	\end{equation*}
	and note that
	\begin{equation*}
		X_j = b_j^T X_{\Pa_{G_\ell}(j)} + \epsilon_j',
	\end{equation*}
	where we have abused notation slightly and used $b_j$ to denote the same vector restricted to $\Pa_{G_\ell}(j)$. We claim that
	\begin{equation}
		\label{EqnIndep}
		\epsilon_j' \condind X_{\Pa_{G_\ell}(j)},
	\end{equation}
	for each $j$, implying that the marginal distribution of $X$ over $C_\ell$ follows a linear SEM with the desired properties.
	
	First consider the case when $j$ is not contained in a separator set of the junction tree. Then all neighbors of $j$ must be contained in $C_\ell$, implying that $\Pa_{G_\ell}(j) = \Pa_G(j)$. Since $b_{kj} \neq 0$ only when $k < j$ and $k \in \Pa_G(j)$, this means $\epsilon_j' = \epsilon_j$. The desired independence~\eqref{EqnIndep} follows from equation~\eqref{EqnLinModel} and the simple fact that $\Pa_{G_\ell}(j) \subseteq \{1, \dots, j-1\}$. If instead $j$ is a separator node, then either $\Pa_G(j) \subseteq C_\ell$ or $\Pa_G(j) \cap C_\ell = \emptyset$. In the first case, we again have $\Pa_{G_\ell}(j) = \Pa_G(j)$, so the argument proceeds as before. In the second case, we have $\Pa_{G_\ell}(j) = \emptyset$, so the independence relation~\eqref{EqnIndep} is vacuous; indeed, we have $\epsilon_j' = \epsilon_j + b_j^T X_{\Pa_G(j)} = X_j$. Hence, condition~\eqref{EqnIndep} holds in every case.
\end{proof}

Now consider any $G \in \scriptD_\Theta$ such that $G \not\supseteq G_0$. Let $\{G^\ell\}_{\ell=1}^k$ denote the restrictions of $G$ to the cliques. By Lemma~\ref{LemCliques}, $X$ follows a linear SEM when restricted to the nodes of $C_\ell$; hence, by Lemma~\ref{LemScoreDAG} and Theorem~\ref{ThmScore}, we have
\begin{equation}
	\label{EqnLIneq}
	\score_\Omega(G^\ell_0) \le \score_\Omega(G^\ell),
\end{equation}
with equality if and only if $G^\ell_0 \subseteq G^\ell$. Consider the graph $G_1$ constructed such that $G_1^\ell = G^\ell$ on cliques $C_\ell$ such that inequality~\eqref{EqnLIneq} holds with equality, and $G_1^\ell = G_0^\ell$ otherwise. In particular, we have $G_0^\ell \subseteq G_1^\ell$, for each $\ell$, and
\begin{equation}
	\label{EqnEllCliques}
	\score_\Omega(G^\ell_1) = \score_\Omega(G^\ell_0), \qquad \forall \ell,
\end{equation}
by construction. Note that $G_1$ is always a DAG, but possibly $\scriptM(G_1) \neq \scriptM(G_0)$. However, since $G_0 \subseteq G_1$, we have 
\begin{equation}
	\label{EqnEll}
\score_\Omega(G_0) = \score_\Omega(G_1).
\end{equation}
We also have
\begin{align}
	\label{EqnScoreG}
	\score_\Omega(G) & = \sum_{\ell=1}^k \score_\Omega(G^\ell) - \sum_{r=1}^{k'} (m_r - 1) f_{\sigma_{s_r}}(\emptyset), \\
	\label{EqnScoreG0}
	\score_\Omega(G_0) & = \sum_{\ell=1}^k \score_\Omega(G^\ell_0) - \sum_{r=1}^{k'} (m_r - 1) f_{\sigma_{s_r}}(\emptyset),
\end{align}
where $\{s_r\}_{r=1}^{k'}$ denote the indices of the $k' < k$ separator nodes, and $m_r \defn |\{\ell: s_r \in C_\ell\}|$. This is because both $G$ and $G_0$ have the property that separator nodes only have parents contained in a single clique, so we include an extra term $f_{\sigma_{s_r}}(\emptyset)$ from each adjacent clique not containing $\Pa(s_r)$ in computing the sum. Combining equation~\eqref{EqnScoreG0} with equations~\eqref{EqnEllCliques} and~\eqref{EqnEll}, we must also have
\begin{equation}
	\label{EqnScoreG1}
	\score_\Omega(G_1) = \sum_{\ell=1}^k \score_\Omega(G^\ell_1) - \sum_{r=1}^{k'} (m_r - 1) f_{\sigma_{s_r}}(\emptyset).
\end{equation}
Together with equation~\eqref{EqnScoreG}, this implies
\begin{equation}
	\label{EqnWurst}
	\max_{G_1 \supseteq G_0} \{\gamma_\Omega(G, G_1)\} = \frac{\sum_{\ell=1}^k \left(\score_\Omega(G^\ell) - \score_\Omega(G^\ell_1)\right)}{|H(G,G_1)|}.
\end{equation}
Also note that by Lemma~\ref{LemCliques} and Theorem~\ref{ThmScore}, we have
\begin{equation*}
	\score_\Omega(G^\ell_1) \le \score_\Omega(G^\ell), \qquad \forall \ell,
\end{equation*}
and by assumption,
\begin{equation}
	\label{EqnRivella}
	\frac{\score_\Omega(G^\ell) - \score_\Omega(G^\ell_1)}{|H(G^\ell, G^\ell_1)|} \ge \gamma_\Omega(G^\ell_0), \qquad \forall \ell.
\end{equation}

Finally, reindexing the cliques so that $\{C_1, \dots, C_{k''}\}$ are the cliques such that $G^\ell \neq G^\ell_1$, we have
\begin{equation*}
	H(G,G_1) \subseteq \bigcup_{\ell=1}^{k''} H(G^\ell, G^\ell_1),
\end{equation*}
implying that
\begin{equation}
	\label{EqnApfel}
	|H(G,G_1)| \le \sum_{\ell=1}^{k''} |H(G^\ell, G^\ell_1)|.
\end{equation}
Using the simple fact that $\frac{a_\ell}{b_\ell} \ge \xi$ for all $\ell$, with $a_\ell, b_\ell > 0$, implies $\frac{\sum_\ell a_\ell}{\sum_\ell b_\ell} > \xi$, we conclude from equation~\eqref{EqnWurst} and inequalities~\eqref{EqnRivella} and~\eqref{EqnApfel} that
\begin{equation*}
	\max_{G_1 \supseteq G_0} \{\gamma_\Omega(G, G_1)\} \ge \frac{\sum_{\ell=1}^{k''} \left(\score_\Omega(G^\ell) - \score_\Omega(G_1^\ell)\right)}{\sum_{\ell=1}^{k''} |H(G^\ell, G_1^\ell)|} \ge \min_{1 \le \ell \le k} \gamma_\Omega(G^\ell_0).
\end{equation*}
Since this result holds uniformly over all $G$, we have $\gamma_\Omega(G_0) \ge \min_{1 \le \ell \le k} \gamma_\Omega(G^\ell_0)$, as well.

\subsection{Proof of Lemma~\ref{LemNoisyScoreConc}}
\label{AppNoisyScoreConc}

This proof is quite similar to the proof for the fully-observed case, so we only mention the high-level details here.

We write
\begin{align}
	\label{EqnExpand}
	\sigma_j^2|\ftil_{\sigma_j}(S) - f_{\sigma_j}(S)| & = \left|\left(\Gamhat_{jj} - \Gamhat_{j,S} \Gamhat_{SS}^{-1} \Gamhat_{S,j}^{-1}\right) - \left(\Sigma_{jj} - \Sigma_{j,S} \Sigma_{SS}^{-1} \Sigma_{S,j}\right)\right| \notag \\
	& \le \left|\Gamhat_{jj} - \Sigma_{jj}\right| + \underbrace{\left|\Gamhat_{j,S} \Gamhat_{SS}^{-1} \Gamhat_{S,j} - \Sigma_{j,S} \Sigma_{SS}^{-1} \Sigma_{S,j}\right|}_{A}.
\end{align}
The first term may be bounded directly using inequality~\eqref{EqnDev} and a union bound over $j$:
\begin{equation}
	\label{EqnFirstTerm}
	\mprob\left(\max_j \left|\Gamhat_{jj} - \Sigma_{jj}\right| \ge c\sigma^2 \sqrt{\frac{\log p}{n}}\right) \le c_1' \exp(-c_2' \log p).
\end{equation}
To bound the second term, we use the following expansion:
\begin{align*}
A & \le \left|\Gamhat_{j,S}\left(\Gamhat_{SS}^{-1} - \Sigma_{SS}^{-1}\right) \Gamhat_{S,j}\right| + \left|\Gamhat_{j,S} \Sigma_{SS}^{-1} \left(\Gamhat_{S,j} - \Sigma_{S,j}\right)\right| + \left|\left(\Gamhat_{j,S} - \Sigma_{j,S}\right) \Sigma_{SS}^{-1} \Sigma_{S,j}\right| \\
& \le \opnorm{\Gamhat_{SS}^{-1} - \Sigma_{SS}^{-1}}_2 \|\Gamhat_{S,j}\|_2^2 + \opnorm{\Sigma_{SS}^{-1}}_2 \left(\|\Gamhat_{S,j}\|_2 \|\Gamhat_{S,j} - \Sigma_{S,j}\|_2 + \|\Gamhat_{S,j} - \Sigma_{S,j}\|_2 \|\Sigma_{S,j} \|_2 \right).
\end{align*}
As in the proof of Lemma~\ref{LemSubGSpec} in Appendix~\ref{AppConcentrate}, we may obtain a bound of the form
\begin{equation*}
	\mprob\left(\opnorm{\Gamhat_{SS}^{-1} - \Sigma_{SS}^{-1}}_2 \le c\sigma^2 \left(\sqrt{\frac{d}{n}} + t\right)\right) \le c_1 \exp(-c_2 nt^2),
\end{equation*}
by inverting the deviation condition~\eqref{EqnDev}. Furthermore,
\begin{equation*}
	\|\Gamhat_{S,j} - \Sigma_{S,j}\|_2 \le \opnorm{\Gamhat_{S'S'} - \Sigma_{S'S'}}_2,
\end{equation*}
where $S' \defn S \cup \{j\}$, which may in turn be bounded using the deviation condition~\eqref{EqnDev}. We also have
\begin{equation*}
	\|\Gamhat_{S,j}\|_2 \le \|\Sigma_{S,j}\|_2 + \opnorm{\Gamhat_{S'S'} - \Sigma_{S'S'}}_2.
\end{equation*}
Combining these results and taking a union bound over the $2^d$ choices for $S$ and $p$ choices for $j$, we arrive at a uniform bound of the form
\begin{equation*}
	\mprob\left(A \le c'\sigma^2 \sqrt{\frac{\log p}{n}}\right) \ge 1 - c_1' \exp(-c_2' \log p).
\end{equation*}
Together with inequality~\eqref{EqnFirstTerm} and the expansion~\eqref{EqnExpand}, we then obtain the desired result.

\section{Matrix concentration results}
\label{AppConcentrate}

This Appendix contains matrix concentration results that are used to prove our technical lemmas. We use $\opnorm{\cdot}_2$ to denote the spectral norm of a matrix.
\begin{lemma}
	\label{LemSubGSpec}
	Suppose $\{x_i\}_{i=1}^n \subseteq \real^p$ are i.i.d.\ sub-Gaussian vectors with parameter $\sigma^2$ and covariance $\Sigma$. Then for all $t \ge 0$, we have
	\begin{equation}
		\label{EqnSubG1}
		\mprob\left(\opnorm{\frac{X^TX}{n} - \Sigma}_2 \le \sigma^2 \cdot \max\{\delta, \delta^2\}\right) \ge 1 - 2\exp(-cnt^2),
	\end{equation}
	where $\delta = c' \sqrt{\frac{p}{n}} + c'' t$. Furthermore, if $\frac{X^TX}{n}$ is invertible and
	\begin{equation*}
		\sigma^2 \opnorm{\Sigma^{-1}}_2 \cdot \max\{\delta, \delta^2\} \le \frac{1}{2},
	\end{equation*}
	we have
	\begin{equation}
		\label{EqnSubG2}
		\mprob\left(\opnorm{\left(\frac{X^TX}{n}\right)^{-1} - \Sigma^{-1}}_2 \le 2 \sigma^2 \opnorm{\Sigma^{-1}}_2^2 \cdot \max\{\delta, \delta^2\} \right) \ge 1 - 2\exp(-cnt^2).
	\end{equation}
\end{lemma}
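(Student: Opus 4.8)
The plan is to establish the spectral-norm deviation bound~\eqref{EqnSubG1} via a covering argument over the unit sphere, and then to deduce the inverse bound~\eqref{EqnSubG2} as a deterministic consequence on the same event using standard matrix perturbation theory. Throughout, write $\Sigmahat = \frac{X^TX}{n}$ and $M = \Sigmahat - \Sigma$, so that $M$ is symmetric and $\opnorm{M}_2 = \sup_{\|v\|_2 = 1} |v^T M v|$.

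For the first part, I would begin with a pointwise concentration bound. Fixing a unit vector $v$, the scalar $v^T x_i$ is sub-Gaussian with parameter $\sigma^2$, so $(v^T x_i)^2 - v^T\Sigma v$ is a centered sub-exponential random variable with norm of order $\sigma^2$; since $v^T M v = \frac{1}{n}\sum_{i=1}^n \big[(v^T x_i)^2 - v^T\Sigma v\big]$ is an average of $n$ i.i.d.\ such variables, a Bernstein-type tail bound yields
\[
	\mprob\big(|v^T M v| \ge \sigma^2 u\big) \le 2\exp\big(-cn\min\{u, u^2\}\big)
\]
for a fixed $v$. I would then discretize: taking a $\frac{1}{4}$-net $\mathcal{N}$ of the unit sphere $S^{p-1}$ of cardinality at most $9^p$, the standard inequality $\opnorm{M}_2 \le 2\max_{v \in \mathcal{N}} |v^T M v|$ together with a union bound over $\mathcal{N}$ gives
\[
	\mprob\big(\opnorm{M}_2 \ge 2\sigma^2 u\big) \le 2\cdot 9^p \exp\big(-cn\min\{u, u^2\}\big).
\]
Setting $u = \max\{\delta, \delta^2\}$ with $\delta = c'\sqrt{p/n} + c''t$ makes $n\min\{u,u^2\} = n\delta^2 \ge (c')^2 p + (c'')^2 n t^2$; choosing $c'$ large enough that $(c')^2 p$ dominates the entropy term $p\log 9$ absorbs the net cardinality into the exponent, leaving the residual $\exp(-cnt^2)$. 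After folding the factor $2$ into the constants, this is precisely~\eqref{EqnSubG1}.

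For the second part, I would argue deterministically on the event of~\eqref{EqnSubG1}. There the deviation $E \defn \opnorm{\Sigmahat - \Sigma}_2 \le \sigma^2\max\{\delta,\delta^2\}$, and the stated hypothesis $\sigma^2\opnorm{\Sigma^{-1}}_2\max\{\delta,\delta^2\} \le \frac{1}{2}$ forces $E\opnorm{\Sigma^{-1}}_2 \le \frac{1}{2}$. By Weyl's inequality, $\lambda_{\min}(\Sigmahat) \ge \lambda_{\min}(\Sigma) - E \ge \frac{1}{2}\lambda_{\min}(\Sigma)$, so $\opnorm{\Sigmahat^{-1}}_2 \le 2\opnorm{\Sigma^{-1}}_2$. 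The resolvent identity $\Sigmahat^{-1} - \Sigma^{-1} = \Sigmahat^{-1}(\Sigma - \Sigmahat)\Sigma^{-1}$ and submultiplicativity of the spectral norm then give
\[
	\opnorm{\Sigmahat^{-1} - \Sigma^{-1}}_2 \le \opnorm{\Sigmahat^{-1}}_2 \cdot E \cdot \opnorm{\Sigma^{-1}}_2 \le 2\sigma^2\opnorm{\Sigma^{-1}}_2^2 \max\{\delta,\delta^2\},
\]
which is~\eqref{EqnSubG2}; since this holds on the event of~\eqref{EqnSubG1}, the probability bound carries over verbatim.

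The main obstacle is the first part: I must verify the Bernstein bound for the quadratic form $v^T M v$ with the correct $\sigma^2$-scaling, and, more delicately, choose the net constant $c'$ large enough that the entropy factor $9^p$ is absorbed while preserving the clean $1 - 2\exp(-cnt^2)$ form. Tracking the two regimes $\delta \le 1$ and $\delta > 1$ (which together produce the $\max\{\delta,\delta^2\}$ on the deviation side) against the $\min\{u,u^2\}$ appearing in the sub-exponential tail is where the bookkeeping requires care. The second part, by contrast, is routine linear algebra.
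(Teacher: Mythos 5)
Your proposal is correct, but it takes a partially different route from the paper. For the deviation bound~\eqref{EqnSubG1}, the paper does not give a proof at all: it simply cites Remark 5.40 of Vershynin (2012). Your net-plus-Bernstein argument is essentially a reconstruction of the proof underlying that citation (and your bookkeeping checks out: for $u = \max\{\delta,\delta^2\}$ one has $\min\{u,u^2\} = \delta^2$ in both regimes, so the exponent is $n\delta^2 \ge (c')^2 p + (c'')^2 n t^2$ and the $9^p$ entropy factor is absorbed by taking $c'$ large, with the factor $2$ folded into $c', c''$); this buys self-containedness at the cost of length. For the inverse bound~\eqref{EqnSubG2}, the two arguments genuinely differ: the paper expands $(A+\Delta)^{-1} = A^{-1} + \sum_{k=1}^\infty (-1)^k (A^{-1}\Delta)^k A^{-1}$ as a Neumann series and sums the geometric tail under the hypothesis $\opnorm{A^{-1}\Delta}_2 \le \tfrac12$, obtaining
\begin{equation*}
	\opnorm{(A+\Delta)^{-1} - A^{-1}}_2 \le \frac{\opnorm{A^{-1}}_2^2 \, \opnorm{\Delta}_2}{1 - \opnorm{A^{-1}\Delta}_2} \le 2\opnorm{A^{-1}}_2^2 \opnorm{\Delta}_2,
\end{equation*}
whereas you use the exact one-step resolvent identity $\Sigmahat^{-1} - \Sigma^{-1} = \Sigmahat^{-1}(\Sigma - \Sigmahat)\Sigma^{-1}$ together with Weyl's inequality to get $\opnorm{\Sigmahat^{-1}}_2 \le 2\opnorm{\Sigma^{-1}}_2$, then multiply the three norms. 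Both yield exactly the constant $2$ under the same $\tfrac12$-hypothesis; your version is finite and more elementary (no convergence of a series to justify), while the paper's expansion has the minor advantage of delivering higher-order perturbation terms for free if one ever needs them. One small point worth noting in your write-up: the Weyl step shows that the hypothesis $\sigma^2\opnorm{\Sigma^{-1}}_2\max\{\delta,\delta^2\} \le \tfrac12$ already forces $\lambda_{\min}(\Sigmahat) > 0$ on the good event, so the separate invertibility assumption on $\Sigmahat$ is automatically satisfied there rather than being an extra restriction.
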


\begin{proof}
For inequality~\eqref{EqnSubG1}, see Remark 5.40 of~\cite{Ver12}. For inequality~\eqref{EqnSubG2}, we use the matrix expansion
\begin{equation*}
	(A + \Delta)^{-1} = (A(I + A^{-1} \Delta))^{-1} = (I+A^{-1}\Delta)^{-1} A^{-1} = A^{-1} + \sum_{k=1}^\infty (-1)^k (A^{-1} \Delta)^k A^{-1},
\end{equation*}
valid for any matrices $A$ and $\Delta$ such that $A$ and $A+\Delta$ are both invertible and the series converges. By the triangle inequality and multiplicativity of the spectral norm, we then have
\begin{align*}
	\opnorm{(A+\Delta)^{-1} - A^{-1}}_2 & \le \sum_{k=1}^\infty \opnorm{(A^{-1} \Delta)^k A^{-1}}_2 \\
	& \le \opnorm{A^{-1}}_2 \cdot \sum_{k=1}^\infty \opnorm{A^{-1} \Delta}_2^k \\
	& = \frac{\opnorm{A^{-1}}_2 \cdot \opnorm{A^{-1} \Delta}_2}{1 - \opnorm{A^{-1} \Delta}_2} \\
	& \le \frac{\opnorm{A^{-1}}_2^2 \cdot \opnorm{\Delta}_2}{1 - \opnorm{A^{-1} \Delta}_2}.
\end{align*}
We now take $A = \Sigma$ and $\Delta = \frac{X^TX}{n} - \Sigma$. By the assumption and inequality~\eqref{EqnSubG1}, we have
\begin{equation*}
	\opnorm{A^{-1} \Delta}_2 \le \opnorm{A^{-1}}_2 \cdot \opnorm{\Delta}_2 \le \frac{1}{2},
\end{equation*}
implying that
\begin{equation*}
	\opnorm{(A+\Delta)^{-1} - A^{-1}}_2 \le 2 \opnorm{A^{-1}}_2^2 \cdot \opnorm{\Delta}_2.
\end{equation*}
This gives the result.	
\end{proof}


\bibliography{refs.bib}

\end{document}